\documentclass{article}
\usepackage{arxiv}

\usepackage{authblk}

\usepackage{natbib} 

\usepackage{url}            
\usepackage{amsfonts}       
\usepackage{tikz} 
\usetikzlibrary{calc}
\usetikzlibrary{arrows}
\usepackage{amsmath,mathpazo,mathtools,bm}
\usepackage{amsthm}
\usepackage{amssymb}
\usepackage{dsfont} 
\usepackage{amssymb}
\usepackage{pgfplots}
\usetikzlibrary{positioning}
\usepackage{mathabx}

\pgfplotsset{compat=1.18}
\usepackage{subcaption}
\captionsetup{compatibility=false}

\usepackage{algorithm}
\usepackage[noend]{algorithmic}

\usepackage{enumitem}

\usetikzlibrary{arrows.meta}

\usepackage{thm-restate}

\usepackage{ciphod}
\usepackage{mathbbol}
\usepackage{subcaption}
\usepackage{natbib}

\usepackage{xparse}

\RenewDocumentEnvironment{subfigure}{m}
 {%
  \IfNoValueTF{#1}
    {\begin{minipage}{\linewidth}}
    {\IfBlankTF{#1}
      {\begin{minipage}{\linewidth}}
      {\begin{minipage}{#1}}}%
 }
 {\end{minipage}}

\DeclareSymbolFontAlphabet{\mathbbl}{bbold}

\newcommand{\indep}{\rotatebox[origin=c]{90}{$\models$}}
\newcommand{\nindep}{\not\!\!\rotatebox[origin=c]{90}{$\models$}}

\newtheorem{definition}{Definition}
\newtheorem{theorem}{Theorem}
\newtheorem{lemma}{Lemma}



\newcommand{\arrowdoublebar}[1][1em]{%
  \mathrel{%
    \tikz[baseline]{%
      \draw[-](0,.58ex)--(#1,.58ex);%
      \draw($(#1,.58ex)+(0pt,3pt)$)--($(#1,.58ex)+(0pt,-3pt)$);%
      \draw($(#1,.58ex)+(-2pt,3pt)$)--($(#1,.58ex)+(-2pt,-3pt)$);%
    }%
  }%
}

\title{Local Markov Equivalence for PC-style Local Causal Discovery and Identification of Controlled Direct Effects}

\author[1]{Timothée Loranchet}
\author[1]{Charles K. Assaad}
\affil[1]{Sorbonne Université, INSERM, Institut Pierre Louis d’Epidémiologie et de Santé Publique, F75012, Paris, France}   

\date{}

\begin{document}
\maketitle

\begin{abstract}
Identifying controlled direct effects (CDEs) is crucial across numerous scientific domains. While existing methods can identify these effects from causal directed acyclic graphs (DAGs), the true DAG is often unknown in practice. Essential graphs, which represent a Markov equivalence class of DAGs characterized by the same set of conditional independencies, provide a more practical and realistic alternative, and the PC algorithm is one of the most widely used method to learn them using conditional independence tests.
However, learning the full essential graph is computationally intensive and relies on strong, untestable assumptions. In this work, we adapt the PC algorithm to recover only the portion of the graph needed for identifying CDEs.
In particular, we introduce the local essential graph (LEG), a graph structure defined relative to a target variable, and present LocPC, an algorithm that learns the LEG using solely local conditional independence tests. Building on this, we develop LocPC-CDE, which extracts precisely the portion of the LEG that is both necessary and sufficient for identifying a CDE. Compared to global methods, our algorithms require less conditional independence tests and operate under weaker assumptions while maintaining theoretical guarantees. We illustrate the effectiveness of our approach on synthetic and real data.
\end{abstract}

\section{INTRODUCTION}
\label{sec:intro}

Understanding controlled direct effects~\citep{Pearl_2000,Pearl_2001} (CDEs) is fundamental to causal inference across a wide array of scientific fields, including public health~\citep{Vansteelandt_2009,Vanderweele_2010} and industries~\citep{Assaad_2023}. CDEs quantify how changes in one variable influence another independently of any mediating pathways, offering valuable insight into mechanisms of action and informing targeted interventions.
For instance,  suppose an epidemiological study on the effect of physical exercise on cardiovascular health, where estimating the CDE reveals that exercise improves heart outcomes even without weight loss.
This insight is critical as it shows that interventions promoting exercise should be encouraged even in individuals who do not lose weight, shifting public health messaging and clinical advice to focus on exercise benefits beyond weight control.

Numerous tools exist for identifying  CDEs when the underlying causal structure is known and can be represented as a directed acyclic graph (DAG). However, in many practical scenarios, the true DAG is unknown. Instead, one can often recover, under the causal sufficiency and the faithfulness assumptions~\citep{Spirtes_2000}, an essential graph~\citep{Andersson_1997}, which encodes a Markov equivalence class of DAGs consistent with the observed conditional independencies.
While global causal discovery methods such as the PC algorithm~\citep{Spirtes_2000} aim to recover the entire essential graph, they are often computationally intensive and require large amounts of data. 
In addition,  they often fail in real applications  due to the violation of the strong and untestable assumptions they rely on~\citep{Uhler_2012,Andersen_2013,Assaad_2022_a,Ait_Bachir_2023}.

This motivates the development of local causal discovery methods that concentrate solely on the relevant subgraph surrounding the variables of interest. 
Restricting the discovery process to a local neighborhood, would not only reduce computational complexity but also enhance robustness, while still yielding valid and actionable causal insights.
For instance, \cite{Gao_2015} proposed an algorithm to discover the immediate neighborhood of a target variable. 
Other works proposed algorithms aimed at discovering parts of the essential graph sufficient for identifying total effects using only local information \citep{Maathuis_2008,Malinsky_2016,Gupta_2023}. 
Nevertheless, the problem of local causal discovery targeted for identifying a CDE remains largely unaddressed. 
Moreover, the characterization of the class of graphs recoverable through local information has received little attention in the literature.
Finally, all existing theoretically founded local causal discovery algorithms adopt a strategy different from the PC algorithm, and for good reason: a naive adaptation of the PC algorithm to the local setting can introduce spurious edges, a well-documented issue in the literature~\citep{Li2015,schubert2025localcausaldiscoverystatistically}, whose full characterization has, however, been largely overlooked.

In this paper, we focus on PC-style local causal discovery and provide a complete characterization of the spurious edges inferred by such an approach.
We specifically demonstrate that it arises only due to certain types of inducing paths~\citep{Spirtes_2000}. 
Moreover, we characterize the class of graphs that share a specific notion of locality around a target variable $Y$ 
that dependents on a targeted neighborhood hop distance. We show that this class can be represented by a local essential graph (LEG) and introduce a PC-style local causal discovery algorithm, called \textbf{LocPC}, for recovering the LEG from data, requiring weaker assumptions than those needed for recovering the full essential graph. Furthermore, we demonstrate that a naive application of \textbf{LocPC} can serve for local causal discovery aimed at identifying a CDE. Finally, we develop an improved version of the algorithm, called the \textbf{LocPC-CDE} algorithm that optimally discovers only the part of the LEG that is both necessary and sufficient for CDE identification. 

The remainder of the paper is organized as follows: Section~\ref{sec:relatedworks} provides an overview of the related works.
Section~\ref{sec:setup} introduces preliminaries.
Section~\ref{sec:characterization} presents a characterization of all graphs that have the same local information. 
Section~\ref{sec:LCD} presents the \textbf{LocPC} algorithm. Section~\ref{sec:LCD-CDE} starts by showing how a naive application of \textbf{LocPC} can be used for identifying a CDE and then presents the \textbf{LocPC-CDE} algorithm. 
In Section~\ref{sec:exp}, we evaluate the proposed algorithms on synthetic and real data. 
Finally, Section~\ref{sec:conc} concludes the paper.
All proofs are deferred to Appendix~\ref{appendix:proofs}.

\section{Related Work}
\label{sec:relatedworks}

Initially, local causal discovery have been developed 
for identifying a target node’s adjacency or Markov blanket~\citep{Li2015,aliferis-2003,Tsamardinos-2003,JMLR:v11:aliferis10a}. 
However, such approaches do not orient edges. Our work focuses on local causal discovery for \emph{CDE identification}, which requires orienting edges around the outcome. The CMB algorithm~\citep{Gao_2015} and MBbyMB algorithm~\citep{wang2014discovering} orients the adjacency of a target node to identify its direct causes and effects, but it relies on Markov Blanket discovery and is not PC-style. This approach has been extended to cases where causal sufficiency is relaxed~\citep{xie_2024,li_2025}. There exists also hybrid methods, such as LDECC~\citep{Gupta_2023}, that use a Markov blanket strategy as well as a PC-style strategy to identify a total effect. Nevertheless, LDECC can be adapted for CDE by targeting the treatment rather than the outcome. It is important to note that none of these works formally characterize the structural information extractable by a local PC-style algorithms. Moreover, it has been recently shown that several of these algorithms are not sound~\citep{schubert2025localcausaldiscoverystatistically}. We fill this gap by showing that this characterization is non-trivial, that neglecting it may yield spurious edges, and by optimizing PC-style local discovery for CDE identification.
Along a different line of research, some methods aim to identify causal effects using only local information available in the essential graph. However, unlike the focus of this paper, these approaches assume that the essential graph is fully known~\citep{Maathuis_2008}. This line of work has also been extended to cases where the causal sufficiency assumption is relaxed~\citep{Malinsky_2016}. All these methods are considered as beyond the scope of this paper.


\section{Preliminaries}
\label{sec:setup}

We use capital letters $(Z)$ for variables, bold letters $(\mathbb Z)$ for sets, and $|\mathbb Z|$ for their size. 

In this work, we rely on the framework of Structural Causal Models (SCMs) as introduced by \citep{Pearl_2000}. Formally, an SCM $\mathcal{M}$ is defined as a $4$-tuple $(\mathbb{U},\mathbb{V},\mathbb{F},P(\mathbb{U}))$, where $\mathbb{U}$ denotes a set of exogenous variables, with $P(\mathbb{U})$ denoting their joint distribution, and $\mathbb{V}$ denotes a set of endogenous variables. The set $\mathbb{F}$ contains causal mechanisms, each determining an endogenous variable from a corresponding exogenous variable and a subset of other endogenous variables, usually referred to as direct causes or parents. 
We assume that the SCM induces a directed acyclic graph (DAG) $\mathcal G = (\mathbb V, \mathbb E)$ consisting of a set of vertices $\mathbb V$ and directed edges $\mathbb E \subseteq \mathbb V \times \mathbb V$.
Additionally, we assume  \textbf{causal sufficiency}, meaning that all exogenous variables are mutually independent and each influences only a single endogenous variable. 
In $\mathcal G$, a parent of $W_l \in \mathbb V$ is any $W_m \in \mathbb V$ such that $W_m \to W_l$ is in $\mathbb E$. The set of parents of $W_l$ is denoted $Pa(W_l,\mathcal{G})$. 
The descendants of $W_l$,$De(W_l,\mathcal{G})$, are those reachable by a directed path from $W_l$. The neighbors of $W_l$, $Ne(W_l,\mathcal{G})$, are all variables connected to $W_l$ in $\mathcal G$.  
The $h$-hop neighborhood of a target variable $Y$, denoted $NeHood(Y,h,\mathcal G)$ is the set of nodes $\mathbb{Z}\subseteq\mathbb{V}$ such that the shortest path between $Y$ and any node in $\mathbb{Z}$ is less than or equal to $h$.
A node $W_l$ in $\mathcal G$ is considered a collider on a path $p$ if there exists a subpath $W_k \rightarrow W_l \leftarrow W_m$ within $p$. In this context, we will interchangeably refer to the triple $W_k \rightarrow W_l \leftarrow W_m$ and the node $W_l$ as the collider. Furthermore, $W_k \rightarrow W_l \leftarrow W_m$ is termed an unshielded collider (UC) if $W_k$ and $W_m$ are not adjacent.
A path $p$ is said to be blocked by a set $\mathbb{Z}$ if and only if  1)
$p$ contains a non-collider triple (\ie, $W_k\rightarrow W_l \rightarrow W_m$ or $W_k\leftarrow W_l \leftarrow W_m$ or $W_k\leftarrow W_l \rightarrow W_m$) such that the middle node ($W_l$) is in $\mathbb{Z}$, or 2) $p$ contains a collider (\ie, $W_k\rightarrow W_l \leftarrow W_m$) such that $(De(W_l,\mathcal{G})\cup\{W_l\})\cap \mathbb{Z}=\emptyset$. 
Two nodes $W_l$ and $W_m$ are $d$-separated by $\mathbb{Z}$, denoted $(W_l\indep W_m\mid \mathbb{Z})_\mathcal G$, if and only if all paths between $W_l$ and $W_m$  are blocked by $\mathbb{Z}$~\citep{Pearl_2000}. The $d$-separation between $W_l$ and $W_m$ by $\mathbb{Z}$ implies that $W_l$ and $W_m$ are independent conditional on $\mathbb{Z}$, denoted $(W_l\indep  W_m\mid \mathbb{Z})_\mathcal P$, in every distribution $\mathcal{P}$ that is compatible with $\mathcal G$.
Multiple DAGs can encode the same set of $d$-separations, forming what is known as a Markov equivalence class (MEC). Under \textbf{causal sufficiency}, any two DAGs within the same MEC share both the same adjacencies and the same UCs~\citep{Verma_1990}. This structural similarity allows every MEC to be uniquely represented by an essential graph, also known as a CPDAG~\citep{Chickering_2002,Andersson_1997,Meek_1995}, denoted $\mathcal{C}$.
An essential graph captures all common adjacencies and encodes edge orientations that are invariant across all DAGs in the MEC.
Specifically, a directed edge $W_l \rightarrow W_m$ in the essential graph implies that this orientation is present in every DAG in the MEC. In contrast, an undirected edge $W_l - W_m$ signals ambiguity: some DAGs contain $W_l \rightarrow W_m$ while others contain $W_l \leftarrow W_m$.
The notions of parents and neighbors naturally extend to essential graphs. Specifically, a node $W_l$ is a parent of $W_m$ if $W_l \rightarrow W_m$ in the essential graph, meaning it is a parent in all DAGs consistent with that graph. We denote these sets by $Pa(W_l, \mathcal{C})$ and $Ne(W_l, \mathcal{C})$.

Essential graphs are particularly valuable because they can be learned from observational data under \textbf{causal sufficiency} and an additional key assumption, called the faithfulness 
assumption~\citep{Spirtes_2000} 
which posits that all the conditional independencies observed in the data distribution correspond to $d$-separation relations in the true underlying causal DAG.
Under these assumptions, structure learning algorithms can recover the essential graph corresponding to the true  DAG's MEC. One of the most well-known algorithms for this purpose is the PC algorithm~\citep{Spirtes_2000}. 
In a nutshell, the PC algorithm uses conditional independence (CI) tests to infer the skeleton of the graph, meaning it removes edges between two nodes $W_l$ and $W_m$ if there exists a set $\mathbb{Z}$ such that $(W_l \indep W_m \mid \mathbb{Z})_\mathcal P$. Then, for each unshielded triple $W_k - W_l - W_m$ in the skeleton, it identifies it as an UC $W_k \rightarrow W_l \leftarrow W_m$ if the middle node $W_l$ was not included in the conditioning set that yielded the independence between  $W_k$ and $W_m$. Finally, the algorithm orients as many other edges as possible using Meek’s rules~\citep{Meek_1995}.

In this paper, 
we concentrate on the controlled direct effect (CDE) of treatment variable $X$ on a target variable $Y$ in a non-parametric setting~\citep{Pearl_2000,Pearl_2001}, denoted as $CDE(x,x',Y)$ and formally expressed as  
$E(Y\mid do(x), do(pa_{Y\backslash X})) - E(Y\mid do(x'), do(pa_{Y\backslash X}))$,     where $pa_{Y\backslash X}$ stands for any realization of the parents of $Y$ in $\mathcal G$ excluding $X$, and $do(\cdot)$ operator represents an intervention.
A $CDE(x,x',Y)$ is said to be identifiable if it can be uniquely computed from the positive observed distribution~\citep{Pearl_2000}. Causal graphs are invaluable for identifying causal effects in general. Specifically, it has been shown that under \textbf{causal sufficiency}, the $CDE(x,x',Y)$ is always identifiable from a DAG. It is also identifiable from an essential graph if and only if there is no undirected edge connected to $Y$ \cite[Theorem 5.4]{Flanagan_2020}.

\section{Characterization of Local Markov Equivalence}
\label{sec:characterization}


The classical PC algorithm succeeds in global causal discovery because, when testing conditional independence between two variables $Y$ and $A$, it searches for conditioning sets among the neighbors of \emph{both} variables. 
Adapting the PC algorithm to local causal discovery is far from straightforward, a difficulty already emphasized in the literature~\citep{Li2015,schubert2025localcausaldiscoverystatistically}. Without loss of generality, we illustrate the issue by focusing on the local discovery of the neighbors of a single target variable $Y$ (our results extend naturally to the discovery of the neighbors of any target set of variables).
A seemingly natural adaptation (already explored in prior work~\citep{Gupta_2023}) is to \textbf{localize this procedure by applying PC only around the target variable $Y$ by testing conditional independence between $Y$ and another variable $A$ using conditioning sets \emph{only} among the neighbors of $Y$. }
Unfortunately, this restriction can lead to add spurious edges: conditioning solely on $Y$'s neighbors may not be sufficient to remove all variables that are not true causal neighbors of $Y$. An example illustrating this failure is provided in  Appendix~\ref{appendix:spurious}. 
Although the limitations of local PC-style adaptations have already been noted, no prior work has identified precisely which variables are guaranteed to remain spuriously adjacent to $Y$. 
In order to provide this characterization, we start by introducing $\mathbb{C}_s(Y,\mathcal G)$ which denotes the set of nodes still adjacent to $Y$ at iteration $s$ in a PC-style procedure to find all nodes adjacent to a target variable in a DAG $\mathcal{G}$. 
Since PC starts with a fully connected graph,  $
\mathbb C_0(Y, \mathcal G) := \mathbb V \setminus \{Y\}$. Then, $\forall s \ge 1$, the set $\mathbb C_s(Y, \mathcal G)$, can be recursively  defined as:
\begin{equation*}    
\label{eq:set_of_neighbors_at_iteration_s}
\mathbb C_s(Y, \mathcal G) := 
\big\{
A \in \mathbb C_{s-1}(Y, \mathcal G) \;\big |\;
\nexists\, \mathbb Z \subseteq \mathbb C_{s-1}(Y, \mathcal G)\setminus\{A\}
:\; [|\mathbb Z| = s]
\land [\; (Y \indep A \mid \mathbb Z)_{\mathcal G}]
\big\}.
\end{equation*}
In a nutshell, for any node $Y$ of interest under PC-style local discovery, all edges adjacent to $Y$ are first added by the algorithm, then pruned whenever a separating set of size $s$ is found among nodes adjacent to $Y$ at iteration $s-1$, iterating on $s$ from $s=0$.
Spurious nodes are non-neighbors that remain adjacent at the end of the local PC procedure (which contains at most $s_{max}=|\mathbb V|-2$ iterations). More formally, given $\mathbb{C}_s(Y, \mathcal{G})$, the set of spurious neighbors can be defined as follows:


\begin{definition}[Spurious neighbors]
\label{def:sne}
Let $\mathcal G := (\mathbb V, \mathbb E)$ be a DAG and let $Y \in \mathbb V$, the \emph{spurious neighbors} of $Y$ in $\mathcal G$ are defined as
$SNe(Y, \mathcal G) := 
\mathbb C_{|\mathbb V|-2}(Y, \mathcal G) \setminus Ne(Y, \mathcal G).$
\end{definition}

So far, we have focused on a \textit{single} target variable $Y$, but the results can be readily extended to a set of variables. In this paper, we mainly consider the \textit{target neighborhood} of hop $h > 0$ around $Y$.
In this setting, spurious neighbors can only occur between a node $D\in NeHood(Y,h,\mathcal G)\backslash NeHood(Y,h-1,\mathcal G)$ and a node outside the neighborhood, as will be rigorously proven later in the paper.
Interestingly, any spurious neighbor necessarily lies at the endpoint of a very specific type of path, which we call a \emph{descendant inducing paths}.

\begin{definition}[Descendant Inducing Path (DIP)]
\label{def:dip}
Let $\mathcal G=(\mathbb V,\mathbb E)$ be a DAG and $A,B\in\mathbb V$ \st $B\notin Ne(A,\mathcal G)$. Let $\pi$ be a path between $A$ and $B$, $V(\pi)$ the set of nodes on $\pi$, and $C(\pi)$ the set of colliders on $\pi$. Define $\mathbb L:=(L_1,\dots,L_k)$ as the nodes of $\{A,B\}\cup \{Ne(A,\mathcal G)\cap V(\pi)\}$ ordered according to their appearance along $\pi$. The path $\pi$ is a \emph{Descendant Inducing Path} (DIP) relative to $\mathbb L$ if $C(\pi)=\mathbb L\setminus \{A,B\}$ and $L_{i+1}\in De(L_i,\mathcal G)$ for all $i=1,\dots,k-1$.
\end{definition}

DIPs constitute a restricted subclass of inducing paths~\citep{Spirtes_2000}. More details and examples about DIPs are presented in Appendix~\ref{appendix:spurious}.
We call \textit{descendant inducing neighbors} of a node $D\in NeHood(Y,h,\mathcal{G})$, denoted $DINe(D,\mathcal G)$, the set of nodes $A$ 
such that there exists a DIP from $D$ to $A$ in $\mathcal G$. The following theorem establishes the relation between $SNe(D,\mathcal G)$, $DINe(D,\mathcal G)$, and $De(D,\mathcal G)$.

\begin{restatable}[Spurious neighbors are descendant inducing neighbors]{theorem}{mypropositionone}
\label{prop:spurious}
Let $\mathcal G := (\mathbb V, \mathbb E)$ be a DAG, let $Y \in \mathbb V$, and let $D\in NeHood(Y,h,\mathcal{G})$. Then $SNe(D,\mathcal G)\subseteq DINe(D,\mathcal G)\subset De(D,\mathcal G).$
\end{restatable}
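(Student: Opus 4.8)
The plan is to prove the two inclusions separately, and to read off the second one ($DINe(D,\mathcal{G})\subset De(D,\mathcal{G})$) almost directly from Definition~\ref{def:dip}. Indeed, if $A\in DINe(D,\mathcal G)$ then there is a DIP $\pi$ from $D$ to $A$ relative to some chain $\mathbb L=(L_1,\dots,L_k)$ with $L_1=D$, $L_k=A$, and $L_{i+1}\in De(L_i,\mathcal G)$ for every $i$; composing these descendant relations gives $A=L_k\in De(D,\mathcal G)$. Strictness of the inclusion follows because $B\notin Ne(A,\mathcal G)$ in the definition forces $k\ge 3$ (there is at least one collider strictly between $D$ and $A$), so $A$ is a strict descendant and in particular $A\neq D$; alternatively, strictness is witnessed by the fact that every element of $DINe(D,\mathcal G)$ is non-adjacent to $D$ whereas $De(D,\mathcal G)$ contains the children of $D$, which are adjacent.

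The substantive part is the first inclusion $SNe(D,\mathcal G)\subseteq DINe(D,\mathcal G)$. Fix $A\in SNe(D,\mathcal G)$; by Definition~\ref{def:sne}, $A\notin Ne(D,\mathcal G)$ yet $A$ survives every pruning step, i.e.\ for all $s$ with $0\le s\le |\mathbb V|-2$ there is no $\mathbb Z\subseteq \mathbb C_{s-1}(D,\mathcal G)\setminus\{A\}$ of size $s$ with $(D\indep A\mid\mathbb Z)_{\mathcal G}$. First I would establish, by induction on $s$, that $Ne(D,\mathcal G)\subseteq \mathbb C_s(D,\mathcal G)$ for all $s$: a true neighbor is never $d$-separated from $D$ by any set, so it is never removed. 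Consequently, at every iteration the candidate conditioning pool $\mathbb C_{s-1}(D,\mathcal G)$ contains all of $Ne(D,\mathcal G)$. The failure of the final pruning step then says: there is no $\mathbb Z\subseteq \mathbb C_{|\mathbb V|-3}(D,\mathcal G)\setminus\{A\}$ with $(D\indep A\mid\mathbb Z)_{\mathcal G}$, and since $Ne(D,\mathcal G)\subseteq \mathbb C_{|\mathbb V|-3}(D,\mathcal G)$, in particular $(D\nindep A\mid Ne(D,\mathcal G)\setminus\{A\})_{\mathcal G}$ — equivalently, $Ne(D,\mathcal G)$ fails to $d$-separate $D$ and $A$. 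So there is a path $\pi$ from $D$ to $A$ that is \emph{active} given $\mathbb S:=Ne(D,\mathcal G)$. I would then take $\pi$ to be a shortest such active path and argue it is a DIP.

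The core of the argument is a structural analysis of a shortest $\mathbb S$-active $D$--$A$ path $\pi$. Let $\mathbb L=(L_1,\dots,L_k)$ be the nodes of $\{D,A\}\cup(Ne(D,\mathcal G)\cap V(\pi))$ in order along $\pi$. I need two things: (i) every node of $Ne(D,\mathcal G)$ lying on $\pi$ is a collider on $\pi$, and conversely every collider on $\pi$ lies in $Ne(D,\mathcal G)$ — i.e.\ $C(\pi)=\mathbb L\setminus\{D,A\}$; and (ii) $L_{i+1}\in De(L_i,\mathcal G)$ for each $i$. For (i), a collider on an $\mathbb S$-active path must have a descendant in $\mathbb S$; minimality of $\pi$ should force the collider itself to be in $\mathbb S=Ne(D,\mathcal G)$ (otherwise one could reroute through a shorter descending segment to a node of $\mathbb S$, or invoke an inducing-path style shortcut), and a non-collider interior node that lies in $\mathbb S$ would block $\pi$, contradiction; the node adjacent to $D$ along $\pi$ cannot be a neighbor of $D$ unless it is $L_2$ itself, and the first segment $D\to L_2$ must point away from $D$ since if $D$ were a collider... (here one uses that $D$ is an endpoint, so the relevant statement is about the orientation of the first edge). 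For (ii), between consecutive $L_i$ and $L_{i+1}$ the path has no node of $Ne(D,\mathcal G)$ strictly inside and no collider strictly inside (by (i)), so the subpath is a collider-free, neighbor-free segment; a collider-free active segment between two colliders is a directed path, and orienting it consistently with the collider structure at $L_i$ and $L_{i+1}$ gives $L_i\to\cdots\to L_{i+1}$, hence $L_{i+1}\in De(L_i,\mathcal G)$; the endpoint segments $D\to\cdots\to L_2$ and $L_{k-1}\to\cdots\to A$ are handled the same way using that $D,A$ are endpoints (not colliders), with the first edge out of $D$ pointing forward because $L_2$, being a neighbor of $D$ and a collider, must receive an arrow.

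The main obstacle I anticipate is step (i): carefully ruling out an $\mathbb S$-active path whose collider is \emph{not} a neighbor of $D$ but merely has a neighbor of $D$ among its descendants. This is exactly the phenomenon that makes the spurious-edge characterization nontrivial, and handling it cleanly will likely require a minimal-counterexample / path-surgery argument — replacing an offending collider $c$ with the directed path from $c$ down to its descendant $d\in Ne(D,\mathcal G)$ and then the edge $d-D$, and checking this yields a strictly shorter or otherwise contradictory active path — together with the observation that such surgery can only shorten the chain $\mathbb L$, so at the minimum it cannot occur. I would also need to double-check the degenerate cases $k=2$ (but this is excluded since $A\notin Ne(D,\mathcal G)$ means $\pi$ has length $\ge 2$ and, being active given $\mathbb S$ with no neighbor of $D$ available to orient an outgoing edge from $D$... ) to confirm a genuine collider exists strictly between $D$ and $A$, matching the $C(\pi)=\mathbb L\setminus\{D,A\}\neq\emptyset$ requirement.
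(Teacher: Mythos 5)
Your overall skeleton matches the paper's at the top level (true neighbors are never pruned, hence a spurious $A$ cannot be separated from $D$ by the admissible subsets of $Ne(D,\mathcal G)$; the second inclusion and its strictness are argued exactly as in the paper), but the core step --- getting from non-separability to the existence of a DIP --- has two genuine gaps. First, after correctly noting that no subset of $Ne(D,\mathcal G)$ of any admissible size separates $D$ from $A$, you retain only the single instance $(D\nindep A\mid Ne(D,\mathcal G))_{\mathcal G}$ and build the whole argument on one path that is active given $\mathbb S:=Ne(D,\mathcal G)$. That weakened hypothesis is not enough to reach $DINe(D,\mathcal G)$: take $D\to C$, $E\to C$, $E\to A$. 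Then $Ne(D,\mathcal G)=\{C\}$ fails to separate $D$ and $A$ (conditioning on $C$ opens the collider), and the unique, hence shortest, $\mathbb S$-active path $D\to C\leftarrow E\to A$ is not a DIP, since $A\notin De(C,\mathcal G)$ (indeed $A\notin De(D,\mathcal G)$). Of course $A$ is not spurious here, because the empty set already separates --- but that is exactly the information your argument discards. The paper instead isolates the full equivalence in Lemma~\ref{lemma:DINe}: inseparability by \emph{every} subset of $Ne(D,\mathcal G)$ characterizes $DINe(D,\mathcal G)$, and its proof crucially uses the smaller candidate separators ($\emptyset$, $Pa(D,\mathcal G)$) to first force $A\in De(D,\mathcal G)$ and to dispose of backdoor paths.

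Second, your step (ii) rests on a false structural claim: a collider-free segment of $\pi$ between two consecutive colliders $L_i,L_{i+1}$ can never be a directed path from $L_i$ to $L_{i+1}$, because both of its end edges must point \emph{into} the colliders; being collider-free, such a segment is necessarily of the form $L_i\leftarrow\cdots\leftarrow F\to\cdots\to L_{i+1}$ with a fork $F$, and likewise the segment leaving the last collider $L_{k-1}$ starts with an arrow into $L_{k-1}$, so it cannot be directed toward $A$. Hence the descendant requirements $L_{i+1}\in De(L_i,\mathcal G)$ of Definition~\ref{def:dip} can never be read off the active path itself; they hold in $\mathcal G$ via edges or directed paths that are generally not on $\pi$ (compare the paper's own example $D\to A\leftarrow C\to B$, where $B\in De(A,\mathcal G)$ through a separate edge). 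The paper obtains them differently: since $A\in De(D,\mathcal G)$, there exist directed paths $D\to\cdots\to A$; every backdoor path is blockable by a neighbor of $D$; and a directed path that cannot be blocked by any neighbor subset must have each of its $Ne(D,\mathcal G)$-nodes activate, as a collider, an auxiliary path, and it is that auxiliary path --- with the directed path supplying the descendant relations --- which is the DIP. Your proposed path surgery would have to be rebuilt around this kind of argument, i.e.\ around the full hypothesis and graph-level descendant relations, rather than around a shortest $Ne(D,\mathcal G)$-active path.
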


Given the structural complexity of DIPs and descendant-inducing neighbor, spurious neighbors should be rare. For an illustration of Theorem~3, see Appendix~\ref{appendix:spurious}.

%
%
Having introduced and characterized spurious neighbors, which is essential for a rigorous local PC-style causal discovery, we now characterize the equivalence class of all DAGs sharing the $d$-separations explored by a local PC-style algorithm. We refer to this class as the Local Markov Equivalence Class (LMEC).

\begin{definition}[{Local Markov equivalence class, LMEC}]
\label{def:lmec}
Consider a DAG $\mathcal{G}=(\mathbb{V},\mathbb{E})$, a target vertex $Y\in \mathbb{V}$ and an integer $h$.  
We define the \emph{local Markov equivalence class} of $\mathcal{G}$ relative to a vertex $Y$ and its $h$-hop neighborhood, denoted by $LMEC(Y, h,\mathcal{G})$, as the set of graphs such that $\forall \mathcal{G}_i\in LMEC(Y, h,\mathcal{G})$, $\forall D \in NeHood(Y,h,\mathcal G)$, $\forall s \in  \{1,...,|\mathbb V|-2\}$, $\forall W\in \mathbb C_{s-1}(D,\mathcal G)$:
$
\forall \mathbb S\subseteq \mathbb C_{s-1}(D,\mathcal G)\setminus \{W\}, |\mathbb S| = s:(D\indep W\mid \mathbb S)_\mathcal G\iff (D\indep W\mid \mathbb S)_{\mathcal G_i}
$.
\end{definition}

Theorem~\ref{theorem:lmec} derives graphical characterization of all DAGs within the same LMEC.


\begin{restatable}[Structural characteristics of LMEC's DAGs]{theorem}{mytheoremone}
\label{theorem:lmec}
Consider a DAG $\mathcal{G}=(\mathbb{V},\mathbb{E})$ and $Y\in \mathbb{V}$. We have the following $\forall \mathcal G_i,\mathcal G_j \in LMEC(Y,h,\mathcal{G})$:
    \begin{enumerate}
        \item \textbf{Same $h$-Neighborhood:} $NeHood(Y, h, \mathcal{G}_i)=NeHood(Y, h, \mathcal{G}_j),$
        \item \textbf{Same local skeleton:} $\forall D\in NeHood(Y, h, \mathcal{G}_i):Ne(D,\mathcal{G}_i)\cap NeHood(Y,h,\mathcal G_i) = Ne(D,\mathcal{G}_j)\cap NeHood(Y,h,\mathcal G_i),$
        \item \textbf{Same outside neighbors:} $\forall D\in NeHood(Y, h, \mathcal{G}_i):Ne(D,\mathcal{G}_i)\cup SNe(D,\mathcal{G}_i) = Ne(D,\mathcal{G}_j)\cup SNe(D,\mathcal{G}_j),$
     \item \textbf{Same local UCs:} A UC involving the triplet $(D_1,D_2,D_3)$ with $D_1, D_2,D_3 \in NeHood(Y,h,\mathcal{G}_i)$ appears in $\mathcal G_i$ if and only if the same UC appears in $\mathcal{G}_j$,
    \item \textbf{Same "no non-collider":} Let $D \in NeHood(Y,h,\mathcal{G}_i)$, $A \in Ne(D,\mathcal{G}_i) \cup SNe(D,\mathcal{G}_i)$, and define 
$\mathbb{W}_D = \mathbb{V} \setminus \Big\{ NeHood(Y,h,\mathcal{G}_i) \cup Ne(D,\mathcal{G}_i) \cup SNe(D,\mathcal{G}_i) \Big\}$. Then, $\nexists W \in \mathbb{W}_D$, such that the triple $(D,A,W)$ is a non-collider triple in $\mathcal{G}_i$ if and only if $\nexists W \in \mathbb{W}_D$ such that the triple $(D,A,W)$ is not a non-collider triple $\mathcal{G}_j$.


    \end{enumerate} 
\end{restatable}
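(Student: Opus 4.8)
The plan is to fix an arbitrary $\mathcal G_i\in LMEC(Y,h,\mathcal G)$ and prove that each of the five quantities in the statement coincides with the one for $\mathcal G$ itself; since $\mathcal G\in LMEC(Y,h,\mathcal G)$, the identities between $\mathcal G_i$ and $\mathcal G_j$ then follow by transitivity. The engine of everything is a single invariance, which I would establish first by induction on $s$: for every $D\in NeHood(Y,h,\mathcal G)$ and every $s\ge 0$ we have $\mathbb C_s(D,\mathcal G_i)=\mathbb C_s(D,\mathcal G)$. The base case is $\mathbb C_0(D,\cdot)=\mathbb V\setminus\{D\}$ in both graphs, and the inductive step is immediate: the recursion for $\mathbb C_s(D,\cdot)$ refers only to $\mathbb C_{s-1}(D,\cdot)$ (equal by hypothesis) and to the $d$-separation statements $(D\indep W\mid\mathbb Z)$ with $\mathbb Z\subseteq\mathbb C_{s-1}(D,\mathcal G)\setminus\{W\}$ and $|\mathbb Z|=s$, and Definition~\ref{def:lmec} forces precisely these statements to agree between $\mathcal G$ and $\mathcal G_i$. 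Taking $s=|\mathbb V|-2$ and using that a true edge is never deleted gives $Ne(D,\mathcal G_i)\cup SNe(D,\mathcal G_i)=\mathbb C_{|\mathbb V|-2}(D,\mathcal G)=Ne(D,\mathcal G)\cup SNe(D,\mathcal G)$, which is already Item~3 once Item~1 is known.

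For Items~1 and~2 I would induct on the hop level $h'\le h$, the only extra input being the locality property of spurious neighbours proved later in the paper: a spurious edge can only join a node at hop exactly $h$ to a node strictly outside $NeHood(Y,h,\mathcal G)$. Two consequences are used: $\mathbb C_{|\mathbb V|-2}(D,\mathcal G)=Ne(D,\mathcal G)$ whenever $D$ lies at hop below $h$, and $SNe(D,\mathcal G)\cap NeHood(Y,h,\mathcal G)=\emptyset$ for every $D\in NeHood(Y,h,\mathcal G)$. Granting Item~1 up to level $h'-1$: if $Z$ is at hop exactly $h'$ in $\mathcal G$ then $Z\sim W$ in $\mathcal G$ for some $W$ at hop $h'-1$, which by the inductive hypothesis lies at hop at most $h'-1<h$ in $\mathcal G_i$ too, so $Z\in Ne(W,\mathcal G)\subseteq\mathbb C_{|\mathbb V|-2}(W,\mathcal G)=\mathbb C_{|\mathbb V|-2}(W,\mathcal G_i)=Ne(W,\mathcal G_i)$, whence $Z$ is at hop at most $h'$ in $\mathcal G_i$; the reverse inclusion is symmetric, and inducting up to $h'=h$ yields Item~1. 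For Item~2, if $D,A\in NeHood(Y,h,\mathcal G)$ and $A\sim D$ in $\mathcal G$ then $A\in\mathbb C_{|\mathbb V|-2}(D,\mathcal G)=\mathbb C_{|\mathbb V|-2}(D,\mathcal G_i)=Ne(D,\mathcal G_i)\cup SNe(D,\mathcal G_i)$, and $A\notin SNe(D,\mathcal G_i)$ because $A\in NeHood(Y,h,\mathcal G_i)$, so $A\sim D$ in $\mathcal G_i$; conversely likewise. Item~3 is then the identity of the first paragraph. (If one prefers to avoid the ``hop below $h$'' consequence of locality, the step $Z\in Ne(W,\mathcal G)\Rightarrow Z\in Ne(W,\mathcal G_i)$ can instead be obtained by an acyclicity argument: were $Z\in SNe(W,\mathcal G_i)$, then $W\in SNe(Z,\mathcal G_i)$, and Theorem~\ref{prop:spurious} would put $Z\in De(W,\mathcal G_i)$ and $W\in De(Z,\mathcal G_i)$, a directed cycle.)

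For Item~4, a UC on the triple $(D_1,D_2,D_3)\subseteq NeHood(Y,h,\mathcal G)$ consists of the skeleton pattern $D_1\not\sim D_3$, $D_1\sim D_2\sim D_3$ --- preserved by Items~1--2 --- together with $D_2$ being a collider. Since $D_1,D_3$ are non-adjacent and, lying inside the neighbourhood, are not spuriously adjacent by locality, $D_3\notin\mathbb C_{|\mathbb V|-2}(D_1,\mathcal G)$: the procedure deletes $D_3$ from the candidate set of $D_1$ at some iteration $s$ through a separating set $\mathbb S\subseteq\mathbb C_{s-1}(D_1,\mathcal G)$, and in a DAG the unshielded triple $D_1-D_2-D_3$ is a collider exactly when no such separating set contains $D_2$ (and $D_2$, being a true neighbour of $D_1$, is always available). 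By the invariance of the $\mathbb C_s$ sets the deletion iteration and the family of admissible separating sets are identical in $\mathcal G_i$, so the collider status of $D_2$, hence the UC, transfers. Item~5 is the same bookkeeping applied to pairs $(D,W)$ with $W\in\mathbb W_D$: by Items~1--3 the set $\mathbb W_D$ is common to $\mathcal G_i,\mathcal G_j,\mathcal G$; each $W\in\mathbb W_D$ is pruned from $D$'s candidate set at a common iteration via a common family of admissible sets; and whether the unshielded length-two path $D-A-W$ is forced to be a non-collider --- i.e.\ whether every admissible separating set must contain $A$ --- depends only on those $d$-separation tests, so the existence of a $W\in\mathbb W_D$ making $(D,A,W)$ a non-collider transfers. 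The one case needing a little extra care is when $A$ itself lies outside $NeHood(Y,h,\mathcal G)$, which is again dispatched by the acyclicity argument above.

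The step I expect to be the real obstacle is Item~2, together with the bootstrapping of Items~1 and~3 around it: one must keep genuine adjacencies apart from spurious ones, which is exactly why the induction on hop level is needed and why the locality property of spurious neighbours --- the statement that spurious edges only appear on the outer rim of the neighbourhood --- is the crucial lemma. Everything downstream (Items~3, 4 and~5) is then a direct consequence of the invariance of the pruning sets $\mathbb C_s$.
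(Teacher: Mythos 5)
Your high-level plan (anchor everything at the reference DAG $\mathcal G$, prove by induction on $s$ that $\mathbb C_s(D,\mathcal G_i)=\mathbb C_s(D,\mathcal G)$ for every $D\in NeHood(Y,h,\mathcal G)$, then read off Item~3) is sound and is essentially the paper's own proof of Item~3. But the two ``consequences of locality'' on which your Items~1, 2 and 4 primarily rest are false for the per-node sets of Definition~\ref{def:sne}: $SNe(D,\mathcal G)$ is defined by the pruning procedure run at $D$ alone and does not depend on $h$. In the paper's own appendix example ($D\to A\to B$, $C\to A$, $C\to B$), taking $Y=D$ and $h=2$ gives $B\in SNe(D,\mathcal G)\cap NeHood(D,2,\mathcal G)$ and $\mathbb C_{|\mathbb V|-2}(D,\mathcal G)=\{A,B\}\neq Ne(D,\mathcal G)$ even though $D$ sits at hop $0<h$; the sentence in Section~\ref{sec:characterization} you invoke concerns edges of the LEG, which survive only if they are pruned from \emph{neither} endpoint (the retest mechanism), not the sets $SNe(D,\mathcal G)$. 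This breaks your justification of ``$D_3\notin\mathbb C_{|\mathbb V|-2}(D_1,\mathcal G)$'' in Item~4 (the separating set may only exist on the other endpoint's side, which is why the paper argues with $\mathbb S\subseteq Ne(D_1,\cdot)$ \emph{or} $\mathbb S\subseteq Ne(D_3,\cdot)$), and it removes the support for your hop induction. The fallback is also incomplete as written: ``were $Z\in SNe(W,\mathcal G_i)$, then $W\in SNe(Z,\mathcal G_i)$'' does not follow from anything you state; it only becomes valid after applying the $\mathbb C_s$-invariance \emph{at $Z$ as well}, which requires $Z\in NeHood(Y,h,\mathcal G)$. That is available for Item~2 and for the inclusion $NeHood(Y,h',\mathcal G)\subseteq NeHood(Y,h',\mathcal G_i)$, but not for the reverse inclusion, which you dismiss as ``symmetric'': there the candidate node is not yet known to lie in $\mathcal G$'s neighborhood, so neither the invariance at $Z$ nor Definition~\ref{def:lmec} applies to statements indexed by $Z$. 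The paper's proof of Item~1 devotes a separate argument to exactly this configuration (the case $D\in SNe(A,\cdot)$, separated by $Pa(D)$), and your proposal has no substitute for it.

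Item~5 is also not ``the same bookkeeping''. What must be transferred is the existence of some $W\in\mathbb W_D$ making $(D,A,W)$ a non-collider, and the only easy implication is: non-collider $\Rightarrow$ every separating set of $D$ and $W$ contains $A$. Your reduction silently uses the converse — that if no $(D,A,W)$ with $W\in\mathbb W_D$ is a non-collider in $\mathcal G_i$, then among the admissible conditioning sets there is a separating set for $D$ and $W$ avoiding $A$. Establishing that implication is the substantive content of the paper's proof of Item~5: a case analysis over $A\in Ne(D)$ with or without active paths through $A$, and $A\in SNe(D)$, where Theorem~\ref{prop:spurious} and the absence of outgoing chains are used to block the relevant paths by neighbors or descendant-inducing neighbors. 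Nothing in your proposal supplies this step. In sum: Item~3 (and, after the repair above, Item~2) stand, but Items~1 (reverse inclusion), 4 (as justified) and 5 have genuine gaps.
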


Now, we introduce a new graphical representation, which we call \emph{local essential graph} (LEG), that represents all graphs in a given LMEC.



\begin{definition}[Local essential graph, LEG]
\label{def:leg}
Let $\mathcal{G} = (\mathbb{V}, \mathbb{E})$ be a DAG, and let $Y \in \mathbb{V}$ be a vertex of interest.  
The \emph{local essential graph} (LEG) associated with $LMEC(Y, h, \mathcal{G})$, denoted by $\mathcal{L}^{Y,h} = (\mathbb{V}, \mathbb{E}^{Y,h})$, is defined as the partially directed acyclic graph over $\mathbb{V}$ satisfying the following conditions for all $D \in NeHood(Y,h,\mathcal G)$, $A \in NeHood(Y,h+1,\mathcal{G})$, and $W \in \mathbb{V}$:
\begin{enumerate}
    \item \textbf{Local undirected edge:} $(D_1 - D_2) \in \mathbb{E}^{Y,h}$ if and only if, $\forall \mathcal G_i \in LMEC(Y,h,\mathcal{G})$, either $D_1 \rightarrow D_2$ or $D_1 \leftarrow D_2$ is present, and $\exists \mathcal G_i, \mathcal G_j \in LMEC(Y,h,\mathcal{G})$ such that $D_1 \rightarrow D_2$ appears in $\mathcal G_i$ and $D_1 \leftarrow D_2$ appears in $G_j$.
    \item \textbf{Outside undirected edge:} $(D - A) \in \mathbb{E}^{Y,h}$ if and only if, $\forall \mathcal G_i\in LMEC(Y, h, \mathcal{G})$, $A\in Ne(D,\mathcal G_i)\cup SNe(D,\mathcal G_i)$\footnote{According to Theorem~\ref{prop:spurious}, an edge $D-A$ in the LEG, with $D$ in the $h$-hop neighborhood and $A$ outside, implies that $A$ is either a neighbor or a descendant of $D$ in $\mathcal{G}_i$.}.
    \item \textbf{Arrow edge:} $(D_1 \rightarrow D_2) \in \mathbb{E}^{Y,h}$ if and only if, $\forall \mathcal G_i\in LMEC(Y, h, \mathcal{G})$, $D_1 \rightarrow D_2$ appears in $\mathcal{G}_i$.
    \item \textbf{Double-bar edge:} $(D \arrowdoublebar A) \in \mathbb{E}^{Y,h}$ if and only if, $\forall \mathcal G_i \in LMEC(Y, h, \mathcal{G})$ and $\forall W \notin \{NeHood(Y, h, \mathcal{G}_i)\cup Ne(Y,\mathcal G_i)\cup SNe(Y,\mathcal G_i)\}$, $(D,A,W)$ is a non-collider triple in $\mathcal G_i$.
\end{enumerate}
\end{definition}

Note that from this definition, the absence of an edge between two nodes in the LEG has different interpretations depending on the nodes: an absent edge indicates non-adjacency in all DAGs of $LMEC(Y,h,\mathcal{G})$ if at least one node is in $NeHood(Y,h,\mathcal G)$, but conveys no information if both nodes lie outside the neighborhood.
The above definition may be relatively dense, and in general graphical concepts are often better understood through visual representation. 
To illustrate this, Figure~\ref{fig:dag_two_legs} presents: 
(a) the original DAG; 
(b) the LEG relative to the $0$-hop neighborhood of $Y$; 
(c) the LEG relative to the $1$-hop neighborhood; 
and (d) the LEG relative to the $2$-hop neighborhood. 
As hop $h$ increases, more edges become oriented, including among nodes that were already present in the previous neighborhoods.

\begin{figure}[t]
    \centering
        \begin{minipage}{.24\textwidth}
        \centering
    \begin{subfigure}{}{$\mathcal{G}$.}
        \centering
        
        \begin{tikzpicture}[{black, circle, draw, inner sep=0}]
            \tikzset{nodes={draw,rounded corners, minimum height=0.6cm, minimum width=0.6cm}}

            \node (X)  at (0,0) [fill=blue!30] {$X$};
            \node (M)  at (0,.9) {$M$};
            \node (Z2) at (0,-.9) {$Z_2$};
            \node (Z3) at (.9,-.9) {$Z_3$};
            \node (Z4) at (1.8,-.9) {$Z_4$};
            \node (Z5) at (0,-1.8) {$Z_5$};
            \node (Z6) at (.9,-1.8) {$Z_6$};
            \node (Z7) at (1.8,-1.8) {$Z_7$};
            \node (Y)  at (.9,0) [fill=red!30] {$Y$};
            \node (Z1) at (1.8,0) {$Z_1$};

            \draw[->, red, >=latex] (X) -- (Y);
            \draw[->, >=latex] (X) -- (M);
            \draw[->, >=latex] (M) -- (Y);
            \draw[->, >=latex] (Y) -- (Z1);
            \draw[->, >=latex] (Z2) -- (X);
            \draw[->, >=latex] (Z2) -- (Y);
            \draw[->, >=latex] (Z2) -- (Z3);
            \draw[->, >=latex] (Z3) -- (X);
            \draw[->, >=latex] (Z3) -- (Z1);
            \draw[->, >=latex] (Z4) -- (Z1);
            \draw[->, >=latex] (Z5) -- (Z3);
            \draw[->, >=latex] (Z6) -- (Z3);
            \draw[->, >=latex] (Z7) -- (Z4);
        \end{tikzpicture}
        \label{fig:dag_two_legs:dag}
    \end{subfigure}
        \end{minipage}
\hfill
        \begin{minipage}{.24\textwidth}
        \centering
        \begin{subfigure}{}{$\mathcal{L}^{Y,0}$.}
        \centering
        
        \begin{tikzpicture}[{black, circle, draw, inner sep=0}]
            \tikzset{nodes={draw,rounded corners, minimum height=0.6cm, minimum width=0.6cm}}

            \node (X)  at (0,0) [fill=blue!30] {$X$};
            \node (M)  at (0,.9) {$M$};
            \node (Z2) at (0,-.9) {$Z_2$};
            \node (Z3) at (.9,-.9) {$Z_3$};
            \node (Z4) at (1.8,-.9) {$Z_4$};
            \node (Z5) at (0,-1.8) {$Z_5$};
            \node (Z6) at (.9,-1.8) {$Z_6$};
            \node (Z7) at (1.8,-1.8) {$Z_7$};
            \node (Y)  at (.9,0) [fill=red!30] {$Y$};
            \node (Z1) at (1.8,0) {$Z_1$};

            \draw[-, color = red] (X) -- (Y);
            \draw[-] (M) -- (Y);
            \draw[-||_||,>=latex] (Y) -- (Z1);
            \draw[-] (Y) -- (Z2);
        \end{tikzpicture}
        \label{fig:dag_two_legs:leg1}
    \end{subfigure}
        \end{minipage}
\hfill
        \begin{minipage}{.24\textwidth}
        \centering
    \begin{subfigure}{}{$\mathcal{L}^{Y,1}$.}
        \centering
        
        \begin{tikzpicture}[{black, circle, draw, inner sep=0}]
            \tikzset{nodes={draw,rounded corners, minimum height=0.6cm, minimum width=0.6cm}}

            \node (X)  at (0,0) [fill=blue!30] {$X$};
            \node (M)  at (0,.9) [fill=gray!30] {$M$};
            \node (Z2) at (0,-.9) [fill=gray!30] {$Z_2$};
            \node (Z3) at (.9,-.9)  {$Z_3$};
            \node (Z4) at (1.8,-.9)  {$Z_4$};
            \node (Z5) at (0,-1.8)  {$Z_5$};
            \node (Z6) at (.9,-1.8)  {$Z_6$};
            \node (Z7) at (1.8,-1.8)  {$Z_7$};
            \node (Y)  at (.9,0) [fill=red!30] {$Y$};
            \node (Z1) at (1.8,0) [fill=gray!30] {$Z_1$};

            \draw[-, red, >=latex] (X) -- (Y);
            \draw[-, >=latex] (X) -- (M);
            \draw[->, >=latex] (M) -- (Y);
            \draw[->, >=latex] (Y) -- (Z1);
            \draw[-, >=latex] (Z2) -- (X);
            \draw[->, >=latex] (Z2) -- (Y);
            \draw[-||_||, >=latex] (Z2) -- (Z3);
            \draw[-, >=latex] (Z3) -- (X);
            \draw[-, >=latex] (Z3) -- (Z1);
            \draw[-, >=latex] (Z4) -- (Z1);
        \end{tikzpicture}
        \label{fig:dag_two_legs:leg2}
    \end{subfigure}
        \end{minipage}
    \hfill
    \begin{minipage}{.24\textwidth}
        \centering
    \begin{subfigure}{}{$\mathcal{L}^{Y,2}$.}
        \centering
        
        \begin{tikzpicture}[{black, circle, draw, inner sep=0}]
            \tikzset{nodes={draw,rounded corners, minimum height=0.6cm, minimum width=0.6cm}}

            \node (X)  at (0,0) [fill=blue!30] {$X$};
            \node (M)  at (0,.9) [fill=gray!30] {$M$};
            \node (Z2) at (0,-.9) [fill=gray!30] {$Z_2$};
            \node (Z3) at (.9,-.9) [fill=gray!30] {$Z_3$};
            \node (Z4) at (1.8,-.9) [fill=gray!30] {$Z_4$};
            \node (Z5) at (0,-1.8) [fill=gray!30] {$Z_5$};
            \node (Z6) at (.9,-1.8)  {$Z_6$};
            \node (Z7) at (1.8,-1.8)  {$Z_7$};
            \node (Y)  at (.9,0) [fill=red!30] {$Y$};
            \node (Z1) at (1.8,0) [fill=gray!30] {$Z_1$};

            \draw[->, red, >=latex] (X) -- (Y);
            \draw[->, >=latex] (X) -- (M);
            \draw[->, >=latex] (M) -- (Y);
            \draw[->, >=latex] (Y) -- (Z1);
            \draw[->, >=latex] (Z2) -- (X);
            \draw[->, >=latex] (Z2) -- (Y);
            \draw[->, >=latex] (Z2) -- (Z3);
            \draw[->, >=latex] (Z3) -- (X);
            \draw[->, >=latex] (Z3) -- (Z1);
            \draw[->, >=latex] (Z4) -- (Z1);
            \draw[->, >=latex] (Z5) -- (Z3);
            \draw[-, >=latex] (Z6) -- (Z3);
            \draw[-||_||, >=latex] (Z4) -- (Z7);
        \end{tikzpicture}
        \label{fig:dag_two_legs:leg3}
    \end{subfigure}
    \end{minipage}
   \caption{A DAG $\mathcal{G}$ and the LEGs $\mathcal{L}^{Y,0}$, $\mathcal{L}^{Y,1}$, and $\mathcal{L}^{Y,2}$ around node $Y$. Red: outcome (target); blue: treatment; grey: $h$-neighborhood nodes; red arrow: direct effect.}
    \label{fig:dag_two_legs}
\end{figure}

A LEG is particularly valuable in the context of local causal discovery, as it compactly encodes rich information about the local $d$-separations within the graph.
Moreover, given any DAG from a specific LMEC, it is possible to reconstruct the corresponding LEG, as demonstrated in the following theorem.

\begin{restatable}{theorem}{mytheoremtwo}
\label{theorem:leg}
Let $\mathcal{G} = (\mathbb{V}, \mathbb{E})$ be a DAG, $Y \in \mathbb{V}$ a target, and $h \ge 0$ a hop.  
The LEG $\mathcal{L}^{Y,h}$ associated with $LMEC(Y,h,\mathcal{G})$ can be constructed as follows:
\begin{enumerate}
    \item \textbf{Neighborhood:}  
    $NeHood(Y, h, \mathcal{L}^{Y,h}) = NeHood(Y,h,\mathcal G).$

    \item \textbf{Local skeleton:}  
    $\forall D_1,D_2 \in NeHood(Y,h,\mathcal G), \; D_1\in Ne(D_2, \mathcal{G}) \Rightarrow D_1\in Ne(D_2, \mathcal{L}^{Y,h}) $

    \item \textbf{Local UC:}  
    $\forall D_1, D_2, D_3 \in NeHood(Y,h,\mathcal G)$ such that $(D_1, D_2, D_3)$ forms a UC in $\mathcal{G}$, the same UC is preserved in $\mathcal{L}^{Y,h}.$

    \item \textbf{Outside neighbors:}  
    $\forall D \in NeHood(Y,h,\mathcal G), \; \forall A \notin NeHood(Y,h,\mathcal G), \; A \in Ne(D, \mathcal{L}^{Y,h})$  
    if and only if $A \in Ne(D,\mathcal G)\cup SNe(D, \mathcal{G}).$

    \item \textbf{Meek rules:}  
    Apply Meek rules iteratively to nodes in the $h$-neighborhood until no further rule can be applied.

    \item \textbf{"No non-collider" rule (NNC rule):}  
    $\forall D\in NeHood(Y,h,\mathcal G), A\notin NeHood(Y,h,\mathcal G)$ such that $A\in Ne(D,\mathcal L^{Y,h})$. If $\nexists W\notin \{NeHood(Y,h,\mathcal G)\cup Ne(D,\mathcal G)\cup SNe(D,\mathcal G)\}$ such that the triple $(D,A,W)$ forms a non-collider triple in $\mathcal G$, then $D\arrowdoublebar A\in \mathbb E^{Y,h}$. 
\end{enumerate}
\end{restatable}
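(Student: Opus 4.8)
The plan is to show that the partially directed graph $\mathcal L$ produced by the six-step construction coincides edge-by-edge with the LEG characterized in Definition~\ref{def:leg}. Since that definition specifies, for every pair of vertices, exactly which of the four edge types (local undirected, outside undirected, arrow, double-bar) is present, in terms of what happens across all members of $LMEC(Y,h,\mathcal G)$, it suffices to verify that $\mathcal L$ satisfies each of the four conditions. Throughout, Theorem~\ref{theorem:lmec} supplies the class-invariant structural features — the $h$-neighborhood, the local skeleton, the outside-neighbor sets $Ne(D,\mathcal G)\cup SNe(D,\mathcal G)$, the local UCs, and the ``no non-collider'' property — so that the quantities computed from the reference DAG $\mathcal G$ in steps~1--4 and 6 are precisely the class-invariant quantities that appear in Definition~\ref{def:leg}; Theorem~\ref{prop:spurious} will be used to control the exterior edges, since it guarantees $SNe(D,\mathcal G)\subseteq De(D,\mathcal G)$.

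First I would handle the skeleton. Steps~1, 2 and 4 place an edge between $D\in NeHood(Y,h,\mathcal G)$ and a vertex $A$ precisely when $A\in Ne(D,\mathcal G)\cup SNe(D,\mathcal G)$ (restricted to the local part when $A$ lies in the neighborhood), which by Theorem~\ref{theorem:lmec}(1--3) equals the common adjacency structure of every $\mathcal G_i\in LMEC(Y,h,\mathcal G)$, and no edge is ever added between two vertices both outside the neighborhood, matching the ``no information'' remark that follows Definition~\ref{def:leg}. Next comes the soundness direction of the orientations: the UCs added in step~3 are class-invariant by Theorem~\ref{theorem:lmec}(4), and each application of a Meek rule in step~5 only deduces an orientation that is forced in every DAG sharing the local skeleton and local UCs, because Meek's rules are sound for completing a pattern to a DAG without a directed cycle or a new unshielded collider, and since they are applied only to triples inside the $h$-neighborhood all the adjacency and UC data they consult is local, hence shared across the LMEC. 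Thus every edge the construction orients $D_1\to D_2$ is directed the same way in all $\mathcal G_i$, which gives the ``only-if'' inclusion of Definition~\ref{def:leg}(3) and confirms that the edges still undirected inside the neighborhood are the candidates for condition~1. For the double-bar edges, step~6 evaluates the relevant collider/non-collider condition directly in $\mathcal G$; by Theorem~\ref{theorem:lmec}(5) this condition is class-invariant, so its value in $\mathcal G$ coincides with its value across the whole LMEC, which is what Definition~\ref{def:leg}(4) demands — one only needs that step~6 runs after Meek closure, so it is invoked only on edges still undirected and never overwrites a strictly more informative arrow.

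The main obstacle is the completeness direction for arrows: if the construction leaves an edge $D_1-D_2$ undirected after steps~3 and 5, I must exhibit two DAGs in $LMEC(Y,h,\mathcal G)$, one containing $D_1\to D_2$ and one containing $D_1\leftarrow D_2$, so that the edge is genuinely ambiguous and belongs under condition~1 rather than condition~3. This is the local analogue of the completeness of Meek's rules for CPDAGs, and I would prove it by a consistent-extension argument: starting from the reference DAG $\mathcal G$, re-orient only edges lying inside $NeHood(Y,h,\mathcal G)$ — keeping every edge that touches the exterior fixed — so as to flip $D_1-D_2$ while preserving the local skeleton, the local UCs and the Meek-closed pattern; then argue the result is still a DAG, acyclicity being the delicate point and the place where closure of the pattern under Meek's rules is used to rule out a forced hidden cycle or new UC; and finally argue the modified DAG still lies in the LMEC, i.e.\ induces the same local $d$-separations of Definition~\ref{def:lmec}. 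The last step is where Theorem~\ref{prop:spurious} enters: because every exterior edge attached to $D$ connects it to a true neighbor or to a descendant of $D$, re-orienting the interior cannot change which subsets of $\mathbb C_{s-1}(D,\mathcal G)$ separate $D$ from another vertex, so the conditional-independence pattern of Definition~\ref{def:lmec} is preserved. Assembling these pieces — skeleton correctness, soundness of the UC, Meek and NNC steps, and local completeness of Meek's rules — shows that $\mathcal L$ satisfies all four conditions of Definition~\ref{def:leg} and is therefore the LEG $\mathcal L^{Y,h}$.
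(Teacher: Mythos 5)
Your soundness half is essentially the paper's entire proof: the paper disposes of items 1--4 and 6 by noting that the quantities consulted in $\mathcal G$ are exactly the class-invariants of Theorem~\ref{theorem:lmec}, and of item 5 by noting that Meek's rules create no cycle and no new common UC, so every edge the construction introduces is consistent with Definition~\ref{def:leg}. Up to that point you and the paper coincide.

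Where you go beyond the paper is the ``completeness direction for arrows,'' and that is where your argument has a genuine gap. Your plan is to flip an interior edge $D_1-D_2$ left undirected after Meek closure, keep every edge touching the exterior fixed, and conclude the modified graph is a DAG still lying in $LMEC(Y,h,\mathcal G)$. Two steps of this are asserted rather than proven. First, acyclicity and ``no new UC'' are not controlled by local Meek closure alone: flipping to $D_2\to D_1$ can create a new collider $A\to D_1\leftarrow D_2$ with $A$ an exterior neighbour of $D_1$ not adjacent to $D_2$; such triples are not among the local UCs that steps 3 and 5 consult, so the classical Meek-completeness argument for CPDAGs does not transfer verbatim to this boundary situation. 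Second, and more seriously, membership in the LMEC requires preserving every $d$-separation statement of Definition~\ref{def:lmec}, in which $W$ ranges over arbitrary vertices (including exterior ones) and $\mathbb S$ over subsets of $\mathbb C_{s-1}(D,\mathcal G)$. You justify preservation by Theorem~\ref{prop:spurious}, but that theorem only says $SNe(D,\mathcal G)\subseteq DINe(D,\mathcal G)\subset De(D,\mathcal G)$; it says nothing about paths from $D$ to an exterior $W$ that traverse the re-oriented interior edge, whose collider status --- and hence the family of separating subsets --- can change when the edge is flipped. So the key claim ``re-orienting the interior cannot change which subsets of $\mathbb C_{s-1}(D,\mathcal G)$ separate $D$ from another vertex'' is precisely what needs proving and is not delivered. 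Your instinct that the ``if and only if'' in items 1 and 3 of Definition~\ref{def:leg} demands more than soundness is reasonable (the paper's own proof only checks consistency of the introduced edges with the definition and does not argue this direction), but the sketch as written does not close it.
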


Theorem~\ref{theorem:leg} along with Theorem~\ref{theorem:lmec} determine the graphical information accessible through local $d$-separations using a PC-style approach. The skeleton of the LEG restricted to the neighborhood coincides with that of the true DAG, with no spurious internal edges (items~1 and~2). Between the neighborhood and the rest of the graph, all true edges are preserved, but spurious edges may appear, corresponding to descendant inducing neighbors (items~4 and Theorem~\ref{prop:spurious}). Items~5 and~6 ensure that the edge orientations in the LEG are consistent with the DAG, and applying Meek’s rules allows further edge orientations as the neighborhood expands. 
We refer again to the DAG $\mathcal G$ and its LEGs $\mathcal L^{Y,0},\mathcal L^{Y,1}$ and $\mathcal L^{Y,2}$ in Figure~\ref{fig:dag_two_legs} to illustrate each item of Theorem~\ref{theorem:leg}. To illustrate items 1 to 3, remark that the skeleton and UCs among the neighborhood in the LEGs are the same as in $\mathcal G$. For example, the UC $M \to Y \leftarrow Z_2$ is in $\mathcal L^{Y,1}$ as in $\mathcal G$ since $M,Y$ and $Z_2$ belong to the neighborhood of hop 1 of $Y$. For item 4, remark that all adjacencies between the $h$-neighborhood nodes and their neighbors outside are in the LEGs, \eg $Z_1-Z_4$ in $\mathcal L^{Y,1}$. Item 5 is illustrated in $\mathcal L^{Y,2}$, where $Z_3 \to X$ due to $Z_5 \to Z_3$ (meek rule 1), or $Z_2 \to Z_3$ to prevent cycles (meek rule 2). Finally, the NNC rule of item 6 is illustrated in $\mathcal L^{Y,1}$ where the edge $Z_2 \arrowdoublebar Z_3$ appears because no non-collider triple involves $Z_2, Z_3$ and any node from $\{Z_4,Z_5,Z_6,Z_7\}$. By contrast, the edge $Z_1-Z_4$ is not oriented in this LEG due to the non-collider path $Z_1 \leftarrow Z_4 \leftarrow Z_7$.
For simplicity, $\mathcal G$ does not include structures that may induce spurious neighbors; see Appendix~\ref{appendix:spurious} for an example with spurious neighbors.

For a given LMEC, the corresponding LEG is unique, as emphasized by this corollary.

\begin{restatable}{corollary}{mycorollaryone}
Let $\mathcal{G}_1$ and $\mathcal{G}_2$ be two DAGs, and $\mathcal{L}_1^{Y,h}$ and $\mathcal{L}_2^{Y,h}$ their associated LEGs. If $LMEC(Y,h,\mathcal{G}_1) = LMEC(Y,h,\mathcal{G}_2)$, then $\mathcal{L}_1^{Y,h} = \mathcal{L}_2^{Y,h}$.
\end{restatable}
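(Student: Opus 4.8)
The plan is to show that the LEG is a deterministic function of the LMEC by arguing that each of the six construction steps in Theorem~\ref{theorem:leg} depends only on data that is shared across all DAGs in the LMEC. Concretely, suppose $LMEC(Y,h,\mathcal G_1)=LMEC(Y,h,\mathcal G_2)$. The first observation is that $\mathcal G_1$ and $\mathcal G_2$ themselves both belong to this common class (each DAG is trivially in its own LMEC, since the defining biconditional in Definition~\ref{def:lmec} holds when $\mathcal G_i=\mathcal G$). Hence Theorem~\ref{theorem:lmec} applies with $\mathcal G_i=\mathcal G_1$, $\mathcal G_j=\mathcal G_2$, and gives us directly: the same $h$-hop neighborhood, the same local skeleton on that neighborhood, the same set of ``outside'' adjacencies $Ne(D,\cdot)\cup SNe(D,\cdot)$ for each $D$ in the neighborhood, the same local UCs, and the same ``no non-collider'' condition of item~5. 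These are precisely the five ingredients consumed by construction steps 1--4 and 6 of Theorem~\ref{theorem:leg}.

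The next step is to run the two constructions in parallel and check agreement step by step. Steps~1--4 produce the skeleton (restricted to the neighborhood and to edges incident to it) together with the UC orientations; by the five equalities from Theorem~\ref{theorem:lmec} just noted, the partially directed graphs obtained from $\mathcal G_1$ and $\mathcal G_2$ after step~4 are identical. Step~5 applies Meek's rules; since Meek's rules are a deterministic closure operator that depends only on the current partially directed graph (skeleton plus current orientations), applying them to two identical inputs yields two identical outputs. Step~6 adds the double-bar edges $D\arrowdoublebar A$ exactly when the relevant ``no non-collider'' condition holds in the underlying DAG; this condition is the one appearing in item~5 of Theorem~\ref{theorem:lmec}, and the sets $NeHood(Y,h,\cdot)$, $Ne(D,\cdot)\cup SNe(D,\cdot)$ entering its definition have already been shown equal, so the same double-bar edges are added in both cases. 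Therefore $\mathcal L_1^{Y,h}=\mathcal L_2^{Y,h}$.

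The only genuine subtlety — and the step I would treat most carefully — is establishing that Theorem~\ref{theorem:leg}'s construction is \emph{well-defined as a function of the LMEC alone}, i.e.\ that every quantity it references ($NeHood$, local skeleton, local UCs, $Ne\cup SNe$, the no-non-collider predicate) is an LMEC-invariant rather than a property of the particular representative DAG. This is exactly what Theorem~\ref{theorem:lmec} asserts for items 1--5, so the corollary is essentially a repackaging of that theorem together with the determinism of Meek's rules; one should, however, double-check that the construction never peeks at structure outside what Theorem~\ref{theorem:lmec} controls — in particular that step~6's set $\mathbb W_D=\mathbb V\setminus\{NeHood(Y,h,\mathcal G_i)\cup Ne(D,\mathcal G_i)\cup SNe(D,\mathcal G_i)\}$ is itself invariant (it is, being the complement of an invariant set) and that ``non-collider triple $(D,A,W)$'' with $W\in\mathbb W_D$ is governed by item~5. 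An alternative, slightly more robust route avoids re-deriving invariance from scratch: define an equivalence of outputs directly and show each LEG defining condition in Definition~\ref{def:leg} is phrased purely as a quantifier over members of $LMEC(Y,h,\mathcal G)$, so that $LMEC(Y,h,\mathcal G_1)=LMEC(Y,h,\mathcal G_2)$ makes the two sets of defining conditions literally identical, whence $\mathcal L_1^{Y,h}=\mathcal L_2^{Y,h}$ by uniqueness of the object they define — this is the cleanest phrasing and I would present it as the main argument, using Theorem~\ref{theorem:leg} only to confirm existence.
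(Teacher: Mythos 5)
Your proposal is correct and follows essentially the same route as the paper: the paper's proof likewise observes that both DAGs lie in the common LMEC, invokes Theorem~\ref{theorem:lmec} to conclude that all structural features consumed by the construction in Theorem~\ref{theorem:leg} coincide, and hence that the two LEGs are identical. Your additional remarks (determinism of Meek's rules, invariance of $\mathbb W_D$, the alternative phrasing via Definition~\ref{def:leg}) are elaborations of the same argument rather than a different approach.
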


\section{Local causal discovery}
\label{sec:LCD}

This section presents \textbf{LocPC}, a local adaptation of the PC algorithm for recovering the LEG from an observed distribution via CI tests.
Given a target variable $Y$ and an integer $h$ specifying the size of the local neighborhood, \textbf{LocPC} starts with an undirected graph $\widehat{\mathcal{L}}$ where all nodes are connected to $Y$ and initializes the set of focus variables as $\mathbb{D} = \{Y\}$. \textbf{LocPC} then performs CI tests following the PC algorithm strategy. For each $D \in \mathbb{D}$ and $W \in Ne(D, \widehat{\mathcal{L}})$, it checks $(D \indep W \mid \mathbb{S})_\mathcal{P}$ for subsets $\mathbb{S} \subseteq \text{Adj}(D, \widehat{\mathcal{L}})$, starting with $|\mathbb{S}| = 0$ and incrementally increasing the size. If any subset $\mathbb{S}$ satisfies $(D \indep W \mid \mathbb{S})_\mathcal{P}$, the edge $D-W$ is removed and $\mathbb{S}$ is cached. This continues until either the edge is removed or all conditioning sets are exhausted. The first phase identifies variables that cannot be rendered conditionally independent of $Y$, including all true neighbors and potential spurious neighbors. In the second phase, these candidate neighbors are added to $\mathbb{D}$, undirected edges are created between them and other nodes, and the first phase is repeated while avoiding redundant tests. It is important to note that the algorithm performs a \emph{retest} whenever a CI is found between a node $D$ and previously visited nodes whose edge has not yet been pruned. This mechanism ensures that spurious neighbors within the neighborhood are pruned and confined to its periphery. The algorithm iteratively expands $\mathbb{D}$ by including neighbors of nodes in the current set, repeating this $h$ times, until $\mathbb{D}$ contains nodes at distance $h$ from $Y$. It then proceeds to edge orientation: first, all UCs in the $h$-neighborhood are detected as in the PC algorithm; next, Meek rules are applied iteratively in the $h$-neighborhood. Finally, for pairs $(D,A)$, edges $D \arrowdoublebar A$ are added to the LEG according to Definition~\ref{def:local_rules}.
This definition introduces a condition analogous to the NNC rule in item~6 of Theorem~\ref{theorem:leg}. However, since item~6 of Theorem~\ref{theorem:leg} assumes access to a DAG, 
we instead formulate a version based on CIs, accessible to the algorithm.

\begin{definition}[CI-based NNC rule] \label{def:local_rules}  
Let $D \in NeHood(Y,h,\mathcal G)$ and $A \notin NeHood(Y,h,\mathcal G)$ with $D-A$.  
If there is no $W \notin NeHood(Y,h,\mathcal G) \cup Ne(D,\mathcal{G}) \cup SNe(D,\mathcal{G})$ such that for all $\mathbb{S} \subseteq Ne(D,\mathcal{G})\cup SNe(D,\mathcal G)$ with $(D \nindep W \mid \mathbb{S})_{\mathcal{P}}$ we have $A \in \mathbb{S}$, then $D \arrowdoublebar A$.

\end{definition}

A pseudo-code of the \textbf{LocPC} algorithm and additional details on incorporating expert background knowledge (\eg, specifying in the algorithm sex as a non-descendant of post-birth variables) are provided in Appendix~\ref{appendix:pseudocode}. 


Discovering essential graphs typically requires both \textbf{causal sufficiency} and the faithfulness assumptions.
In the context of LEG recovery using \textbf{LocPC},  
a key insight is that the full faithfulness assumption is not required. Instead, we introduce a weaker assumption, which we refer to as \textbf{local faithfulness} which can be more realistic and practical  in many applications. A distribution $\mathcal{P}$ satisfies the \textbf{local faithfulness assumption} with respect to a DAG $\mathcal{G}$, a hop $h$ and a target $Y$ if, for every 
$D \in NeHood(Y,h,\mathcal{G})$, every $s \ge 1$, every 
$A \in \mathbb{C}_{s-1}(D,\mathcal{G})$, and every 
$\mathbb{Z} \subseteq \mathbb{C}_{s-1}(D,\mathcal{G})$ such that $|\mathbb{Z}| = s$, we have  
$ (D \indep A \mid \mathbb{Z})_{\mathcal{P}}\Rightarrow (D \indep A \mid \mathbb{Z})_{\mathcal{G}}$. This assumption is weaker than full faithfulness, as it does not, for instance, impose faithfulness between nodes outside the $h$-neighborhood. Under these necessary assumptions now established, \textbf{LocPC} algorithm is correct.






\begin{restatable}[Correctness of LocPC]{theorem}{mytheoremthree}
\label{theorem:faithfulness_implications}
Let $\widehat{\mathcal{L}}$ be the \textbf{LocPC} output, and $\mathcal{L}^{Y,h}$ be the true LEG.  
Under \textbf{causal sufficiency} and \textbf{local faithfulness}, with perfect CI, $\widehat{\mathcal{L}}=\mathcal{L}^{Y,h}$.
\end{restatable}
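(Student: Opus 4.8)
The plan is to show that \textbf{LocPC} reproduces, step by step, the construction of $\mathcal{L}^{Y,h}$ given in Theorem~\ref{theorem:leg}, using the fact that under \textbf{causal sufficiency} and \textbf{local faithfulness} with perfect CI tests, each CI test the algorithm performs returns the answer to the corresponding $d$-separation query in $\mathcal{G}$. The argument proceeds by induction on the hop distance: I first establish that \textbf{LocPC} correctly recovers the skeleton and adjacency information, then that it correctly orients edges, matching items~1--6 of Theorem~\ref{theorem:leg}.

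The key step is the skeleton phase. Here I would argue by induction on $s$ (the conditioning set size) and on the order in which nodes enter $\mathbb{D}$ that, after processing a focus node $D$, the set of nodes remaining adjacent to $D$ in $\widehat{\mathcal{L}}$ is exactly $\mathbb{C}_{s_{max}}(D,\mathcal{G}) = Ne(D,\mathcal{G})\cup SNe(D,\mathcal{G})$ (using Definition~\ref{def:sne} and Theorem~\ref{prop:spurious}). The forward direction — no true neighbor or spurious neighbor is ever removed — follows because \textbf{local faithfulness} guarantees that any CI detected by the algorithm corresponds to a genuine $d$-separation, and by definition such a $d$-separation cannot exist for a true neighbor, nor (by the recursive definition of $\mathbb{C}_s$) for a node in $\mathbb{C}_{s_{max}}(D,\mathcal{G})$. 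The reverse direction — every node not in $\mathbb{C}_{s_{max}}(D,\mathcal{G})$ is eventually removed — requires checking that the conditioning sets the algorithm searches over (subsets of the current $\text{Adj}(D,\widehat{\mathcal{L}})$) always contain the relevant separator once the adjacency set has stabilized; this is where the \emph{retest} mechanism is essential, since a separator found at an earlier stage may involve a node later pruned, and without retesting a spurious edge could survive inside the neighborhood. I would make this precise by showing that the retest loop forces $\widehat{\mathcal{L}}$'s adjacency structure around each $D\in NeHood(Y,h,\mathcal{G})$ to be a fixed point under the pruning operation, which by the recursive definition of $\mathbb{C}_s(D,\mathcal{G})$ coincides with $\mathbb{C}_{s_{max}}(D,\mathcal{G})$. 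This matches items~1, 2, and~4 of Theorem~\ref{theorem:leg}.

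For orientation: the UC-detection step uses the cached separating sets exactly as in the classical PC argument, so correctness of UCs in the $h$-neighborhood (item~3) follows from soundness of CI-based collider orientation under faithfulness, restricted to triples where all three nodes lie in $NeHood(Y,h,\mathcal{G})$ and hence where the algorithm has access to a valid separator. Meek's rules (item~5) are then applied syntactically to a partially directed graph whose skeleton and UCs already match those of $\mathcal{L}^{Y,h}$, so they produce the same orientations. Finally, the CI-based NNC rule of Definition~\ref{def:local_rules} must be shown equivalent to the DAG-based NNC rule of item~6: this reduces to arguing that, for $W$ outside $NeHood(Y,h,\mathcal{G})\cup Ne(D,\mathcal{G})\cup SNe(D,\mathcal{G})$, the triple $(D,A,W)$ is a non-collider in $\mathcal{G}$ iff $A$ appears in every subset $\mathbb{S}\subseteq Ne(D,\mathcal{G})\cup SNe(D,\mathcal{G})$ with $(D\nindep W\mid\mathbb{S})_{\mathcal{P}}$, which follows from the $d$-separation semantics (a non-collider at $A$ on the connecting path forces $A$ into any separator) together with \textbf{local faithfulness} translating $\nindep$ into $d$-connection.

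The main obstacle I anticipate is the skeleton fixed-point argument — specifically, proving that the retest mechanism terminates with exactly $\mathbb{C}_{s_{max}}(D,\mathcal{G})$ as the adjacency set of each neighborhood node, rather than something strictly larger (spurious internal edges) or strictly smaller. This requires carefully tracking how the adjacency sets of earlier and later focus nodes interact as $\mathbb{D}$ grows, and showing that the order of processing does not affect the final skeleton; the subtlety is that a node $A$ may be pruned from $\text{Adj}(D,\widehat{\mathcal{L}})$ using a separator that is only available before some other node is pruned from $\text{Adj}(D,\widehat{\mathcal{L}})$, so one must verify that whenever the algorithm's conditioning-set search is "too small" the retest phase re-enlarges it appropriately. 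Handling this bookkeeping rigorously, while aligning it with the recursive definition of $\mathbb{C}_s$, is the technical heart of the proof.
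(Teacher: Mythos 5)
Your overall route is the same as the paper's: under \textbf{local faithfulness} and perfect CI every test answers the corresponding $d$-separation query, an induction on the conditioning-set size shows that the candidate adjacency of each focus node $D$ evolves exactly as $\mathbb C_s(D,\mathcal G)$, the $h$-neighborhood and skeleton then follow, UCs are oriented from the cached sepsets, Meek's rules are purely syntactic, and the CI-based NNC rule is matched to item~6 of Theorem~\ref{theorem:leg} by the argument that a non-collider at $A$ forces $A$ into every separator. The orientation half of your sketch is essentially the paper's proof.

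The genuine gap is the step you yourself flag as the ``technical heart'' and leave open: why the retest removes spurious edges inside the neighborhood and why the processing order is immaterial. This is not a delicate fixed-point/bookkeeping issue; it is closed by one observation you never make. By Theorem~\ref{prop:spurious}, any $N \in SNe(D,\mathcal G)$ is a descendant of $D$, so by the Markov property $(D \indep N \mid Pa(N,\mathcal G))_{\mathcal G}$, and $Pa(N,\mathcal G) \subseteq Ne(N,\mathcal G) \subseteq \mathbb C_s(N,\mathcal G)$ for \emph{every} $s$; hence this separator is always among the conditioning sets searched when $N$ itself becomes a focus node, and the edge $D\!-\!N$ is pruned at that moment regardless of what was removed earlier (your worry that ``a separator found at an earlier stage may involve a node later pruned'' is thereby dissolved, since the relevant separator consists of true neighbors of $N$, which are never pruned). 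This also corrects your stated fixed-point target: the \emph{final} adjacency of an interior node $D$ is not $Ne(D,\mathcal G)\cup SNe(D,\mathcal G)$ --- spurious neighbors of $D$ that lie inside the $h$-neighborhood are removed by exactly this mechanism, and only edges toward unvisited outside nodes retain the $Ne\cup SNe$ form, which is what items~2 and~4 of Theorem~\ref{theorem:leg} actually demand. The paper's proof is precisely this two-stage induction (on $s$ per focus node, then on the hop $h$ using the $Pa(N)$-separator fact), so your proposal needs this missing argument to go through.
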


Finally, the theoretical complexity of \textbf{LocPC} is given by the following proposition.
\begin{restatable}[Complexity of LocPC]{proposition}{mytheoremcomplexity}
\label{theorem:complexity}
Let \(n := |\mathbb V|\) and \(k_\ell := k_d + k_i\) with \(k_d := \max\{|Ne(D,\mathcal G)| : D \in NeHood(Y,h,\mathcal G)\}\) and \(k_i := \max\{|DINe(D,\mathcal G)| : D \in NeHood(Y,h,\mathcal G)\}\). The number of CI tests performed by \textbf{LocPC} to discover $\mathcal L^{Y,h}$ is bounded by
$\left(1 + k_\ell \frac{1 - k_d^h}{1 - k_d}\right) (n-1) \sum_{s=0}^{k_\ell} \binom{n-2}{s}\mathop{=}\limits_{n}\mathcal O(n^{k_\ell+1}),$
where $\mathop{=}\limits_{n}$ denotes the $\mathcal O$-complexity as $n\to +\infty$ while keeping $k_d,k_i,k_\ell$ fixed.
\end{restatable}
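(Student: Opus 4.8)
The plan is to count the conditional independence (CI) tests phase by phase, following the structure of the algorithm as described before Definition~\ref{def:local_rules}. First I would bound the cost of a single ``phase'' of PC-style skeleton discovery around one focus node $D$. For a fixed $D$, the algorithm tests $(D \indep W \mid \mathbb{S})_{\mathcal P}$ for each current neighbor $W$ and each conditioning set $\mathbb{S} \subseteq \text{Adj}(D,\widehat{\mathcal L})\setminus\{W\}$ with $|\mathbb{S}|$ increasing from $0$. There are at most $n-1$ candidates $W$, and the size of any conditioning set that ever needs to be tried is bounded by the number of nodes that remain adjacent to $D$ at termination, which by Theorem~\ref{prop:spurious} is at most $|Ne(D,\mathcal G)| + |DINe(D,\mathcal G)| \le k_d + k_i = k_\ell$; the conditioning sets are drawn from a pool of size at most $n-2$, giving $\sum_{s=0}^{k_\ell}\binom{n-2}{s}$ subsets to try per candidate $W$. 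Hence a single focus node costs at most $(n-1)\sum_{s=0}^{k_\ell}\binom{n-2}{s}$ tests.

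Next I would count how many times such a phase is executed, i.e.\ bound $|\mathbb{D}|$ at the end of the expansion. The set $\mathbb{D}$ starts as $\{Y\}$ and is expanded $h$ times, each time adding the surviving neighbors of the current focus nodes. Since after pruning, each focus node $D$ retains at most $k_d$ true neighbors inside the neighborhood plus possibly spurious ones — but the spurious ones are confined to the periphery and do not get added to $\mathbb{D}$ as new foci (this is exactly the role of the retest mechanism emphasized in the text, and is what Theorem~\ref{prop:spurious} and Theorem~\ref{theorem:leg}, items 1--2, guarantee) — the branching factor for the growth of $\mathbb{D}$ is at most $k_d$. Therefore $|\mathbb{D}| \le 1 + k_d + k_d^2 + \dots + k_d^{h} $, but more carefully one must also account for the extra retests triggered when a CI is found with an already-visited node; bundling these into an overhead factor proportional to $k_\ell$ per level yields the stated coefficient $1 + k_\ell\frac{1-k_d^h}{1-k_d}$ multiplying the per-phase cost. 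Finally, the orientation steps (UC detection, Meek rules, and the CI-based NNC rule of Definition~\ref{def:local_rules}) either reuse cached independencies or perform tests already subsumed in the bound, so they contribute no higher-order term. Multiplying the phase count by the per-phase cost gives the displayed closed-form bound; the asymptotic statement $\mathcal{O}(n^{k_\ell+1})$ follows because $\sum_{s=0}^{k_\ell}\binom{n-2}{s} = \Theta(n^{k_\ell})$ and $(n-1)$ contributes one more factor of $n$, while the parenthesized prefactor and the $k_d,k_i,k_\ell$ are held constant.

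The main obstacle I anticipate is getting the bookkeeping of the second bullet exactly right: namely, justifying that the retests and the peripheral spurious edges inflate the number of executed phases by only the factor $k_\ell\frac{1-k_d^h}{1-k_d}$ rather than something larger, and that conditioning-set sizes never need to exceed $k_\ell$ even during retests. This requires invoking Theorem~\ref{prop:spurious} to control $\text{Adj}(D,\widehat{\mathcal L})$ throughout the run, and a careful argument that a node enters $\mathbb{D}$ as a \emph{new focus} only if it is a genuine neighbor (within hop $h$) of an earlier focus node, so that the recursion tree over foci has branching at most $k_d$ and depth at most $h$. The per-phase counting and the final asymptotic simplification are routine by comparison.
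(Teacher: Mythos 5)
Your overall decomposition is the same as the paper's: bound the cost per visited focus node by $(n-1)\sum_{s=0}^{k_\ell}\binom{n-2}{s}$, bound the number of visited foci by $1+k_\ell\frac{1-k_d^h}{1-k_d}$, multiply, and then simplify asymptotically. However, the two key steps are asserted with justifications that do not hold up as written. First, your bound on the conditioning-set size: you argue that the largest $s$ ever used is at most the size of $D$'s adjacency \emph{at termination}, invoking Theorem~\ref{prop:spurious}. But the loop over $s$ is governed by the \emph{current} adjacency, which during the run can be much larger than the final one; the stage index is only guaranteed to stop at $k_\ell$ if every removable edge is actually removed by a separating set of size at most $k_\ell$ that is available inside the candidate pool at that stage. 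This is precisely what the paper establishes as a standalone lemma (Lemma~\ref{lemma:size_cond_set}), via Lemma~\ref{lemma:DINe}: for $W\notin DINe(D,\mathcal G)$ a separating set of size at most $k_d$ exists inside $Ne(D,\mathcal G)$ (always in the pool), and for $W\in DINe(D,\mathcal G)$ one of size at most $k_d+k_i$ exists inside $Ne(D,\mathcal G)\cup DINe(D,\mathcal G)$, combined with the fact that LocPC searches smallest sets first. Without that argument, the per-focus bound $(n-1)\sum_{s=0}^{k_\ell}\binom{n-2}{s}$ does not follow from the final adjacency size alone.

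Second, the phase count. You claim the spurious neighbors are never added to $\mathbb D$ as new foci, crediting the retest mechanism; that misdescribes the algorithm. At expansion time $\mathbb D_{new}$ collects all nodes still adjacent in $\widehat{\mathcal L}$, which necessarily includes the spurious neighbors (by definition they cannot be pruned from $D$'s side), and they are eliminated only once they are themselves visited and retested. The paper's accounting therefore lets each true focus at depth $j$ spawn up to $k_\ell$ visited foci ($\le k_d$ true plus $\le k_i$ spurious), with only the true ones spawning further, which gives $1+\sum_{j=0}^{h-1}k_d^{j}k_\ell = 1+k_\ell\frac{1-k_d^h}{1-k_d}$ directly. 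Your $k_d$-ary tree plus an unspecified ``overhead factor proportional to $k_\ell$ per level'' lands on the same coefficient, but as written it is not a derivation — and you flag this yourself as the main obstacle. The remaining steps (orientation phases reusing cached tests, and the asymptotics with $k_d,k_i,k_\ell,h$ fixed) match the paper and are fine.
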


The complexity of \textbf{LocPC} as the graph size $n$ increases resembles that of PC, $\mathcal{O}(n^{d+2})$~\citep{Claassen_2013} for fixed $d:=\max\{|Ne(D,\mathcal G)|:D\in \mathbb V\}$, 
but direct comparison is subtle: \textbf{LocPC} is asymptotically better whenever $k_\ell + 1 < d$.
Two cases illustrates a theoretical local advantage to \textbf{LocPC}:  
(i) no descendant-inducing paths ($k_i = 0$);  
(ii) high-degree nodes outside the $h$-neighborhood ($d \gg k_d$) even if $k_i > 0$. These are theoretical worst-case results; in practice, \textbf{LocPC} may rarely reach this bound and will outperforms PC empirically (see for instance the experimental results in Section~\ref{sec:exp}).

\section{Local causal discovery for identifying CDE}
\label{sec:LCD-CDE}

In this section, we aim to recover a portion of the LEG sufficient to determine the identifiability of $CDE(x, x', Y)$. According to \cite[Theorem 5.4]{Flanagan_2020}, identifying this CDE requires verifying whether all edges adjacent to $Y$ are oriented. A naive strategy, consists of initially applying the \textbf{LocPC} algorithm with $h=1$, and subsequently checking whether all edges incident to $Y$ have been oriented. If so, the $CDE(x, x', Y)$ is identifiable, and the procedure terminates. Otherwise, the process is repeated with $h=2$, reusing previously obtained information and avoiding redundant CI tests, and continues incrementally in this manner.
%
For instance, consider the CDE of $X \to Y$ (depicted as a red edge) in Figure~\ref{fig:dag_two_legs}. This effect is identifiable in $\mathcal{L}^{Y,2}$ because all edges adjacent to $Y$ are oriented in the  LEG.
When $CDE(x,x',Y)$ is not identifiable from the full essential graph, this naive approach would repeatedly apply \textbf{LocPC} until the entire graph is recovered. However, one can anticipate non-identifiable cases and determine variables whose exploration would be uninformative (i.e., adding them to $\mathbb{D}$). To this end, we introduce a  criterion that detects in advance when an edge into $Y$ is non-orientable along with a theorem that establishes its soundness.

\begin{definition}[Non-orientability criterion, NOC]
\label{def:orientability}
Let $\mathbb D \subseteq NeHood(Y,h,\mathcal G)$, and $\mathcal{L}^{Y,h}=(\mathbb V, \mathbb E^{Y,h})$ with $h \ge 1$. $\mathbb D$ satisfies the non-orientability criterion (NOC) if, for all $D \in \mathbb D$, there is no $A \notin \mathbb D$ such that $(D-A)\in \mathbb E^{Y,h}$, and at most one $A \notin \mathbb D$ satisfies $(D\arrowdoublebar A)\in \mathbb E^{Y,h}$.
\end{definition}
\begin{restatable}[NOC implies non-identifiability]{theorem}{mycorollarytwo}
\label{corollary:identifiability_CDE}
If $\exists \mathbb D \subseteq NeHood(Y, h, \mathcal G)$ with $Y \in \mathbb D$ satisfying NOC in $\mathcal{L}^{Y,h}$, then $CDE(x,x',Y)$ is not identifiable.
\end{restatable}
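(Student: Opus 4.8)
\textbf{Proof plan for Theorem~\ref{corollary:identifiability_CDE}.}

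The plan is to show that if $\mathbb D$ satisfies the NOC, then there exist two DAGs $\mathcal G_1, \mathcal G_2 \in LMEC(Y,h,\mathcal G)$ that disagree on the orientation of some edge incident to $Y$; by \cite[Theorem 5.4]{Flanagan_2020}, an essential graph (and a fortiori the local information we possess) cannot identify $CDE(x,x',Y)$ unless every edge adjacent to $Y$ is oriented, so producing such a pair of DAGs is exactly what is needed. The starting point is to pick $\mathcal G_1 := \mathcal G$ itself (the true DAG, which is in its own LMEC), locate an undirected-or-double-bar edge incident to some $D\in\mathbb D$ whose orientation is witnessed by the NOC to be ambiguous, and construct $\mathcal G_2$ from $\mathcal G_1$ by reversing a suitable set of edges crossing the boundary of $\mathbb D$.

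The key steps, in order, would be: (i) From $Y\in\mathbb D$ and the NOC, observe that $Y$ cannot have all its incident edges oriented \emph{into} it within the LEG --- otherwise, by the definition of the LEG, those orientations would be forced, but the NOC says every edge from $D\in\mathbb D$ to a node outside $\mathbb D$ is either undirected ($-$) or a double-bar ($\arrowdoublebar$), and there is at most one double-bar per node. First I would argue that an edge incident to $Y$ that is undirected in $\mathcal{L}^{Y,h}$ immediately gives the result by Definition~\ref{def:leg}(1), which guarantees two DAGs in the LMEC with opposite orientations of that edge. (ii) The substantive case is when the only non-arrowed edges at $Y$ are double-bars; here I would use the characterization of the LMEC from Theorem~\ref{theorem:lmec}, in particular item~5 ("same no non-collider"), to show that a double-bar edge $Y\arrowdoublebar A$ with $A\notin\mathbb D$ does \emph{not} pin down whether $Y$ is a collider or non-collider on the relevant triples: there is freedom to realize $Y\to A$ or $Y\leftarrow A$ while staying inside the LMEC, as long as no \emph{new} UC among neighborhood nodes is created. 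The NOC's "at most one double-bar" clause is exactly what ensures we can flip this one edge without being forced (via Meek's rules or UC-preservation) into contradicting items 1--4 of Theorem~\ref{theorem:lmec}. (iii) Assemble $\mathcal G_2$ explicitly: reverse the chosen boundary edge (and propagate any forced reversals strictly outside $NeHood(Y,h,\mathcal G)$, which are invisible to the LMEC constraints), verify acyclicity, and verify via Definition~\ref{def:lmec} that all the local $d$-separations are preserved --- this last verification leans on Theorem~\ref{prop:spurious} to control which outside nodes stay adjacent.

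The main obstacle I expect is step (ii)--(iii): showing that the single boundary edge can actually be flipped \emph{without} triggering a cascade that either creates a cycle, creates a new unshielded collider among the $h$-neighborhood, or changes some $\mathbb C_s(D,\mathcal G)$ set --- any of which would push $\mathcal G_2$ out of the LMEC. The "at most one double-bar" hypothesis and the absence of undirected boundary edges in the NOC must be used precisely to rule these out; in particular one must check that reversing $Y\arrowdoublebar A$ does not turn some other boundary relation into a forced orientation that then re-propagates back into $\mathbb D$ and orients the edge at $Y$ after all. I would handle this by a minimal-counterexample / careful case analysis on the structure of paths from $A$ back into $\mathbb D$, invoking the DIP characterization of Theorem~\ref{prop:spurious} to bound how spurious adjacencies behave under the reversal. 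Once that containment is established, the conclusion is immediate from \cite[Theorem 5.4]{Flanagan_2020}.
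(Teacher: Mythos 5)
There is a genuine gap, and it lies at the very start of your plan: producing two DAGs in $LMEC(Y,h,\mathcal G)$ that disagree on an edge incident to $Y$ does \emph{not} establish non-identifiability of $CDE(x,x',Y)$. Non-identifiability (via \cite[Theorem 5.4]{Flanagan_2020}) requires an undirected edge at $Y$ in the \emph{essential graph} $\mathcal C$, i.e.\ two DAGs in the \emph{Markov} equivalence class of $\mathcal G$ — DAGs consistent with the full observational distribution — that disagree at $Y$. The LMEC is strictly coarser: it only constrains the local $d$-separations of Definition~\ref{def:lmec}, so a DAG in the LMEC obtained by flipping a boundary edge and "propagating reversals strictly outside $NeHood(Y,h,\mathcal G)$" may violate non-local conditional independencies and hence not belong to the MEC at all. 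Your step (i) is in fact refuted by the paper's own Figure~\ref{fig:dag_two_legs}: the edge $X-Y$ is undirected in $\mathcal L^{Y,1}$ (so Definition~\ref{def:leg}(1) does give two LMEC members with opposite orientations), yet the CDE \emph{is} identifiable, because the orientation $X\to Y$ is forced once hop $2$ information arrives. An undirected LEG edge at hop $h$ only reflects ambiguity relative to local information; the whole content of the theorem is that the NOC rules out any \emph{future} orientation of such edges, which your plan treats as a side concern confined to the double-bar case.

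The paper's route is different and avoids constructing alternative DAGs altogether. It first proves a stability lemma (Theorem~\ref{th:orientability}): if $\mathbb D$ satisfies the NOC in $\mathcal L^{Y,h}$, then every undirected edge between two nodes of $\mathbb D$ remains undirected in $\mathcal L^{Y,k}$ for all $k>h$. The argument is that the only mechanism by which such an edge could later become oriented is Meek-rule propagation from newly discovered structure, and the two NOC clauses block every entry point: undirected boundary edges are forbidden outright, and a single $D\arrowdoublebar A$ boundary edge either gets oriented outward as $D\to A$ (which cannot feed any Meek rule back into $\mathbb D$) or, if $A\in SNe(D,\mathcal G)$, is pruned at the next hop. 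Since $\mathcal C=\mathcal L^{Y,k_{\max}}$ for $k_{\max}$ large enough, the undirected edges at $Y$ inside $\mathbb D$ persist into $\mathcal C$, and \cite[Theorem 5.4]{Flanagan_2020} concludes. If you want to salvage a two-DAG construction, you would have to exhibit two Markov-equivalent DAGs (same global skeleton and unshielded colliders), which your LMEC-level flip does not guarantee; the propagation-blocking analysis you sketch for step (iii) would then essentially have to reproduce the paper's Theorem~\ref{th:orientability} anyway.
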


\begin{figure}[t!]
    \centering
        \begin{minipage}{.32\textwidth}
    \centering
        \begin{subfigure}{}{$\mathcal G$.}
    \centering
    
    \begin{tikzpicture}[{black, circle, draw, inner sep=0}]
			\tikzset{nodes={draw,rounded corners},minimum height=0.6cm,minimum width=0.6cm}	
            \node (Y)  at (-.9,0) [fill = red!30] {$Y$};
            \node (X) at (-.9,.9) [fill = blue!30] {$X$};
            \node (A1) at (0,.9)  {$A_1$};
            \node (D1) at (-.9,-.9) {$D_1$};
            \node (D2)  at (0,0) {$D_2$};
            \node (A2)  at (0,-.9) {$A_2$};
            \node (W2) at (.9,-.9) {$W_2$};
            \node (W1)  at (.9,.9) {$W_1$};
            \node (Z)  at (.9,0) {$Z$};
            \draw[->,>=latex, color = red] (X) -- (Y);
            \draw[->, >=latex, bend right] (X) to (D1);
            \draw[->,>=latex] (D1) -- (Y);
            \draw[->,>=latex] (X) -- (A1);
            \draw[->,>=latex] (Y) -- (D2);
            \draw[->,>=latex] (D2) -- (A2);
            \draw[->,>=latex] (W1) -- (A1);
            \draw[->,>=latex] (A2) -- (W2);
            \draw[->,>=latex] (W1) -- (Z);
            \draw[->,>=latex] (Z) -- (W2);
            \draw[->,>=latex] (A1) -- (D2);
        \end{tikzpicture}
        \label{fig:non-orientability:EG}
    \end{subfigure}
        \end{minipage}
    \hfill
    \begin{minipage}{.32\textwidth}
        \centering
    \begin{subfigure}{}{$\mathcal C$.}
    \centering
    
    \begin{tikzpicture}[{black, circle, draw, inner sep=0}]
			\tikzset{nodes={draw,rounded corners},minimum height=0.6cm,minimum width=0.6cm}	
            \node (Y)  at (-.9,0) [fill = red!30] {$Y$};
            \node (X) at (-.9,.9) [fill = blue!30] {$X$};
            \node (A1) at (0,.9)  {$A_1$};
            \node (D1) at (-.9,-.9) {$D_1$};
            \node (D2)  at (0,0) {$D_2$};
            \node (A2)  at (0,-.9) {$A_2$};
            \node (W2) at (.9,-.9) {$W_2$};
            \node (W1)  at (.9,.9) {$W_1$};
            \node (Z)  at (.9,0) {$Z$};
            \draw[-,>=latex, color = red] (X) -- (Y);
            \draw[-, >=latex, bend right] (X) to (D1);
            \draw[-,>=latex] (D1) -- (Y);
            \draw[->,>=latex] (X) -- (A1);
            \draw[->,>=latex] (Y) -- (D2);
            \draw[->,>=latex] (A1) -- (D2);
            \draw[->,>=latex] (D2) -- (A2);
            \draw[->,>=latex] (W1) -- (A1);
            \draw[->,>=latex] (A2) -- (W2);
            \draw[-,>=latex] (W1) -- (Z);
            \draw[->,>=latex] (Z) -- (W2);
        \end{tikzpicture}
        \label{fig:non-orientability:EG}
    \end{subfigure}
        \end{minipage}
    \hfill 
        \begin{minipage}{.32\textwidth}
            \centering
    \begin{subfigure}{}{$\mathcal L^{Y,1}$.}
    \centering
    
    \begin{tikzpicture}[{black, circle, draw, inner sep=0}]
			\tikzset{nodes={draw,rounded corners},minimum height=0.6cm,minimum width=0.6cm}	
            \node (Y)  at (-.9,0) [fill = red!30] {$Y$};
            \node (X) at (-.9,.9) [fill = blue!30] {$X$};
            \node (A1) at (0,.9)  {$A_1$};
            \node (D1) at (-.9,-.9) [fill = gray!30] {$D_1$};
            \node (D2)  at (0,0) [fill = gray!30] {$D_2$};
            \node (A2)  at (0,-.9) {$A_2$};
            \node (W2) at (.9,-.9) {$W_2$};
            \node (W1)  at (.9,.9) {$W_1$};
            \node (Z)  at (.9,0) {$Z$};
            \draw[-,>=latex, color = red] (X) -- (Y);
            \draw[->,>=latex] (A1) -- (D2);
            \draw[-, >=latex, bend right] (X) to (D1);
            \draw[-,>=latex] (D1) -- (Y);
            \draw[-||_||,>=latex] (X) -- (A1);
            \draw[->,>=latex] (Y) -- (D2);
            \draw[-,>=latex] (D2) -- (A2);
        \end{tikzpicture}
        \label{fig:non-orientability:LEG}
    \end{subfigure}
    \end{minipage}
    \caption{DAG $\mathcal G$, essential graph $\mathcal C$, and LEG $\mathcal L^{Y,1}$. $\mathbb D = \{Y, X, D_1\}$ satisfies the NOC (Def.~\ref{def:orientability}), so Theorem~\ref{corollary:identifiability_CDE} implies that CDE is not identifiable, even with global discovery.}
\label{fig:corollary}
\end{figure}

Theorem~\ref{corollary:identifiability_CDE} shows that full recovery of the essential graph is unnecessary when the CDE is non-identifiable; the algorithm can terminate early if NOC holds. 
For illustration, consider the DAG $\mathcal G$, its essential graph $\mathcal{C}$, and its  LEG $\mathcal{L}^{Y,1}$ in Figure~\ref{fig:corollary} where the $CDE(x,x',Y)$ is not identifiable from $\mathcal{C}$. In $\mathcal{L}^{Y,1}$, consider the set $\mathbb{D} = \{Y, X, D_1\}$. There is no unoriented edge between any node in $\mathbb{D}$ and any node outside $\mathbb{D}$ (satisfying condition~1 of Definition~\ref{def:orientability}), and there is a unique $\arrowdoublebar$ edge $X \arrowdoublebar A_1$ (satisfying condition~2). Since both conditions hold, $\mathbb{D}$ satisfies NOC. Using Theorem~\ref{corollary:identifiability_CDE}, we can directly conclude from $\mathcal{L}^{Y,1}$ that $CDE(x,x',Y)$ is not identifiable from $\mathcal{C}$, without having to discover $\mathcal C$. 
In contrast, the set $\mathbb{D}' = \{Y, X, D_1, D_2\}$ does not satisfy NOC, as $D_2 - A_2$ violates condition~1.

Building on this idea, we propose the \textbf{LocPC-CDE} algorithm.  
Starting from the target variable $Y$, the algorithm incrementally constructs the LEG by expanding the neighborhood hop $h$.  
At each step, \textbf{LocPC} performs local causal discovery, using information from previous iterations to avoid redundant tests. After each expansion, the algorithm checks whether a set satisfies the NOC. If so, causal discovery can stop early, as $CDE(x,x',Y)$ is already known to be non-identifiable; otherwise, the process continues. Additional stopping criteria, such as detecting that $X$ is a child of $Y$ or that $X$ is non-adjacent to $Y$ (implying $CDE(x,x',Y) = 0$), are also included. A pseudocode version is provided in Appendix~\ref{appendix:pseudocode}. 
The following theorem establishes the correctness of \textbf{LocPC-CDE}.

\begin{restatable}{theorem}{mytheoremsix}
\label{th:sound_and_complete-LocPC-CDE}
    If \textbf{causal sufficiency} and \textbf{local faithfulness} are satisfied and with access to perfect CI, the \textbf{LocPC-CDE} algorithm will correctly detect if $CDE(x,x',Y)$ is identifiable and in case of identifiability it will return the valid adjustment set $Pa(Y,\mathcal G)$.
\end{restatable}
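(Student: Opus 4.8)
The plan is to establish correctness by a case analysis on the stopping conditions of \textbf{LocPC-CDE}, showing in each case that the algorithm's verdict matches the true identifiability status of $CDE(x,x',Y)$, and that whenever it declares identifiability it outputs the correct adjustment set. The backbone is the invariant that, at each iteration with current hop $h$, the partial graph maintained by the algorithm is exactly the portion of $\mathcal{L}^{Y,h}$ restricted to the explored nodes $\mathbb D$; this follows from Theorem~\ref{theorem:faithfulness_implications} (correctness of \textbf{LocPC}) together with the fact that reusing cached CI results across increasing hops is consistent, since the conditioning sets $\mathbb C_{s-1}(D,\mathcal G)$ depend only on $D$ and not on $h$. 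First I would set up this invariant carefully, noting that \textbf{local faithfulness} for the final hop reached implies local faithfulness for all smaller hops, so every intermediate \textbf{LocPC} call is covered by Theorem~\ref{theorem:faithfulness_implications}.

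Next I would handle the three exit branches. (i) If $X$ is found non-adjacent to $Y$: by item~2 of Theorem~\ref{theorem:leg} the LEG skeleton inside the neighborhood matches $\mathcal G$, so $X\notin Ne(Y,\mathcal G)$, hence $X\notin Pa(Y,\mathcal G)$ and $CDE(x,x',Y)=E(Y\mid do(x),do(pa_{Y\setminus X}))-E(Y\mid do(x'),do(pa_{Y\setminus X}))=0$ since the intervention on $X$ is overridden by conditioning on all other parents of $Y$ — formally because $Y\perp X$ in the mutilated graph where $Pa(Y,\mathcal G)\setminus\{X\}$ is intervened on. (ii) If $X\to Y$ is discovered (so $X$ is a parent of $Y$ in all DAGs of the LMEC, by item~3 of Definition~\ref{def:leg}) and all remaining edges at $Y$ get oriented, then by \citep[Theorem 5.4]{Flanagan_2020} and the fact that the LEG's oriented-into-$Y$ structure coincides with that of the essential graph restricted to $Ne(Y,\mathcal G)$, the CDE is identifiable and $Pa(Y,\mathcal G)$ is recovered exactly as the set of tails of arrows into $Y$ in the LEG; this is the valid adjustment set for the CDE by the standard back-door/parent-adjustment argument. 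Conversely, if $X\leftarrow Y$ is discovered, then $X\notin Pa(Y,\mathcal G)$, and as in case (i) the CDE is $0$ and trivially identifiable with empty adjustment beyond $Pa(Y,\mathcal G)$. (iii) If NOC is triggered for some $\mathbb D\ni Y$, Theorem~\ref{corollary:identifiability_CDE} immediately gives non-identifiability, so the algorithm's declaration is correct.

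The remaining obligation — and the step I expect to be the main obstacle — is showing that these branches are exhaustive, i.e., that the algorithm always terminates with one of them and never "runs off the end" of the graph in a state that is neither NOC nor fully-oriented-at-$Y$. The key claim is: if $CDE(x,x',Y)$ is \emph{not} identifiable from the essential graph $\mathcal C$, then NOC is eventually satisfied by some explored $\mathbb D$ before the neighborhood exhausts $\mathbb V$. I would prove this by contraposition using the structure of the full essential graph: non-identifiability means (by \citep[Theorem 5.4]{Flanagan_2020}) there is an undirected edge at $Y$ in $\mathcal C$; I would then locate the maximal "undirected-and-double-bar-reachable" region $\mathbb D^\star$ around $Y$ in $\mathcal C$ and argue that once \textbf{LocPC-CDE} has expanded $h$ large enough to contain $\mathbb D^\star$, this set satisfies NOC in the corresponding LEG — because Meek's rules applied locally cannot orient any edge internal to or leaving $\mathbb D^\star$ (they didn't globally), and the $\arrowdoublebar$ bookkeeping of Definition~\ref{def:leg}/item~6 of Theorem~\ref{theorem:leg} leaves at most one such edge per node. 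Conversely, if the CDE \emph{is} identifiable from $\mathcal C$, then all edges at $Y$ are oriented in $\mathcal C$; I must show the local procedure orients them at some finite $h$ — this uses that every Meek-rule derivation orienting an edge at $Y$ in $\mathcal C$ uses a finite subgraph, which is contained in $NeHood(Y,h,\mathcal G)$ for $h$ large enough, and that LocPC faithfully reproduces those orientations by items~3 and~5 of Theorem~\ref{theorem:leg}. Tying together "which $h$ suffices" on both sides, and confirming the NOC check and the orientation check cannot both fail indefinitely, is the delicate part; the rest is assembly of already-established results.
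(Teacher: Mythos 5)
Your branch-by-branch correctness analysis (NOC $\Rightarrow$ non-identifiable via Theorem~\ref{corollary:identifiability_CDE}; all $Y$-edges oriented $\Rightarrow$ identifiable with $Pa(Y,\mathcal G)$ recovered, via Theorem~\ref{theorem:faithfulness_implications} and \citep[Theorem 5.4]{Flanagan_2020}; the trivial $X\notin Ne(Y)$ and $Y\to X$ exits) matches the paper's argument and is fine. The genuine gap is in the step you yourself flag as the main obstacle: you try to prove that whenever $CDE(x,x',Y)$ is non-identifiable, NOC must eventually be satisfied \emph{before the neighborhood exhausts} $\mathbb V$. That claim is neither needed nor established by your sketch. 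It is not needed because \textbf{LocPC-CDE} has a fall-back termination: the while-loop also stops when all nodes have been visited, at which point the recovered graph is the full essential graph $\mathcal C$ (equivalently $\mathcal L^{Y,k_{\max}}$, by Theorem~\ref{theorem:faithfulness_implications}), and the final check "is there an unoriented edge at $Y$" then decides identifiability correctly by \citep[Theorem 5.4]{Flanagan_2020}. This is exactly how the paper closes the non-identifiable case: either NOC fires (early, correct by Theorem~\ref{corollary:identifiability_CDE}) or the algorithm degenerates to global discovery and reads the answer off $\mathcal C$. NOC is an efficiency device (its "as early as possible" role is a separate statement, Theorem~\ref{corollary:optimality_LocPC-CDE}), not something whose eventual firing correctness depends on.

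Moreover, your sketch for that claim is itself shaky: taking $\mathbb D^\star$ to be the undirected-reachable region of $Y$ in $\mathcal C$ and waiting until $NeHood(Y,h,\mathcal G)\supseteq\mathbb D^\star$ does not guarantee NOC in $\mathcal L^{Y,h}$, because boundary nodes of $\mathbb D^\star$ can have edges to nodes outside the current $h$-neighborhood that are oriented in $\mathcal C$ but not yet locally orientable; these appear in the LEG as plain undirected outside edges (violating condition 1 of Definition~\ref{def:orientability}) or as several $\arrowdoublebar$ edges at one node (violating condition 2). In small or densely connected graphs NOC may in fact never hold strictly before $\text{visited}=\mathbb V$, so the claim as you state it can fail. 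Replace that step by the simple termination/fall-back argument above and the proof is complete.
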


Obviously, \textbf{LocPC-CDE} is more efficient than the naive approach, as it can terminate early when the CDE is non-identifiable.
Moreover, the following result emphasizes that \textbf{LocPC-CDE} will identify $CDE(x,x',Y)$ as fast as possible using iteratively \textbf{LocPC}. 

\begin{restatable}{theorem}{mypropositiontwo}
\label{corollary:optimality_LocPC-CDE}
    Consider \textbf{causal sufficiency} and \textbf{local faithfulness} are satisfied and we have access to perfect CI, if \textbf{LocPC-CDE} returns that $CDE(x,x',Y)$ is not identifiable, then it was impossible to determine this at any earlier iteration of \textbf{LocPC-CDE}.
\end{restatable}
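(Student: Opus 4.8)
The plan is to argue by contradiction: suppose \textbf{LocPC-CDE} halts at iteration $h^\star$ declaring non-identifiability (via the NOC stopping rule of Definition~\ref{def:orientability} and Theorem~\ref{corollary:identifiability_CDE}), but that already at some earlier iteration $h < h^\star$ the information accessible to the algorithm sufficed to conclude non-identifiability. First I would pin down exactly what ``information accessible at iteration $h$'' means: by the correctness of \textbf{LocPC} (Theorem~\ref{theorem:faithfulness_implications}), after the $h$-th expansion the algorithm holds precisely the LEG $\mathcal{L}^{Y,h}$ (restricted to the nodes it has added to $\mathbb D$), and nothing more — it has only performed the local CI tests encoded in $LMEC(Y,h,\mathcal G)$. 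So ``could have concluded non-identifiability at iteration $h$'' must mean: every DAG $\mathcal G'$ consistent with $\mathcal{L}^{Y,h}$ (i.e.\ with $\mathcal G' \in LMEC(Y,h,\mathcal G)$) has $CDE(x,x',Y)$ non-identifiable. The crux is then to show the contrapositive of the stopping behaviour: if \emph{no} subset $\mathbb D \ni Y$ of $NeHood(Y,h,\mathcal G)$ satisfies NOC in $\mathcal{L}^{Y,h}$ — which is exactly why \textbf{LocPC-CDE} did \emph{not} stop at iteration $h$ — then there exists $\mathcal G' \in LMEC(Y,h,\mathcal G)$ in which $CDE(x,x',Y)$ \emph{is} identifiable, equivalently (by \cite[Theorem 5.4]{Flanagan_2020} applied to the essential graph of $\mathcal G'$) in which every edge incident to $Y$ is orientable. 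That would show the algorithm genuinely could not have decided earlier.

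The key steps, in order: (1) Characterize, for a fixed $h$, which edges into $Y$ are forced to remain unoriented across the whole LMEC. I expect this to be exactly the obstruction NOC detects: the undirected and $\arrowdoublebar$ edges leaving a ``closed'' set $\mathbb D$ cannot be oriented inward using only local $d$-separations, and conversely any edge not trapped in such a set gets oriented in \emph{some} member of the LMEC. I would prove the forward direction using the structural characterization in Theorem~\ref{theorem:lmec} (same local UCs, same ``no non-collider'' patterns) — if $\mathbb D$ satisfies NOC, the boundary edges can be flipped without changing any of the invariants that define the LMEC, so both orientations occur, and in particular Meek's rules never force them. (2) For the converse, assume no such $\mathbb D$ exists; I would then exhibit a concrete DAG in $LMEC(Y,h,\mathcal G)$ — or more precisely argue that the orientation propagation (UCs detected in the $h$-neighborhood plus Meek's rules plus the NNC rule, exactly the steps building the LEG in Theorem~\ref{theorem:leg}) orients all of $Y$'s edges — because the failure of NOC for \emph{every} $\mathbb D$ means there is no ``barrier'' stopping the orientation wave from reaching $Y$. (3) Conclude: since \textbf{LocPC-CDE} stopped at $h^\star$, for all $h<h^\star$ no subset satisfied NOC, so by (2) the CDE is identifiable in some LMEC member at each such $h$, hence non-identifiability was not deducible before $h^\star$; this is precisely the claimed optimality.

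I expect step~(1)/(2) — the exact equivalence ``NOC fails for all $\mathbb D$ $\iff$ the CDE-relevant edges are orientable in some LMEC member'' — to be the main obstacle, and in particular the converse direction~(2). The delicate point is that ``orientable'' must be read relative to what a \emph{local} procedure can see: an edge into $Y$ might be orientable in the \emph{global} essential graph yet not via the LEG's limited orientation rules, so I must be careful to use only the orientation machinery of Theorem~\ref{theorem:leg} (local UCs, Meek restricted to the $h$-neighborhood, NNC). I would handle this by an induction on the propagation of orientations, showing that whenever the wave stalls before reaching $Y$, the set of nodes it has reached (together with $Y$) forms a set $\mathbb D$ with no unoriented outgoing edge and at most one outgoing $\arrowdoublebar$ — i.e.\ it satisfies NOC — contradicting the hypothesis. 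A secondary subtlety is bookkeeping the ``no redundant tests'' / information-reuse aspect of \textbf{LocPC-CDE}: I must confirm that the state after the $h$-th expansion is genuinely $\mathcal{L}^{Y,h}$ and not something weaker, which follows from Theorem~\ref{theorem:faithfulness_implications} but should be stated explicitly so that ``impossible to determine at an earlier iteration'' is well-defined.
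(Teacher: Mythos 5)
Your reading of ``impossible to determine'' is stronger than what the paper proves, and the step you yourself flag as the main obstacle --- the converse ``if no set containing $Y$ satisfies NOC in $\mathcal{L}^{Y,h}$, then some member of $LMEC(Y,h,\mathcal G)$ has an identifiable CDE'' --- is neither established by your sketch nor needed (nor claimed) by the paper. The specific argument you propose for it fails: you suggest showing that whenever the orientation wave (local UCs, Meek, NNC) stalls before reaching $Y$, the set of nodes it has reached, together with $Y$, satisfies NOC, contradicting the hypothesis. That implication is false, and it is false precisely in the situation the theorem addresses: NOC (Definition~\ref{def:orientability}) can fail solely because of \emph{undirected} edges crossing the boundary of the explored region, whose status is still undetermined at hop $h$ --- which is exactly why \textbf{LocPC-CDE} keeps exploring --- while the wave has nonetheless stalled at hop $h$ and, by the very premise of the theorem, never ends up orienting all of $Y$'s edges (the algorithm later finds a NOC set and the CDE is not identifiable). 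Your alternative phrasing, that failure of NOC for every $\mathbb D$ means the propagation ``orients all of $Y$'s edges,'' would likewise contradict the premise that the algorithm continued past hop $h$ and finally declared non-identifiability. So step (2), as sketched, would prove a false statement, and the strong semantic equivalence on which your whole plan rests (a completeness property of NOC) is left unsupported; the paper itself never asserts it and even concedes elsewhere that the LEG characterization is incomplete.

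The paper's actual proof is much more modest and avoids this entirely. It reads ``could not determine earlier'' relative to the algorithm's own stopping rule: if \textbf{LocPC-CDE} declares non-identifiability at iteration $\widehat h$, then at iteration $\widehat h-1$ NOC was not satisfied, and --- as shown in the proof of Theorem~\ref{th:orientability} --- when NOC is not satisfied it remains \emph{possible} that an edge incident to nodes of $\mathbb D$ becomes oriented as the discovered region grows, so declaring non-identifiability at $\widehat h-1$ would not have been sound. Only the already-proven direction (NOC implies permanent non-orientability, Theorem~\ref{th:orientability}, hence non-identifiability, Theorem~\ref{corollary:identifiability_CDE}) is used; no existence of an identifiable member of the LMEC is constructed. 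If you insist on your stronger epistemic reading, you would have to exhibit, for each earlier hop, a DAG consistent with all the local CI facts gathered so far whose essential graph orients every edge at $Y$ --- a genuinely different and harder claim that your current plan does not deliver.
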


\section{Experiments}\label{sec:exp}
In this section, we empirically validate our theoretical results on simulated data as well as on real data.
Details about source code, data-generative processes, and data availability related to the experiments are deferred to Appendix~\ref{appendix:experiments}. 

\subsection{Synthetic data}

This section evaluates the \textbf{LocPC-CDE} algorithm by comparing it to the PC algorithm~\citep{Spirtes_2000}, to CMB~\citep{Gao_2015} and MBbyMB~\citep{wang2014discovering} designed to discovered every direct causes and effects around the target node $(Y)$, and to LDECC~\citep{Gupta_2023}, initially designed for local causal discovery of the total effect when targeting the treatment node but usable to identify a CDE when targeting the outcome node.

\begin{figure}[t!]
    \centering
    \begin{minipage}{.45\textwidth}
    \begin{subfigure}{}{Identifiable CDEs.}
        \centering
        
    \includegraphics[width=\linewidth]{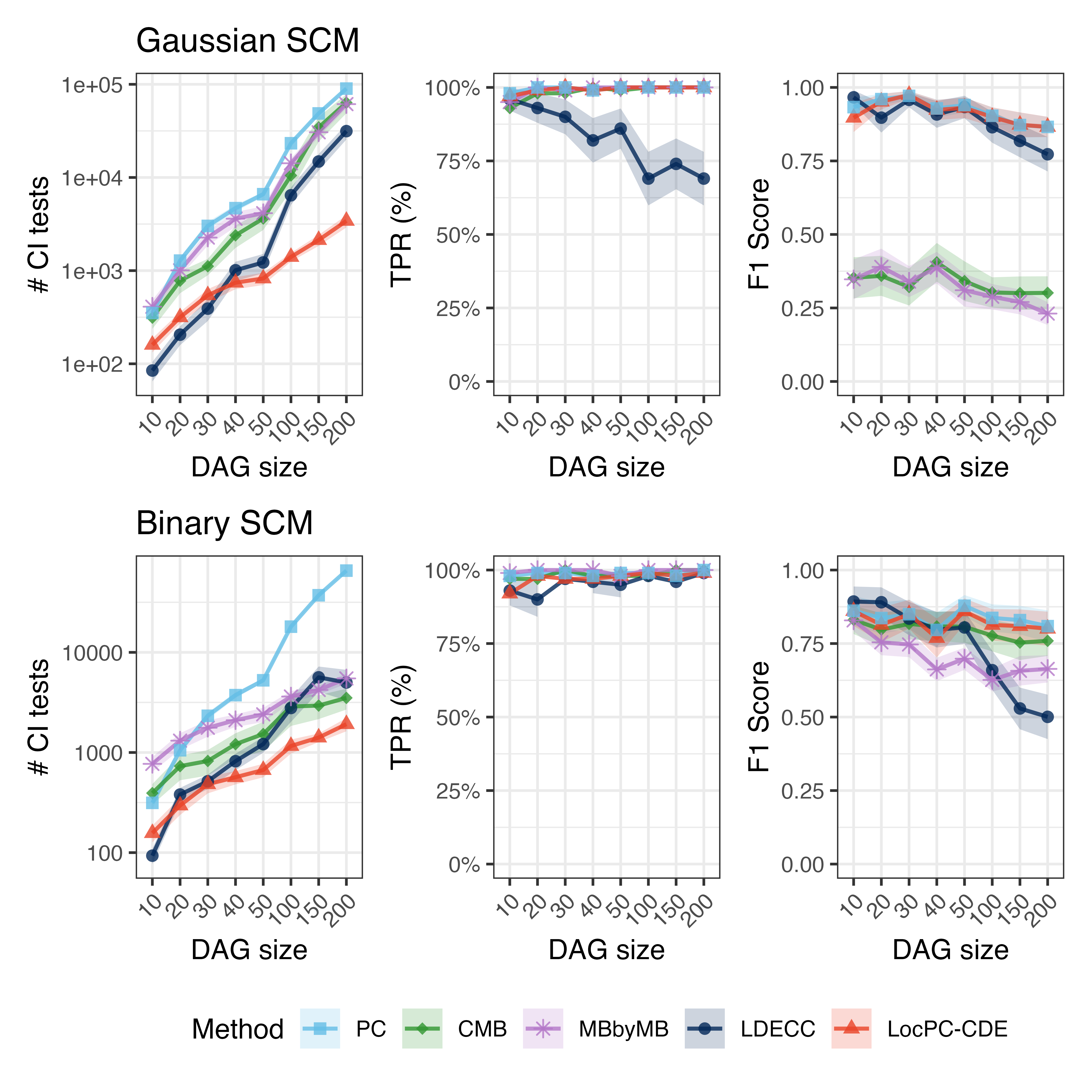}
        \label{fig:identifiable}
    \end{subfigure}
    \end{minipage}
    \hfill
    \begin{minipage}{.45\textwidth}
    \begin{subfigure}{}{Non-identifiable CDEs.}
        \centering
        
    \includegraphics[width=\linewidth]{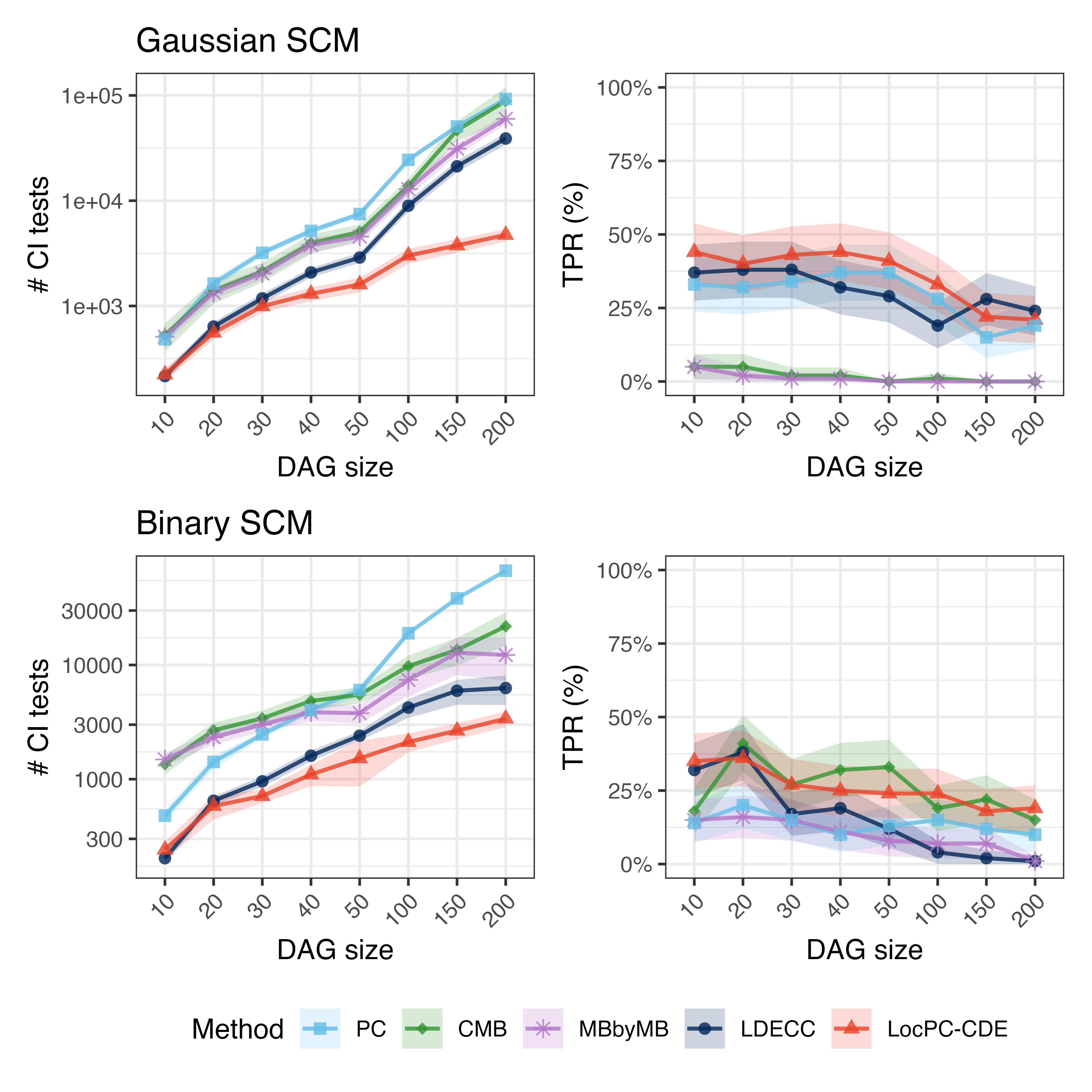}
        \label{fig:non_identifiable}
    \end{subfigure}
    \end{minipage}
    \caption{Empirical performance of \textbf{LocPC-CDE} across different graph sizes and SCM settings, compared to global discovery (PC) and state-of-the-art local discovery methods.}
    \label{fig:exp}
\end{figure}

We perform an evaluation on synthetic data, considering two settings: identifiable and non-identifiable CDEs based on independence tests.  
Algorithm performance is assessed for DAGs of size $|\mathbb V| \in [\![ 10, 200 ]\!]$. For each setting, 100 random Erdos-Rényi DAGs are simulated such that the CDE is either identifiable or not in the ground truth, and a sample of $n = 5000$ observations is generated from a linear Gaussian or binary SCM compatible with the DAG. Each algorithm is then applied using a Fisher-z CI test~\citep{fisher-1921} for continuous variables and a $G^2$ test~\citep{tsamardinos-2006} for binary variables, with standard significance level $\alpha = 0.05$. For each experiment, we measure: (1) the number of CI tests, (2) the proportion of DAGs where the CDE is correctly detected as identifiable or non-identifiable (true positive rate, TPR), and (3) in the identifiable case, the $F_1$ score between the estimated parents of the outcome $Y$ and the true parents, corresponding to a valid adjustment set for the CDE ($F_1=1$ indicates a perfect adjustment set). Results (mean $\pm 1.96\times$sd) are shown in Figure~\ref{fig:exp}. Regarding the number of CI tests, \textbf{LocPC-CDE} almost always achieves the best performance compared to other baselines, and systematically as DAG size increases. LDECC is only occasionally slightly better in small linear DAG settings, while PC unsurprisingly remains the baseline performing the most tests. Interestingly, Markov Blanket approaches (CMB and MBbyMB) sometimes perform more tests than PC in binary settings. For identifiable DAGs, the TPR is close to 1 for all baselines and settings, except for LDECC, which decreases in linear DAGs as size grows, indicating difficulty in orienting edges around the outcome in this setting. Regarding $F_1$ scores, Markov Blanket methods perform worse in linear settings, while \textbf{LocPC-CDE}, LDECC, and PC show similar performance, with LDECC degrading on large graphs. In non-identifiable DAGs, TPRs are relatively low (below 50\%) for all baselines, demonstrating a tendency to over-orient or over-prune edges, with relatively better performance for \textbf{LocPC-CDE}, PC, and LDECC in linear graphs, and for CMB and \textbf{LocPC-CDE} in binary graphs. In conclusion, this experiment shows the overall outperformance of \textbf{LocPC-CDE} in terms of CI tests as DAG size increases, while remaining competitive with other baselines across metrics, emphasizing its suitability for local causal discovery.


\subsection{Real-data}

We performed causal discovery on aggregated data from the French National Health Data System (SNDS), covering the incidence of various pathologies across 101 departments in 2023.
All baselines were run to identify the parents of diabetes using a $z$-Fisher correlation test. LDECC and MBbyMB found no parents after 56 and 1310 CI tests, respectively; CMB identified maternity in 1634 tests; while PC and \textbf{LocPC-CDE} both recovered chronic kidney disease and vascular pathologies in 1613 and 170 tests, respectively. Thus, \textbf{LocPC-CDE} matches PC while performing around 10 times fewer tests. The variables identified by PC and LocPC-CDE align with known associations~\citep{kumar2023bidirectional,dekamin2022cvd_t2d}; however, due to the limited sample size and potential model assumption violations, formal causal relationships cannot be established from these aggregated data.


\section{Conclusion}
\label{sec:conc}


In this work, we addressed PC-style local causal discovery for identifying and orienting the portion of a DAG necessary and sufficient to estimate a given CDE. Our main contributions are the characterization of potential spurious edges in PC-style local discovery, the formalization of the LMEC, and the introduction of the LEG to represent the information recoverable from local $d$-separations. Under standard causal sufficiency and a \textit{local} faithfulness assumptions, we propose \textbf{LocPC} to recover the LEG and 
\textbf{LocPC-CDE} for CDE estimation, leveraging the LMEC characterization to define a stopping criterion. We prove correctness and demonstrate efficiency of \textbf{LocPC-CDE} on synthetic and real data.

We emphasize that when estimating the CDE using the adjustment set detected by \textbf{LocPC-CDE}, caution is needed to avoid \textit{double dipping}, i.e., using the same data for discovery and estimation, which can bias confidence intervals~\citep{Gradu03042025}. A simple remedy is to split the data into disjoint discovery and estimation sets. Alternatively, repeating causal discovery with noise injection in the test statistics can mitigate this bias, 
but requires asymptotic assumptions on the CI test statistics~\citep{chang2025postselection}.
Moreover, while this work focuses on CDE identification, the findings of this paper can be applied to other targets requiring edge orientations in the neighborhood of a target. For instance, for the total effect, orienting edges around the treatment allows identification via its parent set as a valid adjustment set. However, in such cases, the stopping criterion and completeness of \textbf{LocPC-CDE} are not guaranteed: an undirected edge connected to the treatment does not necessarily prevent identifiability~\citep{Perkovic_2018}. 

This work has two main limitations: the LEG characterization is incomplete (a  LEG may correspond to different LMECs) and both \textbf{LocPC} and \textbf{LocPC-CDE} assume causal sufficiency. Relaxing the causal sufficiency assumption for local constraint-based causal discovery, in the spirit of the FCI algorithm and partial ancestral graph representations~\citep{Spirtes_2000,Zhang_2008}, is a promising direction for future work. 



\subsection*{Acknowledgement}
We thank Séhane Bel-Houari Durand for several discussion about the complexity of the algorithm, and Simon Ferreira for insightful discussions about descendant inducing paths. This work was supported by the CIPHOD project (ANR-23-CPJ1-0212-01) and by funding from the French government managed by the National Research Agency (ANR) under the France 2030 program (ANR-23-IACL-0007).

\bibliographystyle{plainnat}
\bibliography{references.bib}

\newpage
\appendix

\section{Proofs}
\label{appendix:proofs}

\subsection{Proof of Theorem~\ref{prop:spurious}}

We introduce and prove a preliminary lemma.

\begin{lemma}
\label{lemma:DINe}
Let $\mathcal{G} = (\mathbb{V}, \mathbb{E})$ be a DAG, $A \in \mathbb{V}$ and $B \notin Ne(A,\mathcal{G})$. $(A \nindep B \mid \mathbb{Z})_\mathcal{G}$ for all $\mathbb{Z} \subseteq Ne(A,\mathcal{G})$ if and only if $B\in DINe(A,\mathcal G)$.
\end{lemma}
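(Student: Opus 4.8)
\textbf{Proof plan for Lemma~\ref{lemma:DINe}.}

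The plan is to prove both directions by analyzing paths between $A$ and $B$ and how conditioning on subsets of $Ne(A,\mathcal{G})$ affects their status. Throughout, write $\mathbb{N} := Ne(A,\mathcal{G})$. The key observation is that any path $\pi$ from $A$ to $B$ first leaves $A$ through some neighbor, and the nodes of $\mathbb{N}$ lying on $\pi$ split it into segments; a clever choice of $\mathbb{Z} \subseteq \mathbb{N}$ can block $\pi$ precisely when $\pi$ is \emph{not} a DIP relative to its induced ordered list $\mathbb{L}$.

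For the direction ``$B \in DINe(A,\mathcal{G}) \Rightarrow (A \nindep B \mid \mathbb{Z})_{\mathcal{G}}$ for all $\mathbb{Z} \subseteq \mathbb{N}$'': let $\pi$ be a DIP from $A$ to $B$ relative to $\mathbb{L} = (L_1,\dots,L_k)$, so $C(\pi) = \mathbb{L}\setminus\{A,B\}$ and $L_{i+1} \in De(L_i,\mathcal{G})$ for each $i$. Fix an arbitrary $\mathbb{Z} \subseteq \mathbb{N}$. I must show $\pi$ is not blocked by $\mathbb{Z}$. By Definition~\ref{def:dip}, the only nodes of $\pi$ that can possibly be in $\mathbb{N}$ are exactly the $L_i$ (since $\mathbb{L}$ collects $\{A,B\}\cup(\mathbb{N}\cap V(\pi))$); every non-collider on $\pi$ is therefore \emph{not} in $\mathbb{N}$, hence not in $\mathbb{Z}$, so no non-collider triple is blocked. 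For a collider $L_i$ (with $2\le i\le k-1$), the chain $L_i \to \cdots \to L_{i+1}$ of descendants shows $L_{i+1}$, and inductively $L_k = B$... but $B$ need not be in $\mathbb{Z}$; instead note each collider $L_i$ has $L_{i+1} \in De(L_i,\mathcal{G})$, and walking this chain to the end gives $B \in De(L_i,\mathcal{G})$ — this does not immediately help either. The correct argument: a collider $L_i$ is ``activated'' iff some node of $De(L_i,\mathcal{G})\cup\{L_i\}$ is in $\mathbb{Z}$. Here is the subtlety — if $\mathbb{Z} = \emptyset$ all colliders are blocked, so actually the claim as I stated it is too strong unless $k = 2$ (the path is collider-free, i.e.\ directed-ish). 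I will re-read: in fact when $C(\pi)=\emptyset$, $\mathbb{L}=(A,B)$ with $B\in De(A,\mathcal{G})$, a directed path, never blocked by any $\mathbb{Z}$ (no colliders, no middle nodes in $\mathbb{N}$ except none). When $C(\pi)\ne\emptyset$: each collider $L_i$ satisfies $L_{i+1}\in De(L_i)$ and also we need $L_i \in De(L_{i-1})$? No — re-reading Definition~\ref{def:dip}, $L_{i+1}\in De(L_i)$ for \emph{all} $i$, so in particular $L_i \in De(L_{i-1}) \subseteq De(A)$ for all $i$, and $B = L_k \in De(L_i)$ for every $i$. Now the right move for arbitrary $\mathbb{Z}$: if $\mathbb{Z}\cap\mathbb{N} = \emptyset$ we need another open path — so I suspect the lemma actually requires considering all paths, and the DIP is the witness. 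Let me restructure: $B\in DINe(A,\mathcal{G})$ means \emph{there exists} a DIP. I will show that for each $\mathbb{Z}$, \emph{this same} DIP $\pi$ is open, by checking: non-colliders of $\pi$ avoid $\mathbb{N}\supseteq\mathbb{Z}$ hence are never in $\mathbb{Z}$ — good; colliders $L_i$ of $\pi$ are all ancestors of each other up the chain so $L_{i} \in De(L_{i-1})\subseteq\cdots$, but to activate $L_i$ we need $De(L_i)\cap\mathbb{Z}\ne\emptyset$, which fails when $\mathbb{Z}=\emptyset$. So \emph{if $\pi$ has a collider, $\pi$ is blocked by $\emptyset$}, contradiction with the claim. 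Hence I conclude the intended reading must be: the quantifier ``for all $\mathbb{Z}$'' is genuine and the DIP must have \emph{no} colliders, OR — more likely — there is an additional structural fact (e.g.\ every collider $L_i$ on a DIP has $B\in De(L_i)$ and $B\notin\mathbb{N}$, but also the \emph{next neighbor} chain keeps the path open differently). I would resolve this by a careful case split and, if needed, by strengthening to: among all paths from $A$ to $B$, if none is a DIP then some $\mathbb{Z}$ blocks all of them simultaneously.

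For the converse ``($A\nindep B\mid\mathbb{Z}$) for all $\mathbb{Z}\subseteq\mathbb{N}$ $\Rightarrow$ $B\in DINe(A,\mathcal{G})$'': I argue contrapositively. Suppose no DIP from $A$ to $B$ exists. I construct a single blocking set $\mathbb{Z}^\star\subseteq\mathbb{N}$: take $\mathbb{Z}^\star$ to be a suitable subset of $\mathbb{N}$ — a natural candidate is $\mathbb{N}\cap An(\{A,B\},\mathcal{G})$ or a minimal $d$-separator. For any path $\pi$ from $A$ to $B$: since $\pi$ is not a DIP relative to its induced list $\mathbb{L}=(L_1,\dots,L_k)$, either (i) some collider of $\pi$ is not in $\mathbb{N}$ (hence not in $\mathbb{Z}^\star$) and moreover has no descendant in $\mathbb{Z}^\star$ — then $\pi$ is blocked at that collider; or (ii) some consecutive pair $L_i, L_{i+1}$ fails $L_{i+1}\in De(L_i,\mathcal{G})$ — then I claim the segment of $\pi$ between them, together with the fact that $L_i\in\mathbb{N}$ is available to put in $\mathbb{Z}^\star$, lets me block $\pi$ via a non-collider at $L_i$ or $L_{i+1}$; or (iii) some node of $\mathbb{N}\cap V(\pi)$ is not a collider — then it is a non-collider middle node, and putting it in $\mathbb{Z}^\star$ blocks $\pi$. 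The work is to show one fixed $\mathbb{Z}^\star$ handles \emph{all} paths at once without accidentally opening a collider; this is the standard tension in $d$-separation arguments and I would handle it by choosing $\mathbb{Z}^\star$ via the moralization/ancestral-graph criterion (e.g.\ $\mathbb{Z}^\star = \mathbb{N}\setminus De(A,\mathcal{G})$, the non-descendant neighbors, which are exactly the parents-side neighbors and cannot be activated colliders on a path out of $A$).

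\textbf{Main obstacle.} The crux is the simultaneous-blocking argument in the converse: exhibiting one $\mathbb{Z}^\star\subseteq Ne(A,\mathcal{G})$ that blocks every non-DIP path without activating any collider, which requires correctly identifying \emph{why} a path fails to be a DIP (a non-collider in $\mathbb{N}$, a collider outside $\mathbb{N}$ with no descendant leverage, or a broken descendant chain) and matching each failure mode to a feature of the fixed $\mathbb{Z}^\star$. Equivalently, in the forward direction, the obstacle is the apparent clash with $\mathbb{Z}=\emptyset$, which forces a careful reading of Definition~\ref{def:dip} — I expect the resolution is that a DIP's colliders, being linked by the descendant chain $L_{i+1}\in De(L_i)$ all the way to $B\notin\mathbb{N}$, together with the requirement that the conditioning set is confined to $\mathbb{N}$, means one must instead quantify over paths and show the DIP is the \emph{robust} witness that survives every admissible $\mathbb{Z}$; pinning down this quantifier structure precisely is where the real care lies.
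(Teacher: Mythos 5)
There is a genuine gap in both directions. For the forward direction ($B\in DINe(A,\mathcal G)\Rightarrow$ dependence), you correctly ran into the clash with $\mathbb Z=\emptyset$ but never resolved it, and the resolution is the heart of the lemma: the witness of dependence is \emph{not} a single fixed path, it changes with $\mathbb Z$. A DIP $\pi$ relative to $\mathbb L=(A,\dots,L_{k-1},B)$ gives you two objects at once: (i) a directed chain $A\to L_{k-1}\to\cdots\to B$ (because $L_{k-1}\in Ne(A,\mathcal G)$ and $B\in De(L_{k-1},\mathcal G)$), and (ii) the collider path $\widetilde\pi: A\to L_{k-1}\leftarrow\cdots\to B$ obtained from $\pi$, whose internal non-colliders are \emph{not} neighbors of $A$ (by Definition~\ref{def:dip} the neighbors of $A$ on $\pi$ are exactly its colliders). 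Now case-split on $\mathbb Z\subseteq Ne(A,\mathcal G)$: if $L_{k-1}\in\mathbb Z$, the collider on $\widetilde\pi$ is activated and no other node of $\widetilde\pi$ can lie in $\mathbb Z$, so $\widetilde\pi$ is open; if $L_{k-1}\notin\mathbb Z$, the directed chain through $L_{k-1}$ (handled recursively if it meets further neighbors of $A$) cannot be blocked within $Ne(A,\mathcal G)$ without conditioning on $L_{k-1}$. This $\mathbb Z$-dependent choice of witness is exactly the paper's argument; your sketch stops at "pinning down this quantifier structure is where the real care lies," which is precisely the step that must be supplied.

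For the converse, your concrete proposal $\mathbb Z^\star=Ne(A,\mathcal G)\setminus De(A,\mathcal G)$ is not just unproven, it is false as a blocking set: take $A\to C\to B$ with no other edges; then $B\notin DINe(A,\mathcal G)$ (the path has no collider, so it is not a DIP), yet your $\mathbb Z^\star$ is empty and does not separate $A$ from $B$ — any separator must contain the child $C$, i.e.\ a descendant-side neighbor. The paper does not exhibit one fixed separator by a moralization recipe; it argues contrapositively in structure but differently in substance: from the hypothesis that no $\mathbb Z\subseteq Ne(A,\mathcal G)$ separates, it first deduces $B\in De(A,\mathcal G)$ (otherwise $Pa(A,\mathcal G)$ would separate), then notes backdoor paths are blocked by the parent adjacent to $A$, so the only obstructions are directed paths $A\to\cdots\to B$; blocking such a path forces conditioning on a neighbor of $A$ lying on it, and the hypothesis then forces the existence of a directed path on which \emph{every} such neighbor, once conditioned on, activates another path where it is a collider and which has no other natural block — and that activated path is by definition a DIP. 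Your failure-mode taxonomy (i)–(iii) gestures at this, but without the "$B\in De(A,\mathcal G)$, reduce to directed paths, then extract the DIP from the forced activation" chain, the argument does not go through.
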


\begin{proof}
We first prove $(\Leftarrow)$. Suppose $B\in DINe(A,\mathcal G)$. Then there exists a DIP $\pi$ between $A$ and $B$ relative to $\mathbb L=(L_1=A,L_2,\cdots,L_{k-1},B=L_k)$. By definition of a DIP, there exist a directed path $A\to \cdots \to L_{k-1}\leftarrow \cdots\to B$, and $L_{k-1}\in Ne(A,\mathcal G)$. Hence there exists a path $\widetilde\pi$ of the form $A\to L_{k-1}\leftarrow\cdots\to B$ which is also a DIP so whose unique collider is $L_{k-1}$. Since $\widetilde \pi$ is a DIP, there exists a chain $A\to L_{k-1}\to \cdots \to B$ which requires conditioning on $L_{k-1}$ to be blocked by a node in $Ne(A,\mathcal G)$. But conditioning on $L_{k-1}$ activates the collider of $\widetilde\pi$ and $L_{k-1}$ is the only neighbor of $A$ on $\widetilde\pi$, so no other node in $\mathbb Z\subseteq Ne(A,\mathcal G)$ can block this path. Therefore $\widetilde\pi$ remains active for every $\mathbb Z\subseteq Ne(A,\mathcal G)$, implying $(A\nindep B\mid\mathbb Z)_\mathcal G$ for all such $\mathbb Z$.

We now prove $(\Rightarrow)$. Let $B\notin Ne(A,\mathcal G)$ be such that $(A \nindep B \mid \mathbb Z)_\mathcal G$ for all $\mathbb Z \subseteq Ne(A,\mathcal G)$. First, there must exist an active path between $A$ and $B$, otherwise $(A\indep B \mid \emptyset)_\mathcal G$, which contradicts the assumption since $\emptyset \subseteq Ne(A,\mathcal G)$. Second, $B\in De(A,\mathcal G)$. Indeed, if $B\notin De(A,\mathcal G)$, then by $d$-separation, $(A\indep B \mid Pa(A,\mathcal G))_\mathcal G$. Hence, any potential active path between $A$ and $B$ is either a directed path $A\to \cdots \to B$ or a backdoor path $A\leftarrow \cdots \to B$. All backdoor paths can be blocked by conditioning on the neighbor of $A$, which cannot activate any other path between $A$ and $B$ since it cannot be a collider. Therefore, the only paths preventing $d$-separation of $A$ and $B$ by a subset of $Ne(A,\mathcal G)$ are directed paths of the form $A\to \cdots \to B$ which cannot be blocked without activating another path. Indeed, such directed paths can always be blocked by conditioning on any neighbor of $A$ along the path. However, if $A$ and $B$ cannot be $d$-separated by any subset of $Ne(A,\mathcal G)$, there exists at least one directed path, denoted $\pi_C$, such that for every $L_i\in Ne(A,\mathcal G)\cap V(\pi_C)$, conditioning on $L_i$ activates another path $\widetilde\pi$, i.e., $L_i$ is a collider on $\widetilde\pi$ and $\widetilde\pi$ is not blocked naturally by any other collider. By definition, this path $\widetilde\pi$ is then a DIP, which implies $B\in DINe(A,\mathcal G)$.
\end{proof}

We restate Theorem~\ref{prop:spurious} and then we prove it.

\mypropositionone*

\begin{proof}
    Firstly, remark that for any $N \in Ne(D,\mathcal G)$, there exists no set $\mathbb Z \subseteq \mathbb V \setminus \{D,N\}$ such that $(D \indep N \mid \mathbb Z)_\mathcal G$; hence, by definition of $\mathbb C_s$, we have $N \in \mathbb C_s$ for all $s \ge 0$. In other words, all neighbors of $D$ are always contained in $\mathbb C_s$. Then, for any $A \notin Ne(D,\mathcal G)$, if $A \in SNe(D,\mathcal G)$, then for $s = |\mathbb V|-2$, there exists no $\mathbb Z \subseteq \mathbb C_{s-1}$ such that $(D \indep A \mid \mathbb Z)_\mathcal G$; and in particular, by the previous remark, no $\mathbb Z \subseteq Ne(D,\mathcal G)$. By Lemma~\ref{lemma:DINe}, we conclude that for any $A\in SNe(D,\mathcal G)$, $A\in DINe(D,\mathcal G)$. Finally, by definition, every descendant inducing neighbors of $D$ is a descendant $D$ so $DINe(D,\mathcal G)\subset De(D,\mathcal G)$ (the inclusion is strict since children of $D$\footnote{That is, nodes $C$ \st $D\to C$} are descendants of $D$ which cannot be descendants inducing neighbors). 
\end{proof}

\subsection{Proof of Theorem~\ref{theorem:lmec}}

We restate Theorem~\ref{theorem:lmec} and then we prove it.

\mytheoremone*

\begin{proof}
Let $\mathcal G_i,\mathcal G_j\in LMEC(Y,h,\mathcal G)$. We prove every item of the theorem.:

\begin{enumerate}

\item[Item 1:] We proceed by induction on $h$. 

\begin{itemize}

    \item[\textit{Base case.}] For $h=0$, by definition:
    $
        NeHood(Y,0,\mathcal G_i)= \{Y\} = NeHood(Y,0,\mathcal G_j) .
    $

    \item[\textit{Induction hypothesis.}] Assume for some $h \ge 1$, 
    \[
        \mathcal H(h-1): \quad NeHood(Y,h-1,\mathcal G_i) = NeHood(Y,h-1,\mathcal G_j).
    \]

    \item[\textit{Inductive step.}] We show that the equality holds for $h$.
    Suppose, for the sake of contradiction, that there exists a node $D$ such that
    $
        D \in NeHood(Y,h,\mathcal G_i) \quad \text{and} \quad D \notin NeHood(Y,h,\mathcal G_j).
    $
    By definition of the neighborhood and $\mathcal H(h-1)$, there exists $A \in NeHood(Y,h-1,\mathcal G_i)=NeHood(Y,h-1,\mathcal G_j)$ such that 
    $
        D \in Ne(A,\mathcal G_i)$ and $D \notin Ne(A,\mathcal G_j).
    $
    Since $D \in Ne(A,\mathcal G_i)$, there exists no $\mathbb Z\subset \mathbb V$ such that $(A \indep D \mid \mathbb Z)_{\mathcal G_i}$.  
    On the other hand, for $D \notin Ne(A,\mathcal G_j)$, two cases arise:
    \begin{enumerate}
        \item $D \notin SNe(A,\mathcal G_j)$: then by definition of spurious neighbors, $\exists s^*=1,...,|\mathbb V|-2$ such that $D\notin \mathbb C_{s^*}(A,\mathcal G_j)$ meaning that $\exists \mathbb Z\subseteq \mathbb C_{s^*-1}(A,\mathcal G_i)$ of size $|\mathbb Z|=s^*$ satisfying $(D\indep A\mid \mathbb Z)_{\mathcal G_j}$. This contradicts the fact that $\mathcal G_i$ and $\mathcal G_j$ belongs to the same LMEC. 
        \item $D \in SNe(A,\mathcal G_j)$: then by Theorem~\ref{prop:spurious} $D \in De(A,\mathcal G_j)$ and for any $s\ge 0$, by taking $\mathbb Z := Pa(D,\mathcal G_j) \subseteq Ne(D,\mathcal G_j)\subseteq \mathbb C_{s-1}(D,\mathcal G_j)$ (see proof of Theorem~\ref{prop:spurious} for this last inclusion), we have $(A \indep D \mid \mathbb Z)_{\mathcal G_j}$, also a contradiction.
    \end{enumerate}

    In any case, we reach a contradiction, which proves that $\mathcal H(h-1)\implies \mathcal H(h)$ and thus for all $h$:
    $
        NeHood(Y,h,\mathcal G_i) = NeHood(Y,h,\mathcal G_j).
    $

\end{itemize}

From this point onward, whenever we write $NeHood(Y,h,\mathcal{G}_i)$, it is understood to denote the same set as $NeHood(Y,h,\mathcal{G}_j)$.

\item[Item 2:] The proof arguments for this item are quite similar to those used in the inductive step of item 1.  
We proceed by contradiction by assuming that there exist nodes $D, A \in NeHood(Y,h,\mathcal G_i)$ such that 
$
    D \in Ne(A,\mathcal G_i) \quad \text{and} \quad D \notin Ne(A,\mathcal G_j).
$
Since $D \in Ne(A,\mathcal G_i)$, there exists no $\mathbb Z \subset \mathbb V$ such that $(A \indep D \mid \mathbb Z)_{\mathcal G_i}.$
On the other hand, $D \notin Ne(A,\mathcal G_j)$ implies two possibilities:  
\begin{enumerate}
    \item $D \notin SNe(A,\mathcal G_j)$: then by definition of spurious neighbors, $\exists s^*\in\{1,...,|\mathbb V|-2\}$ such that $D\notin \mathbb C_{s^*}(A,\mathcal G_j)$ implying that $\exists \mathbb Z\subseteq \mathbb C_{s^*-1}(A,\mathcal G_i)$ of size $|\mathbb Z|=s^*$ satisfying $(D\indep A\mid \mathbb Z)_{\mathcal G_j}$, which is a contradiction.
    \item $D \in SNe(A,\mathcal G_j)$: by Theorem~\ref{prop:spurious}, $D$ is a descendant of $A$, and for any $s\ge 0$, by taking
    \(\mathbb Z = Pa(D,\mathcal G_j) \subseteq Ne(D,\mathcal G_j)\subseteq \mathbb C_s(D,\mathcal G_j)\), we have $ (A \indep D \mid \mathbb Z)_{\mathcal G_j},$
    again a contradiction.
\end{enumerate}

\item[Item 3: ] Let $D \in NeHood(Y,h,\mathcal G_i)$. First, we prove by induction that 
$
\forall s \ge 0, \mathbb C_s(D,\mathcal G_i) = \mathbb C_s(D,\mathcal G_j).
$

\begin{itemize}
\item[\textit{Base case.}]
For $s=0$, by definition we have
$
\mathbb C_0(D,\mathcal G_i) = \mathbb V \setminus \{D\} = \mathbb C_0(D,\mathcal G_j).
$

\item[\textit{Induction hypothesis.}]
Assume that for some $s \ge 1$,
\[
\mathcal H(s-1): \quad \mathbb C_{s-1}(D,\mathcal G_i) = \mathbb C_{s-1}(D,\mathcal G_j).
\]

\item[\textit{Induction step.}]
We now show that the equality also holds for $s$:
\begin{align*}
\mathbb C_s(D,\mathcal G_i)
&= \Big\{
A \in \mathbb C_{s-1}(D,\mathcal G_i) \;\Big|\;
\nexists\, \mathbb Z \subseteq \mathbb C_{s-1}(D,\mathcal G_i)\setminus\{A\}: \;
|\mathbb Z| = s\land
(D \indep A \mid \mathbb Z)_{\mathcal G_i}
\Big\} \\[0.5em]
&= \Big\{
A \in \mathbb C_{s-1}(D,\mathcal G_j) \;\Big|\;
\nexists\, \mathbb Z \subseteq \mathbb C_{s-1}(D,\mathcal G_j)\setminus\{A\}: \;
|\mathbb Z| = s\land
(D \indep A \mid \mathbb Z)_{\mathcal G_i}
\Big\} \text{ by $\mathcal H(s-1)$} \\[0.5em]
&= \Big\{
A \in \mathbb C_{s-1}(D,\mathcal G_j) \;\Big|\;
\nexists\, \mathbb Z \subseteq \mathbb C_{s-1}(D,\mathcal G_j)\setminus\{A\}: \;
|\mathbb Z| = s\land
(D \indep A \mid \mathbb Z)_{\mathcal G_j}
\Big\} \text{by def.~\ref{def:lmec}} \\[0.5em]
&= \mathbb C_s(D,\mathcal G_j).
\end{align*}

Hence, $\mathcal H(s-1) \Rightarrow \mathcal H(s)$, and
$\forall s \ge 0, \quad \mathbb C_s(D,\mathcal G_i) = \mathbb C_s(D,\mathcal G_j)$.

\end{itemize}

Since $Ne(D,\mathcal G_i) \subseteq \mathbb C_s(D,\mathcal G_i)$ (see the proof of Theorem~\ref{prop:spurious}),  
and by definition of spurious neighbors,
$
\mathbb C_{|\mathbb V|-2}(D,\mathcal G_i) = Ne(D,\mathcal G_i) \cup SNe(D,\mathcal G_i),
$
it follows that
\[
\mathbb C_s(D,\mathcal G_i) = \mathbb C_s(D,\mathcal G_j)
\quad \Rightarrow \quad
Ne(D,\mathcal G_i) \cup SNe(D,\mathcal G_i)
= Ne(D,\mathcal G_j) \cup SNe(D,\mathcal G_j).
\]

\item[Item 4:] 
Let $D_1, D_2, D_3 \in NeHood(Y,h,\mathcal{G}_i)$. 
Without loss of generality, suppose there exists in $\mathcal{G}_i$ a UC of the form 
$D_1 \to D_2 \leftarrow D_3$, and we aim to show that the same configuration necessarily exists in $\mathcal{G}_j$. By definition of a UC, this implies the existence of a separating set 
$\mathbb{S}$ between $D_1$ and $D_3$, such that for some $s\in \{0,...,|\mathbb V|-2\}$,
$\mathbb{S} \subseteq Ne(D_1,\mathcal G_i)\subseteq \mathbb C_{s}(D_1,\mathcal G_i)$ or $\mathbb{S} \subseteq Ne(D_3,\mathcal G_i)\subseteq \mathbb C_{s}(D_3,\mathcal G_i)$, and 
$D_2 \notin \mathbb{S}$. From items~(1) and~(2) the same triple is unshielded in $\mathcal G_j$ and by definition of the LMEC, the same set $\mathbb S$ must satisfy $(D_1 \indep D_3 \mid \mathbb{S})_{\mathcal{G}_j}$. 
Assume, for contradiction, that this unshielded triple is not a collider in $\mathcal{G}_j$. 
Then it must be either a chain ($D_1 \to D_2 \to D_3$ or $D_1 \leftarrow D_2 \leftarrow D_3$) 
or a fork ($D_1 \leftarrow D_2 \to D_3$), both of which are active paths that can only be blocked 
by conditioning on the middle node $D_2$. 
This would imply $D_2 \in \mathbb{S}$, contradicting $D_2 \notin \mathbb{S}$. Therefore, the unshielded collider $D_1 \to D_2 \leftarrow D_3$ must also exist in $\mathcal{G}_j$.

\item[Item 5:]Assume that for all $W\in \mathbb W_D$, the triple $(D,A,W)$ \textbf{is not a non-collider} in $\mathcal G_i$, 
and that there exists $W\in \mathbb W_D$ such that $(D,A,W)$ \textbf{is a non-collider} in $\mathcal G_j$. 
We show that this assumption leads to a contradiction.

First, note that $W\in \mathbb W_D$
implies that, for some $s^*\ge 1$, there exists 
$\mathbb S\subseteq \mathbb C_{s^*-1}(D,\mathcal G_i)$ such that $|\mathbb S|=s^*$ and $(D\indep W\mid \mathbb S)_{\mathcal G_i}.$
From the previously established results and by the definition of the LMEC, 
\textbf{the same set} $\mathbb S$ also satisfies $(D\indep W\mid \mathbb S)_{\mathcal G_j}$.  
Moreover, since $(D,A,W)$ forms a non-collider triple in $\mathcal G_j$, 
this implies that $A\in \mathbb Z$ for every $\mathbb Z$  
$(D\indep W\mid \mathbb Z)_{\mathcal G_j}$; otherwise, $D$ and $W$ could not be $d$-separated. Then, in particular, our assumptions imply that $A\in \mathbb S$. 

We now examine all possible configurations of the \textbf{active} triple $(D,A,W)$ in $\mathcal G_i$, 
and show that our assumption always leads to a contradiction.

\begin{enumerate}
\item \textit{Case 1:} $A\in Ne(D,\mathcal G_i)$ and $W\in Ne(A,\mathcal G_i)$.\\
Since $(D,A,W)$ is \textbf{not} a non-collider in $\mathcal G_i$, the only configuration compatible with this situation
is the collider structure
$D\to A\leftarrow W.$
In this case, $A$ is a collider, and therefore, for any separating set $\mathbb Z$ between $D$ and $W$, 
we have $A\notin \mathbb Z$. Then, in particular, $A\notin \mathbb S$, 
which contradicts that $A\in \mathbb S$.

\item \textit{Case 2:} $A\in Ne(D,\mathcal G_i)$ and $W\notin Ne(A,\mathcal G_i)$.\\
Two subcases arise:
\begin{enumerate}
\item \emph{No active path between $D$ and $W$ passes through $A$.}\\
Then, there exists a separating set $\mathbb Z$ of $D$ and $W$ in $\mathcal G_i$ not containing $A$. Taking $\mathbb S=\mathbb Z$ leads to $A\notin \mathbb S$ and contradicts our assumption.
\item \emph{There exists an active path} $\pi = \langle D, A, B, \cdots, W\rangle$ \emph{containing $A$.}\\
In that case, the triple $(D,A,B)$ must form either a non-collider triple, 
otherwise the path $\pi$ would not be active.  
Two situations may occur:
  \begin{itemize}
  \item If $B\in Ne(D,\mathcal G_j)\cup SNe(D,\mathcal G_j)$, then $B\in \mathbb C_{|\mathbb V|-2}$, 
        implying $B\in \mathbb C_{s^*-1}$. The path can thus be blocked by conditioning on $B$ 
        instead of $A$, yielding a separating set $\mathbb S$ with $A\notin \mathbb S$, 
        which is contradictory.
  \item If $B\notin Ne(D,\mathcal G_j)\cup SNe(D,\mathcal G_j)$, 
        then, since $(D,A,B)$ must form a non-collider triple in $\mathcal G_j$, it directly contradicts our assumption that $\forall W\in \mathbb W_D$, $(D,A,W)$ is \textbf{not} a non-collider in $\mathcal G_i$.  
  \end{itemize}
In both subcases, we reach a contradiction.
\end{enumerate}

\item \textit{Case 3:} $A\notin Ne(D,\mathcal G_i)$.\\
Then $A\in SNe(D,\mathcal G_i)$, hence $A\in De(D,\mathcal G_i)$ by Theorem~\ref{prop:spurious}.  
Thus, any active path between $D$ and $W$ passing through $A$ is either a backdoor path that can be blocked by conditioning on the parents of $D$ excluding $A$, or a directed path
$D\to B\ \to C\to \cdots \to A\to \cdots \to W$.
Two possibilities arise on these directed paths:
\begin{itemize}
\item If $C\in Ne(D,\mathcal G_i)\cup SNe(D,\mathcal G_i)$, 
then the directed path between $D$ and $W$ passing through $A$ 
can be blocked by conditioning on $C$. 
If it is true for all of the directed paths, there exists a separating set $\mathbb S$ between $D$ and $W$ such that $A\notin \mathbb S$, contradiction.
\item If $C\notin Ne(D,\mathcal G_i)\cup SNe(D,\mathcal G_i)$, 
then $(D,B,C)$ forms a chain, which is impossible by assumption.
\end{itemize}
In both cases, a contradiction follows.
\end{enumerate}

\noindent
 In every possible configuration, the initial assumption leads to a contradiction. 
Hence, there cannot exist $W\in \mathbb W_D$ such that $(D,A,W)$ is not a non-collider in $\mathcal G_j$ 
if it is a non-collider in $\mathcal G_i$.
\end{enumerate}
\end{proof}

\subsection{Proof of Theorem~\ref{theorem:leg}}

We proceed to prove the theorem after having restated it.

\mytheoremtwo*

\begin{proof}
Items 1, 2, 3, 4 and 6 of Theorem~\ref{theorem:leg} follow directly from Theorem~\ref{theorem:lmec}, which ensures that the edges of types $(-)$, $(\to)$, and $(\arrowdoublebar)$ introduced by these items are consistent with Definition~\ref{def:leg}. Moreover, item 5 guarantees that no new unshielded collider common to all DAGs in the LMEC (as stated in Theorem~\ref{theorem:lmec}) is introduced and that no cycles are created ensuring that $\mathcal L^{Y,h}$ is a partially directed acyclic graph. Therefore, the directed edges added by these rules must also comply with Definition~\ref{def:leg}.
\end{proof}

\mycorollaryone*

\begin{proof}
If $\mathcal{G}_1$ and $\mathcal{G}_2$ belong to the same LMEC, then Theorem~\ref{theorem:lmec} ensures that all the structural features required for constructing the LEG, as defined in Theorem~\ref{theorem:leg}, are shared by both DAGs. Consequently, they have the same LEG.
\end{proof}

\subsection{Proof of Theorem~\ref{theorem:faithfulness_implications}}

\mytheoremthree*

\begin{proof}
We will show that, under the assumptions of Theorem~\ref{theorem:faithfulness_implications}, the inferred LEG $\widehat{\mathcal L}$ coincides with $\mathcal L^{Y,h}$ by proving that $\widehat{\mathcal L}$ satisfies all the conditions of Theorem~\ref{theorem:leg}. Throughout, we will indicate explicitly when the local faithfulness assumption is required. The causal sufficiency assumption is necessary whenever we consider that two non-adjacent nodes $N_1$ and $N_2$ in $\mathcal G$ possess a separating set.

First, we prove by induction on $s$ that, for any $D \in NeHood(Y,h,\mathcal G)$, the set of nodes connected to $D$ at iteration $s$ of the \textbf{LocPC}  algorithm, $\mathbb C_s(D,\widehat{\mathcal L})$, coincides with $\mathbb C_s(D,\mathcal G)$.

\textbf{Base case ($s=0$)}: By construction, \textbf{LocPC}  initially connects all edges to $D$, hence
\[
\mathbb C_0(D,\widehat{\mathcal L}) = \mathbb V \setminus \{D\} = C_0(D,\mathcal G).
\]

\textbf{Inductive hypothesis}: Assume that for some $s-1 \ge 0$, 
$
\mathbb C_{s-1}(D,\widehat{\mathcal L}) = C_{s-1}(D,\mathcal G).
$

\textbf{Inductive step}: For $s \ge 1$, \textbf{LocPC}  tests, for each node $N \in \mathbb C_{s-1}(D,\widehat{\mathcal L})$, whether there exists a subset $\mathbb Z \subseteq \mathbb C_{s-1}(D,\widehat{\mathcal L})$ of size $s$ such that
$
(D \indep N \mid \mathbb Z)_{\mathcal P}.
$
If such a set exists, the edge between $D$ and $N$ is removed from $\mathbb C_s(D,\widehat{\mathcal L})$. Under the assumption of \emph{local faithfulness} and with access to a CI oracle,
$
(D \indep N \mid \mathbb Z)_{\mathcal P} \iff (D \indep N \mid \mathbb Z)_{\mathcal G},
$
which implies
$
\mathbb C_s(D,\widehat{\mathcal L}) = C_s(D,\mathcal G).
$

Hence, by induction, the property holds for all $s \ge 0$.

We use this result to prove by induction on $h$ that, under the assumptions of the theorem, \textbf{LocPC}  correctly identifies the $h$-neighborhood of $Y$, that is,
$
NeHood(Y,h,\widehat{\mathcal L}) = NeHood(Y,h, \mathcal L^{Y,h}).
$

\textbf{Base case ($h=0$):}  
$
NeHood(Y,0,\widehat{\mathcal L}) = \{Y\} = NeHood(Y,0, \mathcal L^{Y,h}).
$

\textbf{Inductive hypothesis:} Assume that for $h-1 \ge 0$,
$
NeHood(Y,h-1,\widehat{\mathcal L}) = NeHood(Y,h-1, \mathcal L^{Y,h}).
$

\textbf{Inductive step:}  
For each $D \in NeHood(Y,h-1,\widehat{\mathcal L})$, we have shown that
$
\mathbb C_{|\mathbb V|-2}(D,\widehat{\mathcal L}) = C_{|\mathbb V|-2}(D,\mathcal G).
$
By definition of $\mathbb C_s$, this implies
$
Ne(D,\widehat{\mathcal L}) \cup SNe(D,\widehat{\mathcal L}) = Ne(D,\mathcal G) \cup SNe(D,\mathcal G)
$
at iteration $h-1$ (see the proof of Theorem~\ref{theorem:lmec} for details). At this iteration, the algorithm continues exploring adjacent nodes of $D$ that have not yet been discovered, i.e., nodes $N \in Ne(D,\widehat{\mathcal L}) \cup SNe(D,\widehat{\mathcal L})$ that are unvisited. If $N \in Ne(D,\mathcal G)$, no separating set exists and \textbf{LocPC}  will not remove the edge between $D$ and $N$. If $N \in SNe(D,\mathcal G)$, then $N \in De(D,\mathcal G)$ by Theorem~\ref{prop:spurious}, and by $d$-separation and the Markov property, there exists some $s = 1,\cdots,|\mathbb V|-2$ and a set $\mathbb Z \subseteq Pa(N,\mathcal G) \subseteq Ne(N,\mathcal G) \subset \mathbb C_s(N,\mathcal G)$ such that
$
(D \indep N \mid \mathbb Z)_\mathcal P,
$
and the edge between $D$ and $N$ is removed.

Consequently, each node in the $(h-1)$-neighborhood retains edges only to its true neighbors. The union of these nodes with the newly discovered adjacent nodes forms the $h$-neighborhood, which is therefore correctly identified by \textbf{LocPC}  at iteration $h$, establishing that
$
NeHood(Y,h,\widehat{\mathcal L}) = NeHood(Y,h, \mathcal L^{Y,h}).
$

Beyond the correct identification of the neighborhood in $\widehat{\mathcal L}$ (item 1 of Theorem~\ref{theorem:leg}), the previous proof also contains the elements establishing items 2 and 4 of Theorem~\ref{theorem:leg}. We now show that, under the assumptions of the theorem, the CI tests allow \textbf{LocPC}  to correctly identify the unshielded colliders (UCs) among nodes in the $h$-neighborhood.  

Once the LEG skeleton is constructed and correct, \textbf{LocPC}  identifies unshielded triples $D_1,D_2,D_3 \in NeHood(Y,h,\widehat{\mathcal L})$. Since $D_1 \notin Ne(D_3,\widehat{\mathcal L})$, \textbf{LocPC}  has found a separating set $\mathbb S$ such that 
$
(D_1 \indep D_3 \mid \mathbb S)_\mathcal P.
$
If $D_1 \to D_2 \leftarrow D_3$ in $\mathcal G$, then $D_2 \notin \mathbb S$, otherwise this path would be activated by conditioning on $D_2$, and the CI would not hold. If the triple is a non-collider in $\mathcal G$, then it is necessary that some $A \in \mathbb S$ to render $D_1$ and $D_3$ conditionally independent given $\mathbb S$. Under local faithfulness, \textbf{LocPC}  can thus correctly orient all UCs in the $h$-neighborhood based on the performed independence tests, establishing item 3 of Theorem~\ref{theorem:leg}.  

The application of Meek's rules is identical in \textbf{LocPC}  and in Theorem~\ref{theorem:leg}, ensuring the satisfaction of item 5 of Theorem~\ref{theorem:leg}.

It remains to verify the last item of Theorem~\ref{theorem:leg}, which is ensured by NNC Rule of Definition~\ref{def:local_rules}. Let $D \in NeHood(Y,h,\mathcal G)$ and $A \notin NeHood(Y,h,\mathcal G)$ with $D-A \in \widehat{\mathcal L}$. \textbf{LocPC}  examines each $W \notin \{NeHood(Y,h,\mathcal{G}) \cup Ne(D,\mathcal{G}) \cup SNe(D,\mathcal{G})\}$ and checks whether $A \in \mathbb S_W$, where $\mathbb S_W$ is the separating set found to separate $W$ and $D$. If $A \notin \mathbb S_W$ for all such $W$, then there is no non-collider triple $(D,A,W)$ in $\mathcal G$, since otherwise $A$ would always be required to separate $W$ and $D$. Therefore, adding $D \arrowdoublebar A$ is correct, confirming that item 6 of Theorem~\ref{theorem:leg} is satisfied by $\widehat{\mathcal L}$.  

In conclusion, we have shown that under the assumptions of local faithfulness and causal sufficiency, and with access to perfect CI information, it holds that
$
\widehat{\mathcal L} = \mathcal L^{Y,h},
$
thus establishing the correctness of \textbf{LocPC} .
\end{proof}

\subsection{Proof of complexity}

First, we state and prove a lemma that will be useful for establishing the complexity result:

\begin{lemma}[Maximal Conditioning Set Size in \textbf{LocPC} ]
\label{lemma:size_cond_set}
Let $D \in NeHood(Y,h,\mathcal G)$, $W$ be any node, \(k_d := \max\{|Ne(D,\mathcal G)| : D \in NeHood(Y,h,\mathcal G)\}\), \(k_i := \max\{|DINe(D,\mathcal G)| : D \in NeHood(Y,h,\mathcal G)\}\) and \(k_\ell := k_d + k_i\). If \textbf{LocPC}  identifies a separating set $\mathbb{S}$ for $D$ and $W$, then it necessarily satisfies $|\mathbb{S}| \le k_\ell$.
\end{lemma}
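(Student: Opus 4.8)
\textbf{Proof plan for Lemma~\ref{lemma:size_cond_set}.}

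The plan is to trace exactly when and how \textbf{LocPC} discovers a separating set for a pair $(D,W)$ with $D\in NeHood(Y,h,\mathcal G)$, and to argue that at the moment such a set is found, the pool of candidate conditioning nodes available to the algorithm around $D$ has size at most $k_\ell = k_d + k_i$. Recall from the proof of Theorem~\ref{theorem:faithfulness_implications} that, for every $D\in NeHood(Y,h,\mathcal G)$ and every iteration $s$, the set of nodes still adjacent to $D$ in the current working graph, $\mathbb C_s(D,\widehat{\mathcal L})$, coincides with $\mathbb C_s(D,\mathcal G)$; and that $\mathbb C_{|\mathbb V|-2}(D,\mathcal G) = Ne(D,\mathcal G)\cup SNe(D,\mathcal G)$. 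So the plan hinges on two facts: (i) \textbf{LocPC} only ever draws conditioning sets for the pair $(D,W)$ from $\mathrm{Adj}(D,\widehat{\mathcal L})$ (or symmetrically from $\mathrm{Adj}(W,\widehat{\mathcal L})$, but in the local algorithm the relevant search is anchored at the focus node $D\in\mathbb D$), and (ii) by the time the relevant separating set is tested, this adjacency set has already been pruned down to $Ne(D,\mathcal G)\cup SNe(D,\mathcal G)$ minus $W$ itself.

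First I would make (ii) precise. If $W\in Ne(D,\mathcal G)$ there is nothing to prove, since no separating set exists. If $W\notin Ne(D,\mathcal G)$ but $W$ survives all pruning, then $W\in SNe(D,\mathcal G)\subseteq DINe(D,\mathcal G)$ by Theorem~\ref{prop:spurious}, and again no separating set of size $\le k_\ell$ needs to be found (indeed none exists, by Lemma~\ref{lemma:DINe}); so this case is vacuous with respect to the claim. The remaining case is $W\notin Ne(D,\mathcal G)\cup SNe(D,\mathcal G)$, i.e.\ $W$ is genuinely removed at some iteration $s^\ast$. At that iteration the algorithm has already completed all CI tests with conditioning sets of sizes $0,1,\dots,s^\ast-1$ drawn from the then-current adjacency set of $D$; in particular, every node of $\mathbb C_{s^\ast-1}(D,\widehat{\mathcal L})$ is by definition not separable from $D$ by any subset of $\mathbb C_{s^\ast-2}(D,\widehat{\mathcal L})$ of smaller size. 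The key monotonicity observation is that $\mathbb C_s(D,\widehat{\mathcal L})$ is non-increasing in $s$ and stabilizes at $Ne(D,\mathcal G)\cup SNe(D,\mathcal G)$; moreover, once it has stabilized at this set, any further separating set the algorithm finds is a subset of $(Ne(D,\mathcal G)\cup SNe(D,\mathcal G))\setminus\{W\}$, hence of size at most $|Ne(D,\mathcal G)| + |SNe(D,\mathcal G)| - [W\in\text{that set}]$. Since $|Ne(D,\mathcal G)|\le k_d$ and $|SNe(D,\mathcal G)|\le |DINe(D,\mathcal G)|\le k_i$ (again by Theorem~\ref{prop:spurious}), this gives the bound $|\mathbb S|\le k_d + k_i = k_\ell$.

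The one subtlety I would want to nail down carefully is that the separating set $\mathbb S$ reported for $(D,W)$ is indeed found only \emph{after} $\mathrm{Adj}(D,\widehat{\mathcal L})$ has shrunk to within $Ne(D,\mathcal G)\cup SNe(D,\mathcal G)$ — i.e.\ that the algorithm never ``gets lucky'' and certifies a larger separating set at an early iteration while spurious extra nodes are still attached to $D$. This is where the increasing-$s$ structure of PC matters: at iteration $s$ the algorithm only ever tests conditioning sets of size exactly $s$, and it increments $s$ only after exhausting that size; so the \emph{first} separating set discovered for $(D,W)$ has the minimum possible cardinality among subsets of the current adjacency set, and the current adjacency set at that point is $\mathbb C_{s-1}(D,\widehat{\mathcal L}) = \mathbb C_{s-1}(D,\mathcal G)$. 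Since $W$ is not removed at any iteration before $s^\ast$, we have $W\in \mathbb C_{s^\ast-1}(D,\mathcal G)$, and the minimal separating set has size $s^\ast\le |\mathbb C_{s^\ast-1}(D,\mathcal G)\setminus\{W\}|$. Here I would invoke the bound $\mathbb C_{s^\ast-1}(D,\mathcal G)\subseteq$ ``$Ne(D,\mathcal G)\cup SNe(D,\mathcal G)\cup\{\text{nodes still to be pruned}\}$'', but in fact the cleaner route is: the separating set actually used lies inside $Ne(D,\mathcal G)\cup SNe(D,\mathcal G)$ because any node outside this set is, by the earlier results, eventually prunable and (being not a neighbor/descendant-inducing neighbor of $D$) not needed in any minimal $d$-separator of $D$ from $W$ — so \textbf{LocPC} never needs it, and by the minimality built into PC's increasing-size search it never includes it. I expect this last reduction — showing that the minimal separator found by the local search is contained in $Ne(D,\mathcal G)\cup SNe(D,\mathcal G)$ rather than merely in the larger transient set $\mathbb C_{s-1}(D,\mathcal G)$ — to be the main obstacle, and I would handle it by an argument mirroring the inductive skeleton-correctness argument in the proof of Theorem~\ref{theorem:faithfulness_implications}, combined with the observation that a node $W'$ which \textbf{LocPC} will later prune from $\mathrm{Adj}(D,\widehat{\mathcal L})$ is $d$-separated from $D$ by some subset of $Ne(D,\mathcal G)\cup SNe(D,\mathcal G)$ and hence can be dropped from any conditioning set without destroying the $D$–$W$ separation.
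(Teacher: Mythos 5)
There is a genuine gap, and it sits exactly where you flagged the ``subtlety.'' Your argument needs the claim that the separating set reported for $(D,W)$ lies inside (or at least that some separator exists inside) $Ne(D,\mathcal G)\cup SNe(D,\mathcal G)$, and you justify it by asserting that any node outside this set is ``not needed in any minimal $d$-separator of $D$ from $W$.'' That assertion is false, and the paper's own Appendix (the example in the subsection ``Why not all descendant inducing neighbors are spurious neighbors?'') is a counterexample: there $B\in DINe(D,\mathcal G)\setminus SNe(D,\mathcal G)$, we have $Ne(D,\mathcal G)=\{A,E,F\}$ and $SNe(D,\mathcal G)=\emptyset$, and no subset of $\{A,E,F\}$ $d$-separates $D$ from $B$ (conditioning on $A$ activates $D\to A\leftarrow C\to B$, not conditioning on $A$ leaves $D\to A\to B$ open). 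Yet \textbf{LocPC} does find and report a separator, namely $\{A,C\}$, which necessarily uses the \emph{transient} node $C\notin Ne(D,\mathcal G)\cup SNe(D,\mathcal G)$ -- $C$ is still attached at that iteration only because its own separator $\{E,F\}$ has size $2$. So the containment you rely on fails precisely in the case $W\in DINe(D,\mathcal G)\setminus SNe(D,\mathcal G)$, which your case split quietly assumes away (you treat only ``$W$ survives all pruning'' and ``$W\notin Ne\cup SNe$ with a separator among persistent candidates''). Relatedly, your earlier phrasing that the adjacency set ``has already been pruned down to $Ne\cup SNe$'' when the separator is found cannot be right in general, since $W$ itself is one of the nodes being pruned during that very stabilization phase.

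The paper's proof takes a different and more robust route: it argues by contradiction on $|\mathbb S|>k_\ell$, using the fact that the increasing-$s$ search reports a smallest separator among the candidates it sees, and then exhibits, case by case, a separating set of size at most $k_\ell$: via parents when $W\in NeHood(Y,h,\mathcal G)$, via Lemma~\ref{lemma:DINe} (a separator inside $Ne(D,\mathcal G)$, size $\le k_d$) when $W\notin DINe(D,\mathcal G)$, and -- the nontrivial case your proposal is missing -- via the structure of descendant inducing paths when $W\in DINe(D,\mathcal G)$, where it blocks the DIP-related directed chains with nodes counted by $k_i$ and the remaining paths with neighbors counted by $k_d$. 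Whatever one thinks of the details of that last construction, it is exactly the case where conditioning nodes outside $Ne(D,\mathcal G)\cup SNe(D,\mathcal G)$ may be genuinely required, so any correct proof must bound the size of such separators rather than argue they are never used. For the easy cases your size bound does go through (a persistent separator of size $\le k_d$ exists, so minimality of the size-increasing search caps the reported size even if the reported set itself contains transient nodes), but without an argument for $W\in DINe(D,\mathcal G)\setminus SNe(D,\mathcal G)$ the lemma is not established.
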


\begin{proof}
    Let \(D\in NeHood(Y,h,\mathcal G)\) and let \(W\) be any node. We first show that every separating set \(\mathbb S\) returned by \textbf{LocPC}  with hop \(h\) satisfies \(|\mathbb S|\le k_\ell:=k_d+k_i\). Assume for contradiction that \(|\mathbb S|>k_\ell\). As in any PC-style procedure, \textbf{LocPC}  always searches for a \emph{smallest} separating set among the subsets it considers. If \(W\in NeHood(Y,h,\mathcal G)\), then by the Markov property there exists \(\mathbb Z\subseteq Pa(D)\) or \(\mathbb Z\subseteq Pa(W)\) such that \((D\indep  W\mid \mathbb Z)_\mathcal G\). Since parents are neighbors, \textbf{LocPC}  finds such a \(\mathbb Z\) at iteration \(h\), with \(|\mathbb Z|\le k_d\le k_\ell<|\mathbb S|\), contradicting the smallest condition. If \(W\notin NeHood(Y,h,\mathcal G)\), consider three subcases.  
(i) If \(W\notin  DINe(D)\), Lemma~\ref{lemma:DINe} ensures the existence of \(\mathbb Z\subseteq Ne(D)\) such that \((W\indep  D\mid \mathbb Z)_\mathcal G\) and \(|\mathbb Z|\le k_d<|\mathbb S|\), a contradiction.  
(ii) If \(W\in DINe(D)\), then by definition there exists a directed chain \(\pi_c: D\to\cdots\to W\). Since we assumed that \textbf{LocPC}  found a separating set $\mathbb S$, this chain must be blocked. However, by Lemma~\ref{lemma:DINe} no neighbor of \(D\) can block it, and every non-neighbor of $D$ on \(\pi_c\) is in \(DINe(D)\). By blocking all DIP paths using nodes in \(DINe(D)\) (at most \(k_i\)) and all remaining paths using neighbors of \(D\) (at most \(k_d\)), we construct a set \(\mathbb Z\subseteq Ne(D)\cup DINe(D)\) with \(|\mathbb Z|\le k_d+k_i=k_\ell<|\mathbb S|\), a contradiction. Thus any separating set \(\mathbb S\) found by \textbf{LocPC}  with hop \(h\) must satisfy \(|\mathbb S|\le k_\ell:=k_d+k_i\).\end{proof}

Now, restate and proof Theorem~\ref{theorem:complexity}. 

\mytheoremcomplexity

\begin{proof}
When exploring a node \(D\), \textbf{LocPC}  searches for a separating set between \(D\) and the \(n-1\) other nodes by increasing the conditioning set size \(s\). At each iteration \(s\), in the worst case, all subsets of size \(s\) among the remaining \(n-2\) nodes are tested. From Lemma~\ref{lemma:size_cond_set}, the largest examined conditioning set has size at most \(k_\ell\), and that after this procedure the remaining true and spurious neighbors are at most \(k_d+k_i=k_\ell\) (with at most \(k_i\) spurious neighbors by Theorem~\ref{prop:spurious}), the procedure terminates once no unexplored candidate sets remain. This yields a worst-case cost of \((n-1)\sum_{s=0}^{k_\ell}\binom{n-2}{s}\) for each node discovered, increasing with the hop \(h\). For any \(h>0\), the procedure is repeated on nodes still connected to the target, at most \(k_\ell=k_d+k_i\) (true neighbors \(k_d\) plus at most \(k_i\) spurious neighbors). If \(h>1\), spurious neighbors are pruned, and only true neighbors (\(\le k_d\)) are explored. This process repeats for each hop up to \(h\), first exploring potential neighbors (\(\le k_\ell\)), then continuing only with true neighbors (\(\le k_d\)), yielding a total of at most \(1+\sum_{j=0}^{h-1} k_d^j k_\ell = 1 + k_\ell \frac{1-k_d^h}{1-k_d}\) repetition.

Finally, for the complexity as $n \to \infty$ with fixed $k_\ell$, $k_d$, and $h$, and 
$\left(1 + k_\ell \frac{1 - k_d^h}{1 - k_d}\right) < n$ (which is natural: this quantity is precisely the number of visited nodes, and if the task required exploring all the nodes, relying on \emph{local} causal discovery would no longer be meaningful):

\[
\left(1 + k_\ell \frac{1 - k_d^h}{1 - k_d}\right)(n-1)\sum_{s=0}^{k_\ell} \binom{n-2}{s} 
\le \left(1 + k_\ell \frac{1 - k_d^h}{1 - k_d}\right)(n-1)(k_\ell+1) \frac{(n-2)^{k_\ell}}{k_\ell!}=  \mathcal{O}(n^{k_\ell+1})
\]

This concludes the proof of the complexity.
\end{proof}

\subsection{Proof of Theorem~\ref{corollary:identifiability_CDE}}

Before proving Theorem~\ref{corollary:identifiability_CDE}, we state and prove Theorem~\ref{th:orientability} which is useful to prove Theorem~\ref{corollary:identifiability_CDE}.

\begin{theorem}
\label{th:orientability}
Let $\mathbb D \subseteq NeHood(Y, h, \mathcal G)$, and let $\mathcal{L}^{Y,h}=(\mathbb V,\mathbb E^{Y,h})$ denote the LEG with $h \ge 1$.  
If $\mathbb D$ satisfies the non-orientability criterion, then for all $D_i,D_j \in \mathbb D$, $(D_i-D_j) \in \mathbb E^{Y,h} \implies (D_i-D_j) \in \mathbb E^{Y,k} \quad \forall k>h.$
\end{theorem}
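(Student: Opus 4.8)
The plan is to show that the orientation that $\mathcal{L}^{Y,k}$ assigns to the undirected edges of $\mathbb D$ is already ``saturated'' at hop $h$. First I would use NOC to observe that no undirected edge of $\mathcal{L}^{Y,h}$ leaves $\mathbb D$, so the undirected chain component $\mathcal K$ containing $D_i-D_j$ is entirely contained in $\mathbb D\subseteq NeHood(Y,h,\mathcal G)\subseteq NeHood(Y,k,\mathcal G)$. Since the skeleton of a LEG restricted to its neighborhood coincides with that of $\mathcal G$ (Theorem~\ref{theorem:leg}, items~1--2), $D_i-D_j$ is again an edge of $\mathcal{L}^{Y,k}$, and it suffices to prove that the construction of $\mathcal{L}^{Y,k}$ in Theorem~\ref{theorem:leg} orients no edge of $\mathcal K$. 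The NNC rule (item~6) only adds $\arrowdoublebar$ edges at vertices outside the neighborhood, so it never touches $\mathcal K$; hence the only ways an edge of $\mathcal K$ could get oriented are UC detection (item~3) and Meek's rules (item~5), and both act on $\mathcal K$ only through two inputs: the unshielded colliders whose collider or an arm lies in $\mathcal K$, and the arrowheads pointing into $\mathcal K$ (either from already-directed edges or produced by those UCs). The proof then reduces to showing that these inputs are the same in $\mathcal{L}^{Y,h}$ and $\mathcal{L}^{Y,k}$; since the Meek closure restricted to $\mathcal K$ is determined by the skeleton together with these inputs, $\mathcal K$ receives the same orientation at both hops, so $D_i-D_j$ stays undirected.

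For the arrowheads into $\mathcal K$: any edge $Z\to V$ with $V\in\mathcal K$, $Z\notin\mathcal K$ is either internal to $\mathbb D$ — in which case it is already directed in $\mathcal{L}^{Y,h}$ and orientations never flip as the hop grows — or it crosses the boundary of $\mathbb D$. By NOC a boundary edge of $\mathbb D$ is never undirected in $\mathcal{L}^{Y,h}$; it is either directed (hence unchanged) or of double-bar type $V\arrowdoublebar Z$. I would then argue that a double-bar boundary edge can only resolve to $V\to Z$, never to $Z\to V$: by the semantics of the NNC rule, $V\arrowdoublebar Z$ forces $V\to Z$ in $\mathcal G$ as soon as $Z$ has a neighbour outside $NeHood(Y,h,\mathcal G)\cup Ne(V,\mathcal G)\cup SNe(V,\mathcal G)$, and $\mathcal{L}^{Y,k}$ is consistent with $\mathcal G$; and if $Z$ has no such ``far'' neighbour, the only configuration in which $Z\to V$ could be forced at a later hop is a hidden unshielded collider at $V$ with both arms outside $NeHood(Y,h,\mathcal G)$, which would leave $V$ with two non-directed boundary edges in $\mathcal{L}^{Y,h}$, contradicting the NOC budget of no undirected boundary edge and at most one double-bar. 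Hence no new arrowhead enters $\mathcal K$ at hop $k$.

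For the unshielded colliders: a UC that could orient an edge of $\mathcal K$ and is not already present at hop $h$ must, by item~3 of Theorem~\ref{theorem:leg} at hop $h$, have a third vertex $D_k\notin NeHood(Y,h,\mathcal G)$. Writing it as $D_i\to D_j\leftarrow D_k$ with $D_i,D_j\in\mathbb D$, the true DAG has $D_k\to D_j$, $D_i\to D_j$ and $D_i$ non-adjacent to $D_k$; since $D_k\in Ne(D_j,\mathcal G)$, the edge $D_j-D_k$ is present in $\mathcal{L}^{Y,h}$ (item~4), and by NOC it is $D_k\to D_j$ or $D_j\arrowdoublebar D_k$ there. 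If it is $D_k\to D_j$, then $\mathcal{L}^{Y,h}$ already contains $D_k\to D_j-D_i$ on an unshielded triple, so Meek's rule~1 would have oriented $D_i-D_j$ at hop $h$ — contradiction. If it is $D_j\arrowdoublebar D_k$, I would use the separating set $\mathbb Z_0$ that the hop-$h$ tests explore for $(D_i,D_k)$ from the side of $D_i$ (this exists unless $D_k\in SNe(D_i,\mathcal G)$, because $D_i\in NeHood(Y,h,\mathcal G)$ and $D_k$ is then not a spurious neighbour of $D_i$ by Theorem~\ref{prop:spurious}): $\mathbb Z_0$ must exclude the collider $D_j$, so every $\mathcal G_l\in LMEC(Y,h,\mathcal G)$ must realise $D_i-D_j-D_k$ as a collider to block the length-two path under $\mathbb Z_0$, whence $D_i\to D_j$ is forced across the class — contradicting that $D_i-D_j$ is undirected in $\mathcal{L}^{Y,h}$. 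The residual case $D_k\in SNe(D_i,\mathcal G)$ is handled separately: then $D_i-D_k$ is a spurious edge of $\mathcal{L}^{Y,h}$ that NOC forces to be directed or double-bar, $D_k\in De(D_i,\mathcal G)$ by Theorem~\ref{prop:spurious}, and acyclicity together with consistency of $\mathcal{L}^{Y,k}$ with $\mathcal G$ prevents the hypothetical collider from being oriented against $\mathcal G$, so no new orientation of $D_i-D_j$ results.

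Combining the two parts, the inputs driving UC detection and Meek's rules on $\mathcal K$ are identical at hop $h$ and hop $k$, so $\mathcal{L}^{Y,k}$ assigns to every edge of $\mathcal K$, in particular to $D_i-D_j$, the same ``undirected'' status as $\mathcal{L}^{Y,h}$, and the claim follows by induction on $k$. I expect the main obstacle to be the collider analysis of the third paragraph, and specifically the bookkeeping for a vertex $D_k$ that is a spurious neighbour at hop $h$ but not at hop $k$: handling it requires invoking simultaneously the DIP characterisation of spurious neighbours (Theorem~\ref{prop:spurious}), the invariants shared across an LMEC (Theorem~\ref{theorem:lmec}), and the fact that every LEG in the family is consistent with the underlying DAG.
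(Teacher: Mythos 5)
Your overall strategy is the paper's: use the first NOC condition to confine the undirected chain component of $D_i-D_j$ inside $\mathbb D$, note that oriented edges of a LEG never flip (every LEG is consistent with $\mathcal G$), and then argue that the only possible carriers of new orientation information into $\mathbb D$ --- the single permitted double-bar boundary edge per node, and unshielded colliders whose third vertex lies outside $NeHood(Y,h,\mathcal G)$ --- cannot produce a new arrowhead into $\mathbb D$. You even make explicit (your third paragraph) a case the paper's proof treats only implicitly. The execution, however, has a concrete hole in the ``arrowheads into $\mathcal K$'' step. For a boundary edge $V\arrowdoublebar Z$ whose outer endpoint $Z$ has \emph{no} neighbour outside $NeHood(Y,h,\mathcal G)\cup Ne(V,\mathcal G)\cup SNe(V,\mathcal G)$, the NNC semantics no longer pin the direction, so $Z\to V$ may hold in $\mathcal G$; you claim the only way this arrowhead could later be forced is a UC at $V$ with \emph{both} arms outside the $h$-neighborhood, which NOC excludes. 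That dichotomy is incomplete: $Z\to V$ can also enter $\mathcal L^{Y,k}$ through a true UC $Z\to V\leftarrow W$ whose other arm $W$ lies \emph{inside} $NeHood(Y,h,\mathcal G)$ (item~3 of Theorem~\ref{theorem:leg} does not orient it at hop $h$ because $Z$ is outside), or through Meek propagation reaching $Z$ at hop $k$ from its non-far neighbours; neither is ruled out by your ``two non-directed boundary edges'' budget, so the claim that no new arrowhead enters $\mathcal K$ is not established. This is exactly the configuration the paper absorbs in its case analysis of the unique double-bar edge ($A\in Ne(D,\mathcal G)$ with no new neighbour of $A$), so you cannot simply skip it.

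There is a second gap in your collider paragraph. The sepset-forcing argument (``every $\mathcal G_l\in LMEC(Y,h,\mathcal G)$ must realise $D_i-D_j-D_k$ as a collider to block the length-two path under $\mathbb Z_0$'') tacitly assumes that the edge $D_j-D_k$ is a genuine edge in every member of the class; but item~3 of Theorem~\ref{theorem:lmec} only fixes $Ne(D_j,\cdot)\cup SNe(D_j,\cdot)$, so in some $\mathcal G_l$ the node $D_k$ may be a spurious, hence non-adjacent, neighbour of $D_j$, in which case there is no length-two path to block, no collider is forced in that member, and the intended contradiction with the undirectedness of $D_i-D_j$ at hop $h$ evaporates. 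Likewise, your residual case $D_k\in SNe(D_i,\mathcal G)$ is asserted rather than proved: ``acyclicity together with consistency \ldots prevents the hypothetical collider from being oriented against $\mathcal G$'' does not identify which orientation statement fails. Until these two configurations are closed (or the argument is recast, as in the paper, purely in terms of which orientations the NOC-permitted boundary edges can ever emit toward $\mathbb D$), the proof is not complete.
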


\begin{proof}
After constructing the LEG $\mathcal{L}^{Y,h}$ for a given hop $h$, the only way the undirected edge $D_i - D_j$ could become oriented in a LEG $\mathcal{L}^{Y,k}$ with $k > h$ is through the propagation of orientations via Meek's rules. These are the only mechanisms capable of orienting edges already present at hop $h$.

To prevent such propagation, we first require that there are no undirected edges between nodes in $\mathbb{D}$ and nodes outside $\mathbb{D}$; such connections could serve as pathways for orientation propagation under Meek’s rules.

Furthermore, for each node $D \in \mathbb{D}$, if there exists at most one node $A \notin \mathbb{D}$ such that $D \arrowdoublebar A$, then we ensure the following:\footnote{If multiple such nodes $A$ exist, they could form an unshielded collider with $D$ in some LEG of hop $k > h$, possibly leading to orientations like $A \to D$, and thus allowing orientations to propagate within $\mathbb{D}$ via Meek's rules.}

\begin{itemize}
    \item \textbf{Case $A \in Ne(D,\mathcal G)$:}
    \begin{itemize}
        \item Either no new neighbor of $A$ is discovered (i.e., no node not already included at hop $h$), in which case the edge $D - A$ cannot be oriented and no propagation occurs;
        \item Or, if new neighbors of $A$ are discovered (which are non-neighbors of $D$), then the edge $D \to A$ will be oriented. However, this orientation is incompatible with all of Meek’s rules for propagating directions back into $\mathbb D$.
    \end{itemize}

    \item \textbf{Case $A \notin Ne(D,\mathcal G)$:}  
    This implies $A \in SNe(D,\mathcal G)$ (since $D\arrowdoublebar A$ implies that $A\in Ne(D,\mathcal G)\cup SNe(D,\mathcal G)$), and thus at the next iteration (hop $h+1$) the edge between $D$ and $A$ will be removed. Consequently, no orientation information can propagate into $\mathbb D$ through this edge.
\end{itemize}

Therefore, if the non-orientability conditions (Def.~\ref{def:orientability}) are satisfied, no orientation can reach the edge $D_i - D_j$ in $\mathcal{L}^{Y,k}$ for any $k > h$. As a result, $D_i - D_j$ remains undirected in all subsequent LEGs.\end{proof}

We now turn to the proof of Theorem~\ref{corollary:identifiability_CDE} which follows directly from of Theorem~\ref{th:orientability}.

\mycorollarytwo*

\begin{proof}
    First, note that the essential graph $\mathcal{C} = \mathcal{L}^{Y, k_{\mathrm{max}}}$ where $k_{\mathrm{max}}$ is the length of the longest path from $Y$ to any other node in the graph. By applying Theorem~\ref{th:orientability} to the undirected edges adjacent to $Y$ that are included in the subset $\mathbb{D}$, we deduce that these edges remain undirected in the essential graph $\mathcal{C}$. Then, by Theorem~5.4 of \citep{Flanagan_2020}, it follows that the CDEs on $Y$ are not identifiable.
\end{proof}

\subsection{Proof of Theorem~\ref{th:sound_and_complete-LocPC-CDE}}

\mytheoremsix*

\begin{proof}
Assume that the CDE is identifiable. This means that in the essential graph $\mathcal{C}$, all nodes adjacent to $Y$ are oriented. Under these assumptions, and assuming access to a perfect CI oracle, it follows from Theorem~\ref{theorem:faithfulness_implications} that at each iteration over $h$, the locally estimated essential graph (LEG) $\widehat{\mathcal{L}}$ discovered by \textbf{LocPC} is correct. Moreover, if the CDE is identifiable, then by Corollary~\ref{corollary:identifiability_CDE}, the non-orientability criterion (Def.~\ref{def:orientability}) will never be satisfied. Consequently, the local discovery will proceed until all edges adjacent to $Y$ are oriented. This will eventually occur since, in the worst case, the discovered graph $\widehat{\mathcal{L}}$ becomes equal to the essential graph $\mathcal{C}$ if all relevant nodes are included. Therefore, there exists an iteration in which all edges adjacent to $Y$ are oriented, and the algorithm will conclude that the CDEs with respect to $Y$ are identifiable. The returned LEG necessarily has all of $Y$'s adjacents oriented, which is sufficient for estimating the CDE.

Assume that the CDE is not identifiable. This implies that in the essential graph $\mathcal{C}$, the adjacency of $Y$ is not fully oriented. Under these assumptions, and given a CI oracle, the LEG $\widehat{\mathcal{L}}$ discovered at each iteration $h$ is correct by Theorem~\ref{theorem:faithfulness_implications}. According to Corollary~\ref{corollary:identifiability_CDE}, if the non-orientability criterion (Def.~\ref{def:orientability}) is satisfied—which acts as a stopping condition—then the CDE is not identifiable. If the stopping condition is never triggered, the algorithm continues the local discovery process until it recovers the full essential graph $\mathcal{C}$. Thus, in all cases, when the algorithm terminates and finds that $Y$'s adjacency is not fully oriented, we are guaranteed that the CDE is indeed not identifiable.\end{proof}

\subsection{Proof of Theorem~\ref{corollary:optimality_LocPC-CDE}}

\mypropositiontwo*

\begin{proof}
    Assume that the CDE is not identifiable and that the algorithm declares non-identifiability at iteration $\widehat{h}$. We now show that it was not possible to conclude non-identifiability at iteration $\widehat{h} - 1$. If the algorithm did not terminate at iteration $\widehat{h} - 1$, this implies that the non-orientability condition (Definition~\ref{def:orientability}) was not satisfied at that stage. However, as demonstrated in the proof of Theorem~\ref{th:orientability}, when the non-orientability condition is not satisfied, it remains possible that an edge in the discovery set $\mathbb{D}$ becomes oriented as the size of the discovery increases. Therefore, it follows directly that without continuing the discovery at iteration $\widehat{h}$, it was not possible to be certain that the CDE was not identifiable.
\end{proof}

\section{About spurious neighbors and descendant inducing paths}
\label{appendix:spurious}

\subsection{A simple example of spurious neighbor}
We illustrate here the simplest example of a \emph{spurious edge} introduced by a localized PC algorithm. Let $\mathcal{G}$ be the underlying DAG and $D$ the target. The steps of the a PC-style local discovery algorithm around $D$ are as follows to build the skeleton:

\begin{enumerate}
    \item \textbf{Initialization:} all edges to $D$ are connected, $\mathbb{C}_0(D,\mathcal{G}) = \{A,B,C\}$.
    \item \textbf{Conditioning with size $s=0$:} unconditional independencies are tested. Since $(D \indep C \mid \emptyset)_\mathcal{G}$, the edge $(D,C)$ is removed, yielding $\mathbb{C}_1(D,\mathcal{G}) = \{A,B\}$.
    \item \textbf{Conditioning with size $s=1$:}  
    \begin{itemize}
        \item For $A$, no separating set exists among $\mathbb{C}_1(D,\mathcal{G})$, so the edge $(D,A)$ remains.  
        \item For $B$, the required separator would be $\{A,C\}$, but $C$ was previously removed. No set among $\mathbb{C}_1(D,\mathcal{G})$ separates $D$ and $B$, so $(D,B)$ remains.
    \end{itemize}
\end{enumerate}

Thus, after a PC run targeted at $D$, $B$ becomes a spurious neighbor according to def.~\ref{def:sne}:
\[
\mathbb{C}_2(D,\mathcal{G}) = \{A,B\}, \quad Ne(D,\mathcal{G}) = \{A\} \implies SNe(D,\mathcal{G}) = \{B\}.
\]

\begin{center}
\begin{minipage}{0.23\textwidth}
\centering
\begin{tikzpicture}[{black, circle, draw, inner sep=0}]
\tikzset{nodes={draw,rounded corners, minimum height=0.6cm, minimum width=0.6cm}}
\node (D)  at (0,0) [fill=gray!30] {$D$};
\node (A)  at (0,1) {$A$};
\node (B) at (1,0) {$B$};
\node (C) at (1,1) {$C$};
\draw[->, >=latex] (D) -- (A);
\draw[->, >=latex] (A) -- (B);
\draw[->, >=latex] (C) -- (B);
\draw[->, >=latex] (C) -- (A);
\end{tikzpicture}

$\mathcal G$
\end{minipage}
\hfill
\begin{minipage}{0.23\textwidth}
\centering
\begin{tikzpicture}[{black, circle, draw, inner sep=0}]
\tikzset{nodes={draw,rounded corners, minimum height=0.6cm, minimum width=0.6cm}}
\node (D)  at (0,0) [fill=gray!30] {$D$};
\node (A)  at (0,1) {$A$};
\node (B) at (1,0) {$B$};
\node (C) at (1,1) {$C$};
\draw[-, >=latex] (D) -- (A);
\draw[-, >=latex] (D) -- (B);
\draw[-, >=latex] (D) -- (C);
\end{tikzpicture}

Step 1

\end{minipage}
\hfill
\begin{minipage}{0.23\textwidth}
\centering
\begin{tikzpicture}[{black, circle, draw, inner sep=0}]
\tikzset{nodes={draw,rounded corners, minimum height=0.6cm, minimum width=0.6cm}}
\node (D)  at (0,0) [fill=gray!30] {$D$};
\node (A)  at (0,1) {$A$};
\node (B) at (1,0) {$B$};
\node (C) at (1,1) {$C$};
\draw[-, >=latex] (D) -- (A);
\draw[-, >=latex] (D) -- (B);
\end{tikzpicture}

Step 2

\end{minipage}
\hfill
\begin{minipage}{0.23\textwidth}
\centering
\begin{tikzpicture}[{black, circle, draw, inner sep=0}]
\tikzset{nodes={draw,rounded corners, minimum height=0.6cm, minimum width=0.6cm}}
\node (D)  at (0,0) [fill=gray!30] {$D$};
\node (A)  at (0,1) {$A$};
\node (B) at (1,0) {$B$};
\node (C) at (1,1) {$C$};
\draw[-, >=latex] (D) -- (A);
\draw[-, >=latex, red] (D) -- (B);
\end{tikzpicture}

Following steps

\end{minipage}
\end{center}

The approach developed in this paper shows that, in this case, the spurious edge between $D$ and $B$ will be removed when targeting $B$ in a future iteration of the local-to-global causal discovery procedure (for instance, in our \textbf{LocPC} algorithm). Repeating the same steps will reveal that $\{A,C\}$ separates $D$ and $B$, and the edges to $A$ and $C$ will never be removed when targeting $B$, since they are true neighbors.

\subsection{Descendant inducing paths}

We showed in Theorem~\ref{prop:spurious} that all such spurious relations arise from a specific type of path, which we call \emph{descendant inducing paths} (DIPs), defined in Definition~\ref{def:dip}. DIPs are particular types of inducing paths as introduced in~\cite{Spirtes_2000}, both in their definition and in their properties. We recall that an inducing path $\pi$ between $A$ and $B$ relative to a subset of nodes $\mathbb O$ is an undirected path such that every node in $\mathbb O$, except the endpoints, is a collider on $\pi$, and every collider on $\pi$ is an ancestor of $A$ or $B$. The differences between inducing paths and DIPs are therefore twofold:
(i) a certain directionality is imposed on the path (a DIP is defined between nodes $A$ and $B$ with an asymmetric role for $A$ and $B$), which facilitates the definition of descendant inducing neighbors, an asymmetric relation; and
(ii) a DIP is defined explicitly relative to a subset of the neighbors of $A$, rather than to an arbitrary subset of nodes on the path. Trivially, every DIP is then an inducing path but not every inducing path is a DIP (an example is provided below). Moreover, an inducing path relative to a set $\mathbb S$ is a path that remains active regardless of the subset of $\mathbb O$ on which we condition. A DIP satisfies an analogous property, stated in Lemma~\ref{lemma:DINe}, taking into account that local PC approaches can, in the worst case, condition only on the neighbors of the target node.

Below, we provide an illustration of these paths to complement the formal definition.

\begin{center}
\begin{tikzpicture}[{black, circle, draw, inner sep=0}]
\tikzset{nodes={draw,rounded corners, minimum height=0.6cm, minimum width=0.6cm}}
\node (D)  at (0,0) [fill=gray!30] {$D$};
\node (A)  at (1,0) {$A$};
\node (B)  at (2,0) {$B$};
\node (C)  at (1,-1) {$C$};
\node (E)  at (3,0) {$E$};
\node (F)  at (4,0) {$F$};
\node (G)  at (3,-1) {$G$};
\node (H)  at (-1,0) {$H$};
\node (I)  at (-2,0) {$I$};
\node (J)  at (-3,0) {$J$};
\node (K)  at (-2,-1) {$K$};
\node (L)  at (0,-1) {$L$};
\node (M)  at (-1,-1) {$M$};
\node (N)  at (-1,-2) {$N$};
\node (O)  at (0,-2) {$O$};

\draw[->, >=latex] (D) -- (A);
\draw[->, >=latex] (A) -- (B);
\draw[->, >=latex] (B) -- (E);
\draw[->, >=latex] (E) -- (F);
\draw[->, >=latex] (C) -- (A);
\draw[->, >=latex] (C) -- (B);
\draw[->, >=latex] (G) -- (E);
\draw[->, >=latex] (G) -- (F);
\draw[->, >=latex] (D) -- (H);
\draw[->, >=latex] (H) -- (I);
\draw[->, >=latex] (K) -- (I);
\draw[->, >=latex] (K) -- (J);
\draw[->, >=latex] (I) -- (J);
\draw[->, >=latex] (M) -- (L);
\draw[->, >=latex] (D) -- (L);
\draw[->, >=latex] (M) -- (N);
\draw[->, >=latex] (O) -- (N);
\draw[->, >=latex] (L) -- (O);
\draw[->, >=latex, bend left] (D) to (E);
\end{tikzpicture}
\end{center}

The path $\pi_1 =  D \to A \leftarrow C \to B $ is a DIP with respect to $\mathbb{L}_1 = \{D, A, B\}$ since $A$ is a neighbor of $D$, $A$ is a descendant of $D$, $B$ is a descendant of $A$, and $A$ is a collider on $\pi_1$.  
Similarly, the path $\pi_2 =  D \to A \leftarrow C \to B \to E \leftarrow G \to F $ is a DIP with respect to $\mathbb{L}_2 = \{D, A, E, F\}$ because $A$ and $E$ are neighbors of $D$, $A$ is a descendant of $D$, $E$ is a descendant of $A$, $F$ is a descendant of $E$, and both $A$ and $E$ are colliders on $\pi_2$. However, the path $\pi_3= D\to L\leftarrow M\to N\leftarrow O$ is not a DIP because it contains a collider $N \notin Ne(D,\mathcal G)$. Hence, $\pi_3$ can be blocked by conditioning on $\{L\}$, which is a subset of the neighbors of $D$, since the collider does not activate the path toward $O$ through $M$ and $N$, which remains naturally blocked.

$\pi_1$ and $\pi_2$ are also, naturally, classical inducing paths relative to the same subsets $\mathbb L_i$. In contrast, the path $\pi_4 = D \to H \to I \leftarrow K \to J$ is an inducing path relative to $\mathbb O = \{I\}$, but it is not a DIP since $I \notin Ne(D,\mathcal G)$.

Hence, in this example, we have 
\[
DINe(D, \mathcal{G}) = \{B, F\}.
\]

\subsection{Why not all descendant inducing neighbors are spurious neighbors?}

Theorem~\ref{prop:spurious} states that all spurious neighbors are descendant inducing neighbors, but it does not imply equality between these two sets. Indeed, the descendant inducing neighbors of a node $D$ cannot be $d$-separated from $D$ by conditioning on $D$'s neighbors in the DAG $\mathcal{G}$. The subtlety arises from the fact that, during a PC-style skeleton construction, adjacency is assessed at iteration $s$ rather than only considering the true neighbors in $\mathcal{G}$. Consequently, it is possible to remove a spurious neighbor using another node that has not yet been eliminated but is not a true neighbor, making the presence of spurious edges dependent on the size of the conditioning sets used to $d$-separate nodes from the target. The following example illustrates this clearly:

\begin{center}
\begin{minipage}{0.23\textwidth}
\centering
\begin{tikzpicture}[{black, circle, draw, inner sep=0}]
\tikzset{nodes={draw,rounded corners, minimum height=0.6cm, minimum width=0.6cm}}
\node (D)  at (0,0) [fill=gray!30] {$D$};
\node (A)  at (0,-1) {$A$};
\node (B) at (1,-1) {$B$};
\node (C) at (1,0) {$C$};
\node (E) at (0,1) {$E$};
\node (F) at (1,1) {$F$};
\draw[->, >=latex] (D) -- (A);
\draw[->, >=latex] (A) -- (B);
\draw[->, >=latex] (C) -- (B);
\draw[->, >=latex] (C) -- (A);
\draw[->, >=latex] (F) -- (C);
\draw[->, >=latex] (D) -- (F);
\draw[->, >=latex] (E) -- (D);
\draw[->, >=latex] (E) -- (C);
\end{tikzpicture}

$\mathcal G$
\end{minipage}
\hfill
\begin{minipage}{0.23\textwidth}
\centering
\begin{tikzpicture}[{black, circle, draw, inner sep=0}]
\tikzset{nodes={draw,rounded corners, minimum height=0.6cm, minimum width=0.6cm}}
\node (D)  at (0,0) [fill=gray!30] {$D$};
\node (A)  at (0,-1) {$A$};
\node (B) at (1,-1) {$B$};
\node (C) at (1,0) {$C$};
\node (E) at (0,1) {$E$};
\node (F) at (1,1) {$F$};
\draw[-, >=latex] (D) -- (A);
\draw[-, >=latex] (D) -- (B);
\draw[-, >=latex] (D) -- (C);
\draw[-, >=latex] (D) -- (E);
\draw[-, >=latex] (D) -- (F);
\end{tikzpicture}

$s=0$, $s=1$
\end{minipage}
\hfill
\begin{minipage}{0.23\textwidth}
\centering
\begin{tikzpicture}[{black, circle, draw, inner sep=0}]
\tikzset{nodes={draw,rounded corners, minimum height=0.6cm, minimum width=0.6cm}}
\node (D)  at (0,0) [fill=gray!30] {$D$};
\node (A)  at (0,-1) {$A$};
\node (B) at (1,-1) {$B$};
\node (C) at (1,0) {$C$};
\node (E) at (0,1) {$E$};
\node (F) at (1,1) {$F$};
\draw[-, >=latex] (D) -- (A);
\draw[-, >=latex] (D) -- (E);
\draw[-, >=latex] (D) -- (F);
\end{tikzpicture}

$s=2$
\end{minipage}
\end{center}

The difference here compared to Example 1 in the appendix is that $C$ is no longer unconditionally separated from $D$, but is separated conditionally on a set of size 2, $\{E,F\}$. Thus, at the iteration corresponding to conditioning sets of size $s=2$, one can $d$-separate $D$ and $B$ conditionally on $\{A,C\}$ because $C$ could not be separated by a smaller set. Therefore, even though $B$ is a descendant inducing neighbor of $D$, the interplay of conditioning set sizes may result in no spurious edge being present, and the edge between $D$ and $B$ is correctly removed.

\subsection{An example of a LEG containing spurious neighbors}

Figure~\ref{fig:example_with_spurious} provides a complementary example to Figure~\ref{fig:dag_two_legs}, which contained no spurious neighbors for simplicity. Here, we construct the LEGs centered on $Y$ for hop 0, 1, and 2 from the DAG $\mathcal G$. In $\mathcal L^{Y,0}$, the red edge illustrates the spurious neighbor $Z_4$. Indeed, in a PC-style local procedure centered on $Y$, the edge $Z_3-Y$ would be pruned at the first iteration ($s=0$), preventing any future conditioning on $Z_3$. However, separating $Y$ and $Z_4$ requires conditioning on $\{Z_3, Z_1\}$, making $Z_4$ a spurious neighbor of $Y$.  

In $\mathcal L^{Y,1}$, this spurious edge disappears, as stated in item~1 of Theorem~\ref{theorem:leg}, since $Z_4$ is not in the 1-neighborhood of $Y$. No spurious neighbors appear for hop $h=1$, which is consistent with the absence of DIPs between nodes within and outside the 1-neighborhood. Finally, in $\mathcal L^{Y,2}$, the red edge between $Z_2$ and its spurious neighbor $Z_6$ arises because a local PC-style algorithm would prune $Z_3$ from possible conditioning sets for $Z_2$ at $s=0$, although $Z_3$ is required to separate $Z_2$ and $Z_6$.  

In this last LEG, all adjacencies of $Y$ are oriented, and the CDE is therefore identifiable from hop $h=2$. This example also illustrates that spurious edges occur only between the current neighborhood and the outside, and they are progressively pruned as the hop increases.

\begin{figure}[t]
    \centering
        \begin{minipage}{.24\textwidth}
        \centering
    \begin{subfigure}{}{$\mathcal{G}$.}
        \centering
        
        \begin{tikzpicture}[{black, circle, draw, inner sep=0}]
            \tikzset{nodes={draw,rounded corners, minimum height=0.6cm, minimum width=0.6cm}}

            \node (X)  at (0,0)  {$X$};
            \node (Z2) at (0,-.9) {$Z_2$};
            \node (Z3) at (.9,-.9) {$Z_3$};
            \node (Z4) at (1.8,-.9) {$Z_4$};
            \node (Z5) at (0,-1.8) {$Z_5$};
            \node (Z6) at (.9,-1.8) {$Z_6$};
            \node (Z7) at (1.8,-1.8) {$Z_7$};
            \node (Y)  at (.9,0) [fill=red!30] {$Y$};
            \node (Z1) at (1.8,0) {$Z_1$};

            \draw[->, >=latex] (X) -- (Y);
            \draw[->, >=latex] (Y) -- (Z1);
            \draw[->, >=latex] (Z2) -- (X);
            \draw[->, >=latex] (Z3) -- (X);
            \draw[->, >=latex] (Z2) -- (Z5);
            \draw[->, >=latex] (Z1) -- (Z4);
            \draw[->, >=latex] (Z3) -- (Z1);
            \draw[->, >=latex] (Z3) -- (Z4);
            \draw[->, >=latex] (Z3) -- (Z6);
            \draw[->, >=latex] (Z3) -- (Z5);
            \draw[->, >=latex] (Z7) -- (Z4);
            \draw[->, >=latex] (Z5) -- (Z6);
        \end{tikzpicture}
        \label{fig:example_with_spurious:dag}
    \end{subfigure}
        \end{minipage}
\hfill
        \begin{minipage}{.24\textwidth}
        \centering
        \begin{subfigure}{}{$\mathcal{L}^{Y,0}$.}
        \centering
        
        \begin{tikzpicture}[{black, circle, draw, inner sep=0}]
            \tikzset{nodes={draw,rounded corners, minimum height=0.6cm, minimum width=0.6cm}}

            \node (X)  at (0,0)  {$X$};
            \node (Z2) at (0,-.9) {$Z_2$};
            \node (Z3) at (.9,-.9) {$Z_3$};
            \node (Z4) at (1.8,-.9) {$Z_4$};
            \node (Z5) at (0,-1.8) {$Z_5$};
            \node (Z6) at (.9,-1.8) {$Z_6$};
            \node (Z7) at (1.8,-1.8) {$Z_7$};
            \node (Y)  at (.9,0) [fill=red!30] {$Y$};
            \node (Z1) at (1.8,0) {$Z_1$};

            \draw[-, >=latex] (X) -- (Y);
            \draw[-, >=latex] (Y) -- (Z1);
            \draw[-,red, >=latex] (Y) -- (Z4);
        \end{tikzpicture}
        \label{fig:example_with_spurious:leg1}
    \end{subfigure}
        \end{minipage}
\hfill
        \begin{minipage}{.24\textwidth}
        \centering
    \begin{subfigure}{}{$\mathcal{L}^{Y,1}$.}
        \centering
        
        \begin{tikzpicture}[{black, circle, draw, inner sep=0}]
            \tikzset{nodes={draw,rounded corners, minimum height=0.6cm, minimum width=0.6cm}}

            \node (X) [fill=gray!30] at (0,0)  {$X$};
            \node (Z2) at (0,-.9) {$Z_2$};
            \node (Z3) at (.9,-.9) {$Z_3$};
            \node (Z4) at (1.8,-.9) {$Z_4$};
            \node (Z5) at (0,-1.8) {$Z_5$};
            \node (Z6) at (.9,-1.8) {$Z_6$};
            \node (Z7) at (1.8,-1.8) {$Z_7$};
            \node (Y)  at (.9,0) [fill=red!30] {$Y$};
            \node (Z1) [fill=gray!30] at (1.8,0) {$Z_1$};

            \draw[-, >=latex] (X) -- (Y);
            \draw[-, >=latex] (Y) -- (Z1);
            \draw[-, >=latex] (Z2) -- (X);
            \draw[-, >=latex] (Z3) -- (X);
            \draw[-||_||, >=latex] (Z1) -- (Z4);
            \draw[-, >=latex] (Z3) -- (Z1);
        \end{tikzpicture}
        \label{fig:example_with_spurious:leg2}
    \end{subfigure}
        \end{minipage}
    \hfill
    \begin{minipage}{.24\textwidth}
        \centering
    \begin{subfigure}{}{$\mathcal{L}^{Y,2}$.}
        \centering
        
        \begin{tikzpicture}[{black, circle, draw, inner sep=0}]
            \tikzset{nodes={draw,rounded corners, minimum height=0.6cm, minimum width=0.6cm}}

            \node (X) [fill = gray!30]  at (0,0)  {$X$};
            \node (Z2)[fill = gray!30] at (0,-.9) {$Z_2$};
            \node (Z3) [fill = gray!30]at (.9,-.9) {$Z_3$};
            \node (Z4) [fill = gray!30] at (1.8,-.9) {$Z_4$};
            \node (Z5) at (0,-1.8) {$Z_5$};
            \node (Z6) at (.9,-1.8) {$Z_6$};
            \node (Z7) at (1.8,-1.8) {$Z_7$};
            \node (Y)  at (.9,0) [fill=red!30] {$Y$};
            \node (Z1) [fill = gray!30] at (1.8,0) {$Z_1$};

            \draw[->, >=latex] (X) -- (Y);
            \draw[->, >=latex] (Y) -- (Z1);
            \draw[->, >=latex] (Z2) -- (X);
            \draw[->, >=latex] (Z3) -- (X);
            \draw[-, >=latex] (Z2) -- (Z5);
            \draw[->, >=latex] (Z1) -- (Z4);
            \draw[->, >=latex] (Z3) -- (Z1);
            \draw[-, >=latex] (Z3) -- (Z4);
            \draw[-||_||, >=latex] (Z3) -- (Z6);
            \draw[-, >=latex] (Z3) -- (Z5);
             \draw[-,red, >=latex] (Z2) -- (Z6);
            \draw[-, >=latex] (Z7) -- (Z4);
        \end{tikzpicture}
        \label{fig:example_with_spurious:leg3}
    \end{subfigure}
    \end{minipage}
   \caption{A DAG $\mathcal{G}$ and the LEGs $\mathcal{L}^{Y,0}$, $\mathcal{L}^{Y,1}$, and $\mathcal{L}^{Y,2}$ around node $Y$. Red: outcome/target $Y$; blue: treatment $X$; grey: $h$-neighborhood nodes; red arrow: direct effect ($M$: mediator).}
    \label{fig:example_with_spurious}
\end{figure}

\section{Pseudo-codes and background knowledge}
\label{appendix:pseudocode}

\begin{algorithm}[t!]
\caption{\textbf{LocPC} \quad \textcolor{blue}{(LocPC-BK)}}
\label{algo:LocPC}
\begin{algorithmic}[1]
    \REQUIRE Variables $\mathbb{V}$, target node $Y$, hop $h$, known sepsets $\mathcal{S}$, \textcolor{blue}{Forbidden descendants $\overline {\mathcal D}$}
    \STATE $\widehat{\mathcal{L}} \leftarrow$ fully disconnected graph on $\mathbb{V}$
    \STATE $\mathbb{D} = \{Y\}, \quad \text{visited} = \{Y\}, \quad k = 0$
    
    \WHILE{$k \leq h$}
        \FOR{$D \in \mathbb{D}$}
            \FOR{$B \in \mathbb{V} \setminus \{D\}$ s.t. $\mathcal S(B,D) = \texttt{None}$}
                \STATE add edge $D-B$ to $\widehat{\mathcal{L}}$ 
            \ENDFOR
        \ENDFOR

        \STATE $s = 0,\quad $ 
        \WHILE{$\exists D\in \mathbb D$ such that $|adj(D)|-1 \ge s$}
        \STATE \textbf{Store} $adj(D) \gets Ne(D, \widehat{\mathcal{L}}), \; \forall D \in \mathbb{D}$

            \FOR{$D \in \mathbb{D}$}
                \STATE $\text{visited} = \text{visited} \cup \{D\}$
                \FOR{$B \in Ne(D, \widehat{\mathcal{L}})$ \textcolor{blue}{\textbf{and} $\neg [(B\in \text{visited})\land (D\in \overline{\mathcal D}(B))]$}}
                    \IF{$|adj(D) \setminus \{B\}| \ge s$ }
                        \FORALL{$\mathbb{S} \subseteq adj(D) \setminus \{B\}$ \textbf{with} $|\mathbb{S}| = s$}
                            \IF{$D \indep B \mid \mathbb{S}$}
                                \STATE Update $\mathcal{S}$ with $\mathcal S(D, B) = \mathbb{S}$
                                \STATE remove edge $D-B$ from $\widehat{\mathcal{L}}$; \textbf{break}
                            \ENDIF
                        \ENDFOR
                    \ENDIF
                \ENDFOR

            \ENDFOR
            \STATE $ s = s + 1$
        \ENDWHILE

        \STATE $\mathbb{D}_{\text{new}} = \{ Ne(D, \widehat{\mathcal{L}})|D\in \mathbb D\},\quad \mathbb{D} = \mathbb{D}_{new}, \quad k = k + 1$
    \ENDWHILE

     \FOR{each unshielded triple $A\!-\!B\!-\!C\in NeHood(Y,h,\widehat{\mathcal L})$ with $B \notin \mathcal S(A,C)$} 
     \STATE orient $A \rightarrow B \leftarrow C$
    \ENDFOR

    \STATE \textbf{Apply} Meek rules repeatedly on $\widehat{\mathcal{L}}$

     \STATE \textbf{Apply} CI-NNC rule (Def.~\ref{def:local_rules}) repeatedly on $\widehat{\mathcal{L}}$


    \RETURN $\widehat{\mathcal{L}}, \mathcal{S}, \text{visited}$
\end{algorithmic}
\end{algorithm}

\subsection{LocPC}

Algorithm~\ref{algo:LocPC} describes the \textbf{LocPC} procedure for learning a local essential graph (LEG) in the $h$-hop neighborhood of a target node $Y$. A specific form of background knowledge (BK), distinct from the usual one involving forbidden or mandatory edges and orientations, can also be incorporated into \textbf{LocPC}  (as described in the paragraph following the algorithm explanation); it corresponds to the portions of the pseudocode highlighted in blue.

LocPC begins by initializing the estimated LEG $\widehat{\mathcal{L}}$ as a fully disconnected graph over $\mathbf{V}$ (line~1), setting the exploration frontier $\mathbf{D} = \{Y\}$, the visited list to $\{Y\}$, and the hop counter $k=0$ (line~2). The first phase (\emph{local skeleton discovery}, lines~3--19) expands $\mathbf{D}$ while $k \le h$. At each hop, $\mathbf{D}_{\text{new}}$ is reset, and all unexplored nodes are temporarily connected to $\mathbf{D}$ (lines~4--6). 
CI tests are then performed between each $D \in \mathbf{D}$ and its neighbors, using increasing conditioning set sizes $s$ (line~7). 
If $D \indep B \mid \mathbf{S}$, the edge $D-B$ is removed and $\mathbf{S}$ recorded (lines~16--17). 
New neighbors are added to $\mathbf{D}_{\text{new}}$ for the next hop, and $k$ is incremented (line~18--19). In the second phase (\emph{orientation}, lines~20--24), unshielded colliders $A \to B \leftarrow C$ are oriented if $B \notin \text{Sepset}(A,C)$ (lines~20--21). The three Meek rules are then applied locally (line~22), followed by the NNC rule (lines~23). The algorithm returns the estimated LEG $\widehat{\mathcal{L}}$, the separating sets $\mathcal{S}$, and the visited nodes, which can be reused in downstream procedures such as \textbf{LocPC-CDE}. Parts highlighted in blue refers to the incorporation of background knowledge.

\begin{algorithm}[t!]
    \caption{\textbf{LocPC-CDE} $\quad$ \textcolor{blue}{(LocPC-CDE-BK : replace \textbf{LocPC} by \textbf{LocPC-BK})}}
    \label{algo:LocPC-CDE}
    \begin{algorithmic}[1]
        \REQUIRE Variables $\mathbb V$, treatment $X$, outcome $Y$, known sepsets $\mathcal S_0$
        \STATE $\widehat{\mathcal{L}}, \mathcal S,\text{visited}= \text{\textbf{LocPC}}(\mathbb{V},Y,h=0,\mathcal S_0)$ 
        \STATE $\mathbb D= [Y]$
        \STATE $ h= 1$
        \WHILE{$X\in Ne(Y,\widehat{\mathcal{L})}$ and $Y\notin Pa(X,\widehat{\mathcal{L}})$ and $Ne_{\mathrm{nar}}(Y,\widehat{\mathcal{L}}) \ne \emptyset$ and $\text{visited}\ne \mathbb V$}

        \STATE $\widehat{\mathcal{L}},\mathcal S,\text{visited}= \text{\textbf{LocPC}}(\mathbb V,Y,h,\mathcal S)$


        \FOR{$D\in \mathbb D$}
            \STATE $\mathbb D= \mathbb D\cup \{Ne_{\mathrm{nar}}(D,\widehat{\mathcal{L}})\cap NeHood(Y,h,\widehat{\mathcal L})\}$
        \ENDFOR
        \IF{$\mathbb D$ satisfies the non-orientability criterion (Def.~\ref{def:orientability})}
        \STATE \textbf{break}
        \ENDIF
        \STATE $h= h+1$
        \ENDWHILE
    \IF{$X\in Ne(Y,\widehat{\mathcal{L}})$ and $Y\notin Pa(X,\widehat{\mathcal{L}})$ and $Ne_{\mathrm{nar}}(Y,\widehat{\mathcal{L}}) \ne \emptyset$}
        \STATE $\text{identifiable}= False$
        \ELSE 
        \STATE $\text{identifiable}= True$
        \ENDIF
    
    \RETURN $\text{identifiable}, \widehat{\mathcal{L}}$
    \end{algorithmic}
\end{algorithm}

\subsection{LocPC-CDE}

First, we define the set of non-arrow neighbors of a node $Y$ in a graph $\mathcal{G}_0 = (\mathbb {V}_0, \mathbb{E}_0)$ as
\[
Ne_{\mathrm{nar}}(Y, \mathcal {G}_0) = Ne(Y, \mathcal {G}_0) \setminus \{V \in \mathbb{V}_0 \mid (Y \to V) \in \mathbb{E}_0 \text{ or } (Y \leftarrow V) \in \mathbb{E}_0 \}.
\]

Algorithm~\ref{algo:LocPC-CDE} uses \textbf{LocPC} to iteratively explore the neighborhood of $Y$ and determine CDE identifiability. 
It starts by discovering the 0-hop LEG around $Y$ (line~1), storing separating sets and visited nodes, with the visited set initialized to $\{Y\}$. 
The candidate set $\mathbf{D}$ for testing non-orientability is also initialized to $\{Y\}$, and the hop counter $h$ is set to 1 (line~3). Exploration continues while there exists a node $X$ adjacent to $Y$ that is not a child of $Y$, unoriented edges remain incident to $Y$, and not all nodes have been discovered (line~4). 
At each iteration, the $h$-hop LEG is computed, updating previously stored separating sets $\mathcal{S}$ and visited nodes (line~5). 
Nodes with unoriented edges connected to $\mathbf{D}$ within the $h$-hop neighborhood are added to $\mathbf{D}$ (lines~6–7), ensuring $\mathbf{D}$ always contains $Y$ and candidates for the non-orientability criterion. 
The algorithm checks if $\mathbf{D}$ satisfies the non-orientability criterion (line~8); if so, it stops (line~9). 
Otherwise, $h$ is incremented and exploration continues (line~10). Upon termination, two outcomes are possible: 
(1) if $X$ remains adjacent to $Y$ with unoriented edges, $CDE(x,x',Y)$ is not identifiable (lines~11–12); 
(2) otherwise, the CDE is identifiable (lines~13–14). The algorithm returns identifiability and the LEG; estimation can be added when the CDE is identifiable, e.g., via regression on the estimated parents of $Y$ in a linear setting. Parts highlighted in blue refers to the incorporation of background knowledge.

\subsection{Incorporating background knowledge}

The background knowledge can be incorporated into \textbf{LocPC}  to improve computational efficiency. Specifically, edges and orientations can be specified a priori in the form of forbidden/mandatory edges and forbidden orientations. These are handled as in any constraint-based causal discovery algorithm that integrates background knowledge: the forced edges and orientations are added beforehand and take precedence over those inferred by the algorithm. We therefore do not elaborate further on this standard mechanism.
The more interesting aspect of background knowledge lies in leveraging our results on spurious neighbors and descendant inducing paths by introducing a forbidden descendant set, denoted $\overline{\mathcal D}$. For each node $B$, $D \in \overline{\mathcal D}(B)$ if the user enforces that $D$ is a non-descendant of $B$. Since any spurious edge added during the procedure must necessarily point toward a descendant (Theorem~\ref{prop:spurious}), if node $B$ has already been visited and did not prune the edge between $B$ and $D$, then—because $D$ is a non-descendant of $B$—this edge cannot be spurious. Consequently, it is unnecessary to retest the edge between $B$ and $D$ when the algorithm later visits $D$, thereby avoiding redundant independence tests. Moreover, the edge is automatically oriented as $D \to B$ to ensure consistency with the background knowledge.
This specific modification is highlighted in blue in the following pseudocodes.

\section{Experiments}
\label{appendix:experiments}

The complete, reproducible code used for the experiments is available at: \url{github.com/CIPHOD/pyCIPHOD/tree/main/reproducibility/clear2026}.
We used our implementations of \textbf{LocPC-CDE} and PC~\citep{Spirtes_2000} algorithms, available at: \url{github.com/CIPHOD/pyCIPHOD}.
We use the publicly available implementation of LDECC algorithm from~\citep{Gupta_2023}, accessible at \url{github.com/acmi-lab/local-causal-discovery}, and the implementation of the CMB algorithm~\citep{Gao_2015} and MBbyMB algorithm~\citep{wang2014discovering} available at \url{github.com/wt-hu/pyCausalFS}. Some algorithms were minimally adapted to produce the same evaluation metrics as \textbf{LocPC-CDE}.

\subsection{Synthetic Data}

We detail here the procedure used to generate the graphs and simulate the data, as well as how the evaluation metrics are computed. 

\paragraph{DAGs generation}

All random graph models considered are homogeneous Erdős–Rényi models with edge existence probability \( p = \frac{2}{|\mathbb{V}| - 1} \). This ensures constant sparsity as the number of variables varies (on average, each node in the graph is adjacent to 2 edges).

For the \textbf{Identifiable CDE Case}, for each number of variables $|\mathbb{V}|$, we generate a DAG as follows:

\begin{enumerate}
    \item Generate an undirected Erdős–Rényi graph, where each edge exists independently with probability $p$. Then, sample a random permutation $\sigma$ of $\{1, \cdots, |\mathbb{V}|\}$ to define a topological (causal) order. For each undirected edge $i-j$, if $\sigma(i) < \sigma(j)$, orient the edge as $i \to j$. This results in a DAG whose sparsity is controlled by the edge probability $p$.

    \item Convert the resulting DAG into its essential graph. Search for a pair of variables $(X, Y)$ such that (i) $X \to Y$ is in the DAG (to ensure the existence of a direct effect) and (ii) all adjacents of $Y$ are oriented in the essential graph. This guarantees identifiability of  $CDE(x,x',y)$, according to Theorem 5.4 of~\citep{Flanagan_2020}.

    \item If no such pair $(X,Y)$ exists, generate a new random graph and repeat until the condition is met. The final graph thus guarantees that the direct effect from $X$ to $Y$ is identifiable.
\end{enumerate}



For the \textbf{Non-Identifiable CDE} case, the procedure is similar, with a modified condition to ensure non-identifiability:

\begin{enumerate}
    \item Generate an undirected Erdős–Rényi graph and orient it according to a random topological order $\sigma$, as described above.

    \item Convert the DAG to its essential graph and look for a pair of variables $(X,Y)$ such that (i) $X \to Y$ is present in the DAG, and (ii) at least one adjacent edge to $Y$ remains unoriented in the essential graph. This guarantees that $CDE(x,x',y)$ is \textit{not} identifiable, according to Theorem 5.4 of~\citep{Flanagan_2020}.

    \item If no such pair $(X,Y)$ exists, repeat the process until one is found.
\end{enumerate}

The data is then simulated using the linear/non-linear SCM procedure described below.

\paragraph{Data simulation}

Let $\mathcal{G}$ denote the causal structure. 

A \textbf{linear Gaussian} SCM can be expressed, 
can be written in matrix form as: 
    $\mathbb{V} = B \mathbb{V} + \bm{\xi},$
 where $B$ is a coefficient matrix that can be permuted to lower-triangular form (due to the causal ordering) and $\bm{\xi} \sim \mathcal{N}(0, \bm{\Sigma})$ is a Gaussian noise vector of dimension $|\mathbb{V}|$. The solution is then given by: $\mathbb{V} = (I - B)^{-1} \bm{\xi}.$

The simulation procedure is as follows:
\begin{enumerate}
    \item Generate a random lower-triangular coefficient matrix $B$, with non-zero entries sampled uniformly from $\{ x \in [-1, 1] : |x| > 0.2 \}$;
    \item Sample $5000$ independent noise vectors $\bm{\xi}$, with each component $\xi_j \sim \mathcal{N}(0, \sigma_j^2)$ and $\sigma_j^2 \sim \mathcal{U}[0.8, 1]$;
    \item For each noise vector, compute $\mathbb{V} = (I - B)^{-1} \bm{\xi}$, resulting in $5000$ independent observations from the linear SCM.
\end{enumerate}

For the \textbf{nonlinear} case, we simulate binary variables to model categorical data commonly encountered in practice. Let $\mathcal{G}$ be the causal DAG. For each variable $V_i$, $i=1,\ldots,|\mathbb{V}|$, the binary variable is generated as:  $V_i = \mathbb{I}_{\xi_i \leq p_i},$
where $\mathbb{I}$ is the indicator function, $\xi_i \sim \mathcal{U}([0,1])$ is a uniform random variable, and  
    $p_i = \left({1 + \exp\left(-\sum_{V_j \in Pa(V_i, \mathcal{G})} a_{j,i} V_j\right)}\right)^{-1}.$
The coefficients $a_{j,i}$ are sampled uniformly from $\{ x \in [-5,5] : |x| > 0.2 \}$. Then, $5000$ independent observations are generated by first sampling vectors $\xi_i \sim \mathcal{U}([0,1])$ of size $5000$ and simulating the variables $V_i$ following the causal ordering.  


\paragraph{Estimation and Evaluation Metrics}

For each method and each graph, we apply the causal discovery algorithm, which outputs either a fully or partially oriented causal graph. We evaluate the output based on the following criteria: \textbf{(1) Identifiability detection:} Whether the method correctly determines if all adjacents of $Y$ are oriented (in the identifiable case) or not (in the non-identifiable case); \textbf{(2) Parent recovery (identifiable case only):} If all adjacents of $Y$ are oriented, we compare the set of estimated parents of $Y$ to the true parents; \textbf{(3) Number of CI tests:} The number of CI tests performed by each method is recorded.

The proportion of correctly identified non-identifiable graphs, shown in Figure~\ref{fig:exp}, is computed as the ratio of graphs for which the method returns a non-identifiable output to the total number of graphs (100) for each value of $|\mathbb{V}|$.

The F1 score is computed to evaluate the accuracy of parent recovery in identifiable cases. Let $\widehat{Pa(Y)}$ be the set estimated by the method and $Pa(Y,\mathcal G)$ be the set of true parents. Then:
\begin{equation*}
    \text{Precision} = \frac{|\widehat{Pa(Y)} \cap Pa(Y,\mathcal G)|}{|\widehat{Pa(Y)}|}, \quad \text{Recall} = \frac{|\widehat{Pa(Y)} \cap Pa(Y,\mathcal G)|}{|Pa(Y,\mathcal G)|}, \quad F_1 = 2 \cdot \frac{\text{Precision} \cdot \text{Recall}}{\text{Precision} + \text{Recall}}.
\end{equation*}

\subsection{Real Data}

Causal discovery was performed on aggregated data from the French National Health Data System (SNDS), covering the incidence of various pathologies across 101 departments in 2023. The dataset used is available in the open data from Ameli, accessed February 2026, \url{data.ameli.fr/explore/dataset/effectifs/export/?refine.annee=2023&refine.cla_age_5=tsage&refine.sexe=9&refine.region=99&refine.niveau_prioritaire=1}. 

\end{document}